\newtheorem{proposition}{Proposition}
\newtheorem{lemma}{Lemma}
\newtheorem{remark}{Remark}
\newtheorem{example}{Example}
\providecommand{\eref}[1]{\eqref{#1}}  
\providecommand{\cref}[1]{Chapter~\ref{#1}}
\providecommand{\fref}[1]{Figure~\ref{#1}}
\providecommand{\R}{\ensuremath{\mathbb{R}}}
\providecommand{\E}{\ensuremath{\mathbb{E}}}
\providecommand{\abs}[1]{\lvert#1\rvert}
\providecommand{\norm}[1]{\left\lVert#1\right\rVert}
\providecommand{\bydef}{\overset{\text{def}}{=}}
\renewcommand{\vec}[1]{\ensuremath{\boldsymbol{#1}}}
\providecommand{\mat}[1]{\ensuremath{\boldsymbol{#1}}}
\providecommand{\calB}{\mathcal{B}}
\providecommand{\calD}{\mathcal{D}}
\providecommand{\calL}{\mathcal{L}}
\providecommand{\calM}{\mathcal{M}}
\providecommand{\calN}{\mathcal{N}}
\providecommand{\mA}{\mat{A}}
\providecommand{\mB}{\mat{B}}
\providecommand{\mI}{\mat{I}}
\providecommand{\mS}{\mat{S}}
\providecommand{\mU}{\mat{U}}
\providecommand{\mZ}{\mat{Z}}
\providecommand{\ve}{\vec{e}}
\providecommand{\vg}{\vec{g}}
\providecommand{\vu}{\vec{u}}
\providecommand{\vv}{\vec{v}}
\providecommand{\vw}{\vec{w}}
\providecommand{\vx}{\vec{x}}
\providecommand{\vy}{\vec{y}}
\providecommand{\vz}{\vec{z}}
\providecommand{\mSigma}{\mat{\Sigma}}
\providecommand{\veta}{\vec{\eta}}
\providecommand{\vztilde}{\boldsymbol{\widetilde{z}}}
\providecommand{\mSigmatilde}{\mat{\widetilde{\Sigma}}}
\providecommand{\vwtilde}{\boldsymbol{\widetilde{w}}}
\providecommand{\sigmahat}{\widehat{\sigma}}
\providecommand{\vzhat}{\boldsymbol{\widehat{z}}}
\providecommand{\mZhat}{\mat{\widehat{Z}}}
\providecommand{\vwhat}{\boldsymbol{\widehat{w}}}
\providecommand{\vone}{\vec{1}}
\newcommand{\subjectto}{\mathop{\mathrm{subject\, to}}}
\newcommand{\argmin}[1]{\mathop{\underset{#1}{\mbox{argmin}}}}
\newcommand{\minimize}[1]{\mathop{\underset{#1}{\mathrm{minimize}}}}
\newcommand{\MSE}{\mathrm{MSE}}
\begin{document}
\title{Optimal Combination of Image Denoisers}

\author{Joon~Hee~Choi,~\IEEEmembership{Member,~IEEE,}
        Omar~A.~Elgendy,~\IEEEmembership{Student Member,~IEEE,}
        \\ and~Stanley~H.~Chan,~\IEEEmembership{Senior Member,~IEEE}
        \thanks{The authors are with the School of Electrical and Computer Engineering, Purdue University, West Lafayette, IN 47907, USA. S. Chan is also with the Department of Statistics, Purdue University, West Lafayette, IN 47907, USA. Emails: \texttt{ \{choi240, oelgendy, stanchan\}@purdue.edu}. }
        \thanks{This work was supported, in part, by the U.S. National Science Foundation under Grant CCF-1718007 and CCF-1763896.}
    }

\markboth{IEEE Transactions on Image Processing, Vol. X, No. X, February 2019}%
{Choi \MakeLowercase{\textit{et al.}}: Optimal Combination of Image Denoisers}

\maketitle

\begin{abstract}
Given a set of image denoisers, each having a different denoising capability, is there a provably optimal way of combining these denoisers to produce an overall better result? An answer to this question is fundamental to designing an ensemble of weak estimators for complex scenes. In this paper, we present an optimal combination scheme by leveraging deep neural networks and convex optimization. The proposed framework, called the Consensus Neural Network (CsNet), introduces three new concepts in image denoising: (1) A provably optimal procedure to combine the denoised outputs via convex optimization; (2) A deep neural network to estimate the mean squared error (MSE) of denoised images without needing the ground truths; (3) An image boosting procedure using a deep neural network to improve contrast and to recover lost details of the combined images. Experimental results show that CsNet can consistently improve denoising performance for both deterministic and neural network denoisers.
\end{abstract}

\begin{IEEEkeywords}
Image denoising, optimal combination, convex optimization, deep learning, convolutional neural networks.
\end{IEEEkeywords}

\IEEEpeerreviewmaketitle

\section{Introduction}
\IEEEPARstart{W}{hile} image denoising algorithms over the past decade have produced very promising results, it is also safe to say that no single method is uniformly better than others. In fact, any image denoiser, either deterministic \cite{Buades2005,Dabov2007,Beck2009,Zoran2011,Chan2011,Gu2014,Chan2014,Chi2018} or learning-based \cite{Elad2006,Mairal2009,Roth2009,Burger2012,Xie2012,Dong2013_1,Dong2013_2,Xu2015,Mao2016,Remez2017,Zhang2017_tip,Lefkimmiatis2017,Zhang2017_arxiv,Xu_Zhang_Zhang_2018}, has an implicit prior model that determines its denoising characteristics. Since a particular prior model encapsulates the statistics of a limited set of imaging conditions, the corresponding denoiser is only an expert for the type of images it is designed to handle. We refer to this gap between the denoising model and the denoising task as a model mismatch.

\begin{figure}[t]
	\centering
	\begin{subfigure}[b]{0.15\textwidth}
		\centering
		\includegraphics[width=\textwidth]{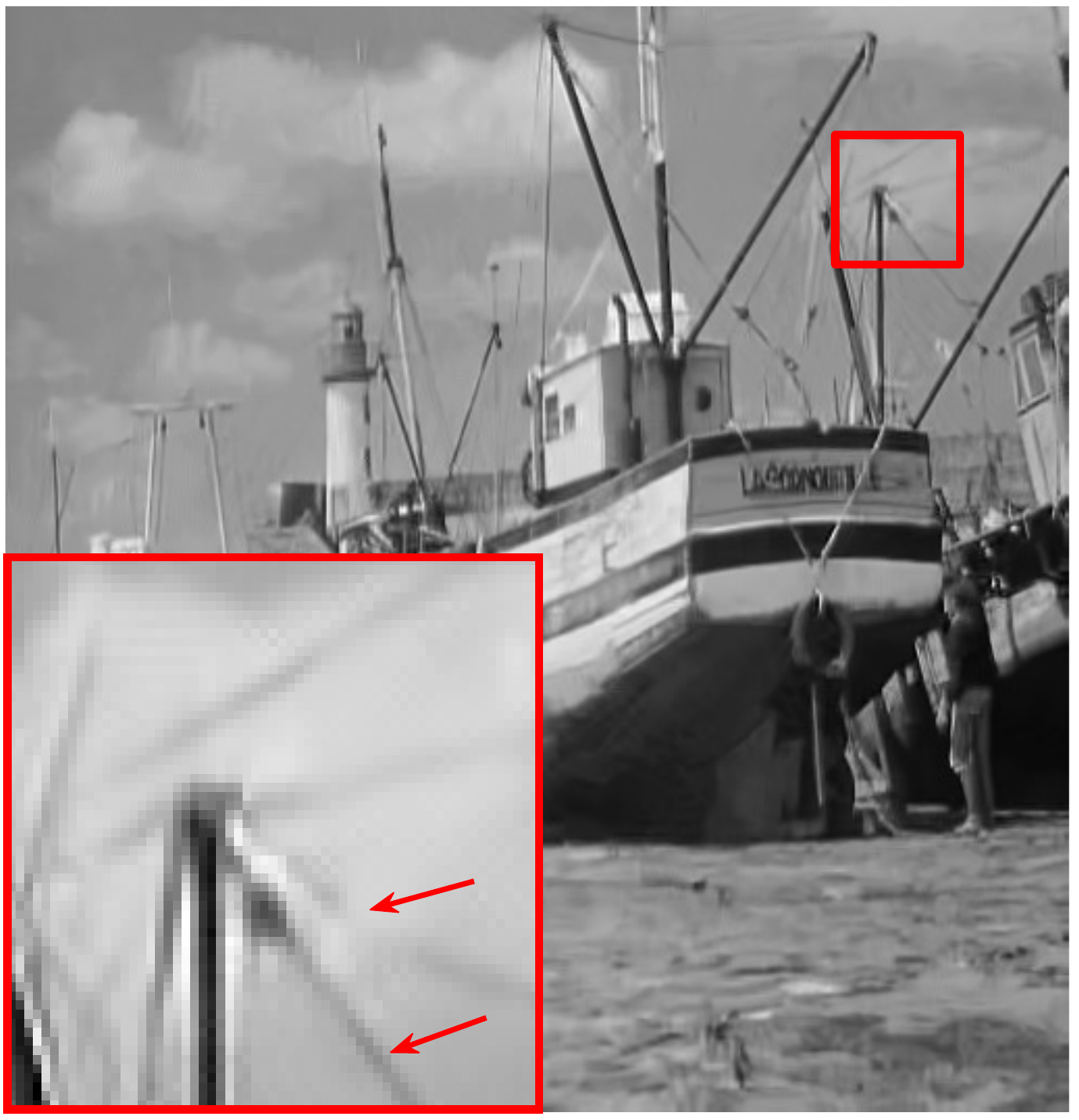}
		\caption{\scriptsize{BM3D, 30.85dB}}
	\end{subfigure}
	\begin{subfigure}[b]{0.15\textwidth}
		\centering
		\includegraphics[width=\textwidth]{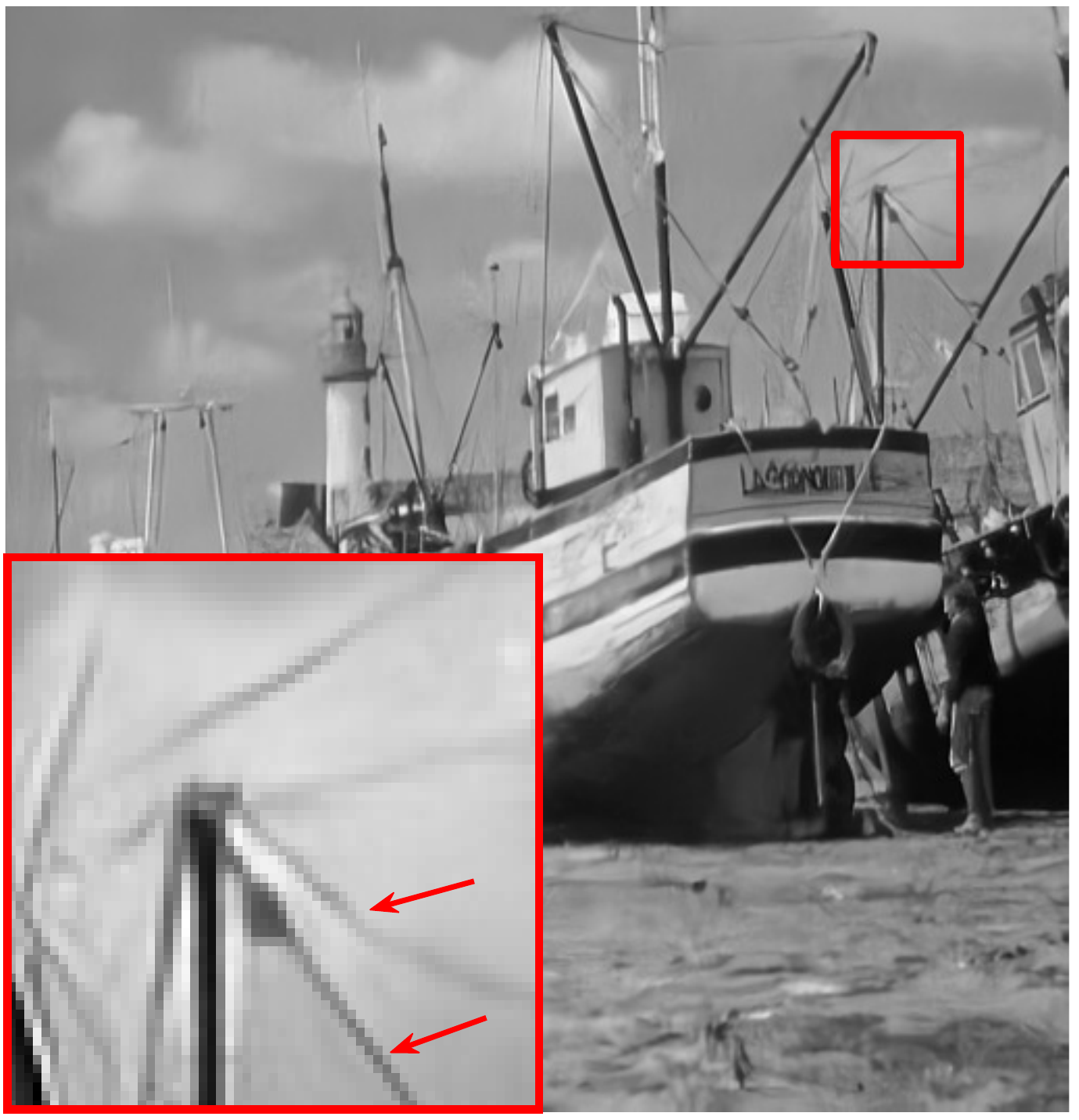}
		\caption{\scriptsize{DnCNN, 31.14dB}}
	\end{subfigure}
	\begin{subfigure}[b]{0.15\textwidth}
		\centering
		\includegraphics[width=\textwidth]{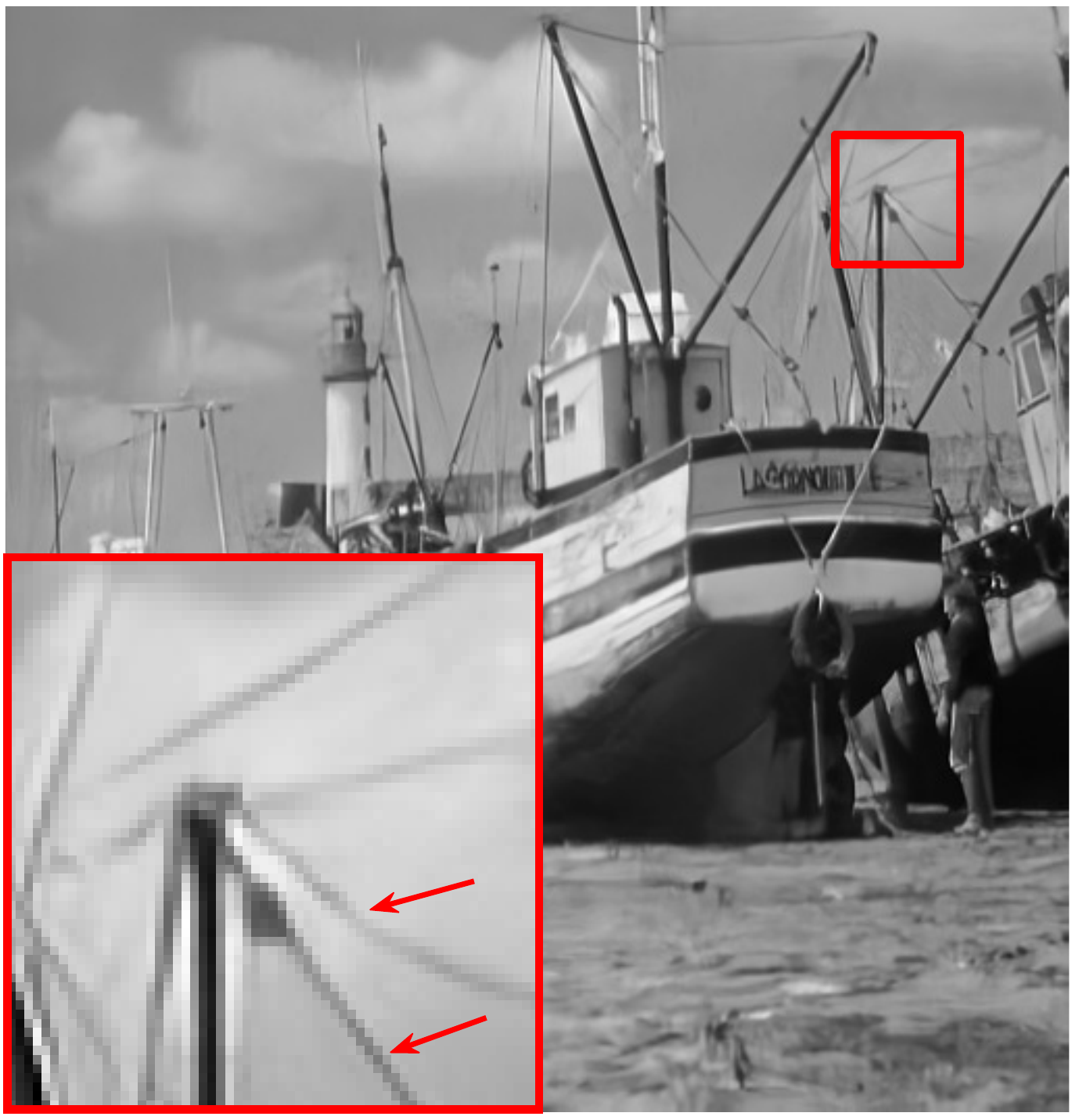}
		\caption{\scriptsize{Ours, 31.32dB}}
	\end{subfigure}
	
	\begin{subfigure}[b]{0.15\textwidth}
		\centering
		\includegraphics[width=\textwidth]{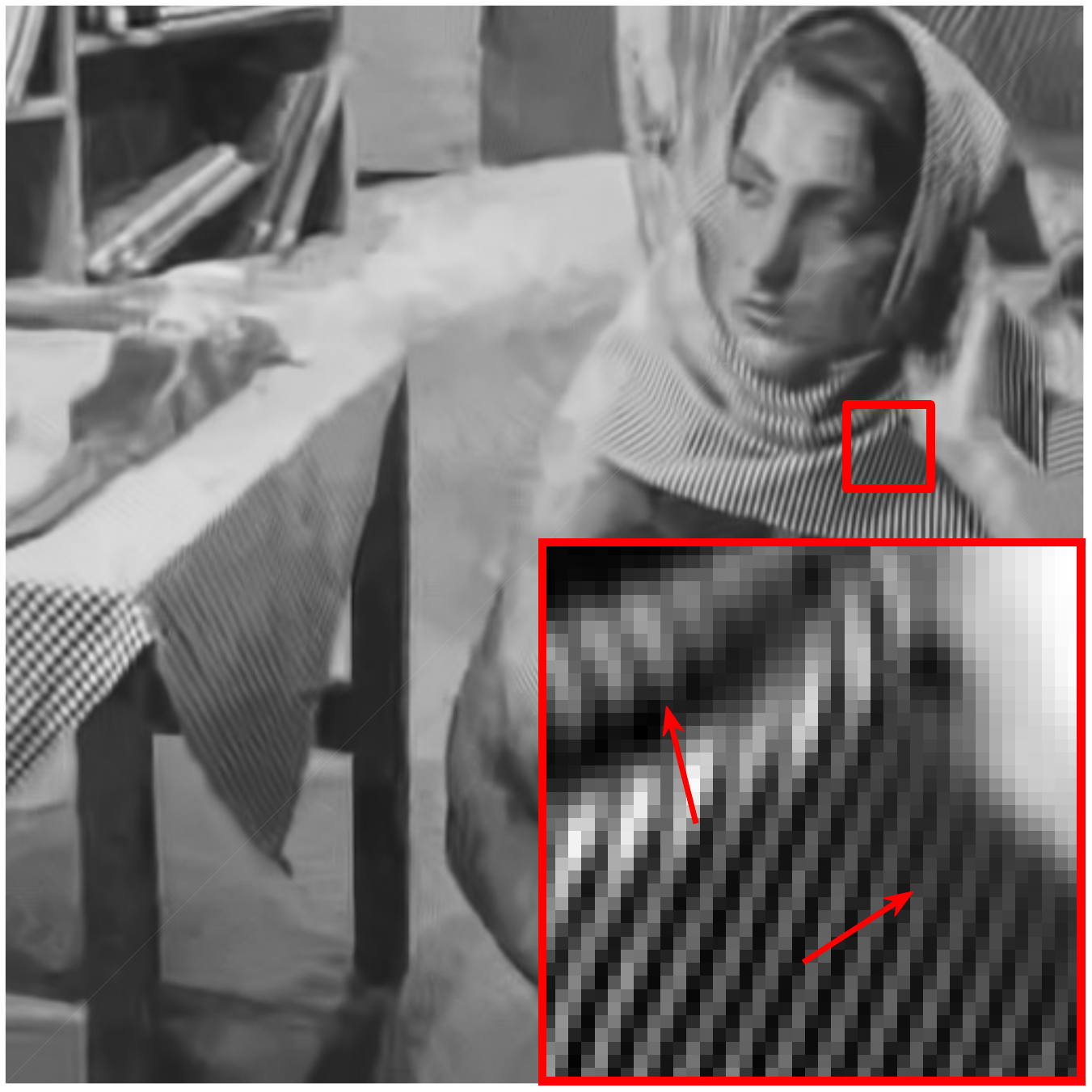}
		\caption{\scriptsize{BM3D, 26.80dB}}
	\end{subfigure}
	\begin{subfigure}[b]{0.15\textwidth}
		\centering
		\includegraphics[width=\textwidth]{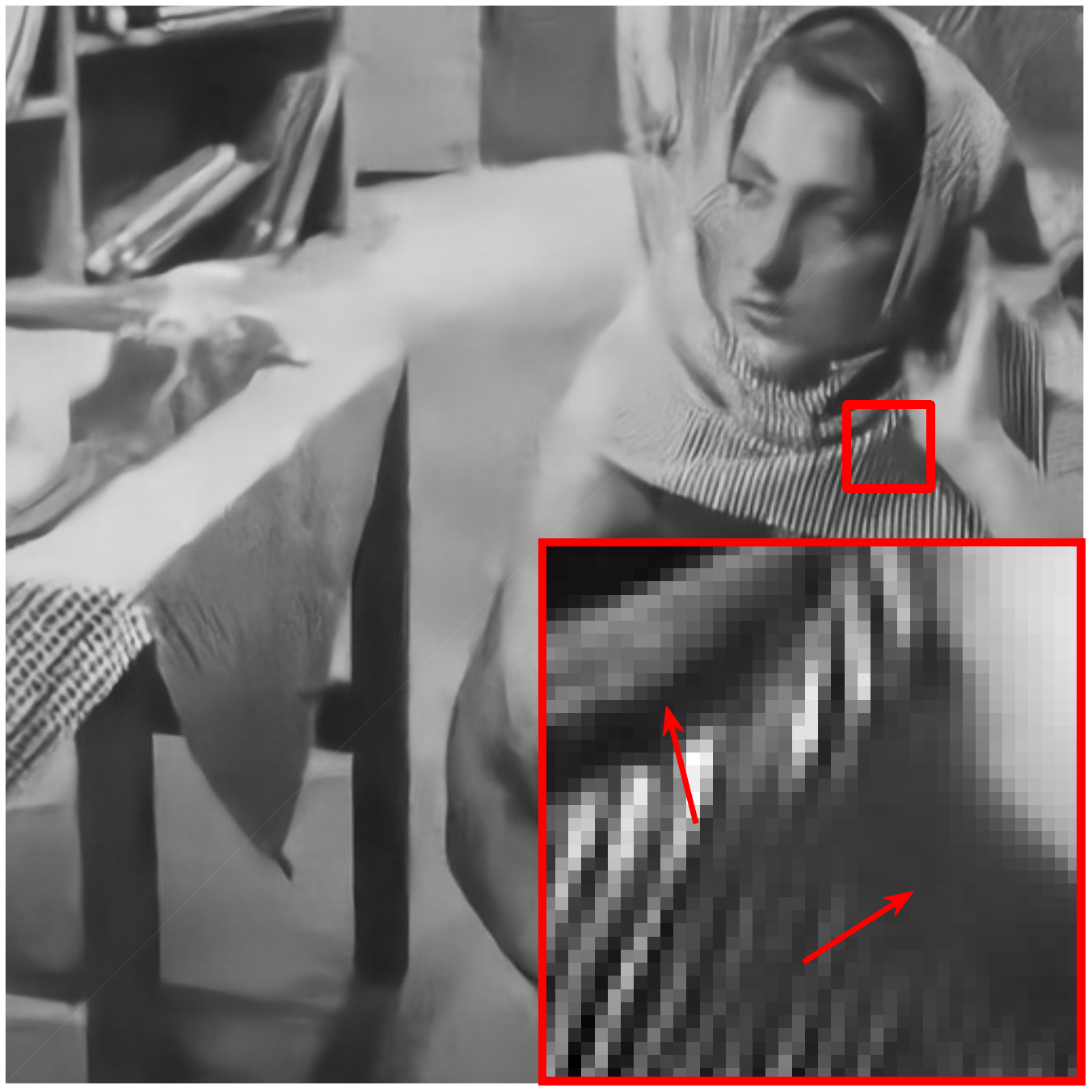}
		\caption{\scriptsize{DnCNN, 26.49dB}}
	\end{subfigure}
	\begin{subfigure}[b]{0.15\textwidth}
		\centering
		\includegraphics[width=\textwidth]{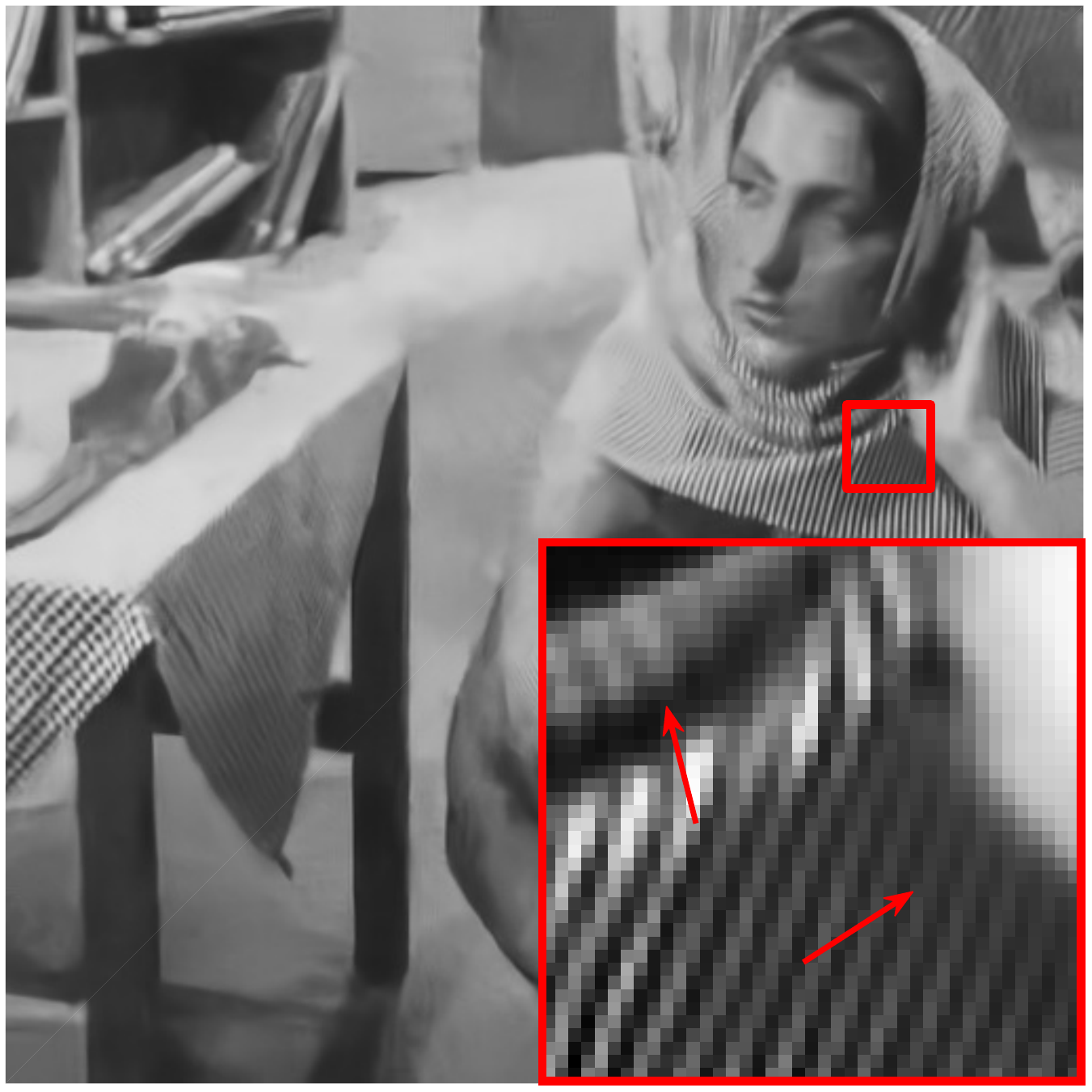}
		\caption{\scriptsize{Ours, 26.80dB}}
	\end{subfigure}
	\caption{Comparison of BM3D \cite{Dabov2007}, DnCNN \cite{Zhang2017_tip} and the proposed CsNet. The \texttt{boat} image is corrupted by noise of $\sigma = 20$, whereas \texttt{Barbara} is corrupted by noise of $\sigma = 40$. The denoising strength of the denoisers are adjusted to match the actual noise level. The results show that different denoisers are better for different types of images, e.g., BM3D is better for repeated pattern whereas DnCNN is better for generic content. The combination scheme proposed in this paper is able to leverage the better among the two.}
\vspace{-3ex}
	\label{fig:front page example}
	
\end{figure}

Model mismatch is common in practice. In this paper, we are particularly interested in the following three examples:

\begin{itemize}
\item Denoiser Characteristic: Every denoiser has a different characteristic. For example, BM3D \cite{Dabov2007} assumes patch reoccurrence, and thus it works well for images with repeated patterns. Neural network denoisers are trained on generic images, and thus they work well for those images. Figure~\ref{fig:front page example} shows an example of BM3D \cite{Dabov2007} and a neural network denoiser DnCNN \cite{Zhang2017_tip}. The \texttt{Boat512} image is corrupted by i.i.d. Gaussian noise of noise level $\sigma = 20$. In this example, DnCNN (trained at $\sigma = 20$) gives a PSNR of 31.14dB which is approximately 0.3dB higher than BM3D. The other image \texttt{Barbara512} is corrupted by a noise of level $\sigma = 40$. In this case, BM3D actually performs better than DnCNN (trained at $\sigma = 40$), yielding 26.80dB over the 26.49dB. If we look at the image content, we can see that \texttt{Barbara512} has a repeated pattern on the cloth which is more favorable to BM3D. This shows the influence of the implicit modelings of a denoiser to the performance.
\item Noise Level: For neural network image denoisers, the performance is strongly affected by the noise level under which the denoiser is trained. For example, if a denoiser is trained for i.i.d. Gaussian noise of standard deviation $\sigma$, it only works well for this particular $\sigma$. As soon as the noise level deviates, the performance will degrade. The same argument holds for deterministic denoisers such as BM3D, as its denoising strength must match the actual noise level. \fref{fig:dncnn} illustrates the behavior of DnCNN and BM3D as the denoising strength $\widehat{\sigma}$ deviates from the actual level $\sigma$. In this experiment, we use five denoising strengths $\widehat{\sigma} = \{10,20,30,40,50\}$ and a continuous range of $\sigma \in [10,50]$. As shown in the plot, BM3D has a slightly more robust performance, in the sense that a chosen denoising strength $\widehat{\sigma}$ can work for a reasonable wide range of actual noise levels $\sigma$. In contrast, DnCNN has a narrow performance regime for a fixed $\widehat{\sigma}$.
\item Image Class: A denoiser trained for a particular class of images (e.g., building) may not work for other classes (e.g., face). When this type of class-aware issue appears, the typical solution is by means of scene classification \cite{Remez2017}. However, scene classification itself is an open problem and there is no consensus of the best approach. Therefore, it would be more convenient if the denoiser can automatically pick a class that gives the best performance without seeking classification algorithms.
\end{itemize}

\begin{figure}[t]
	\centering
	\includegraphics[width=0.45\textwidth]{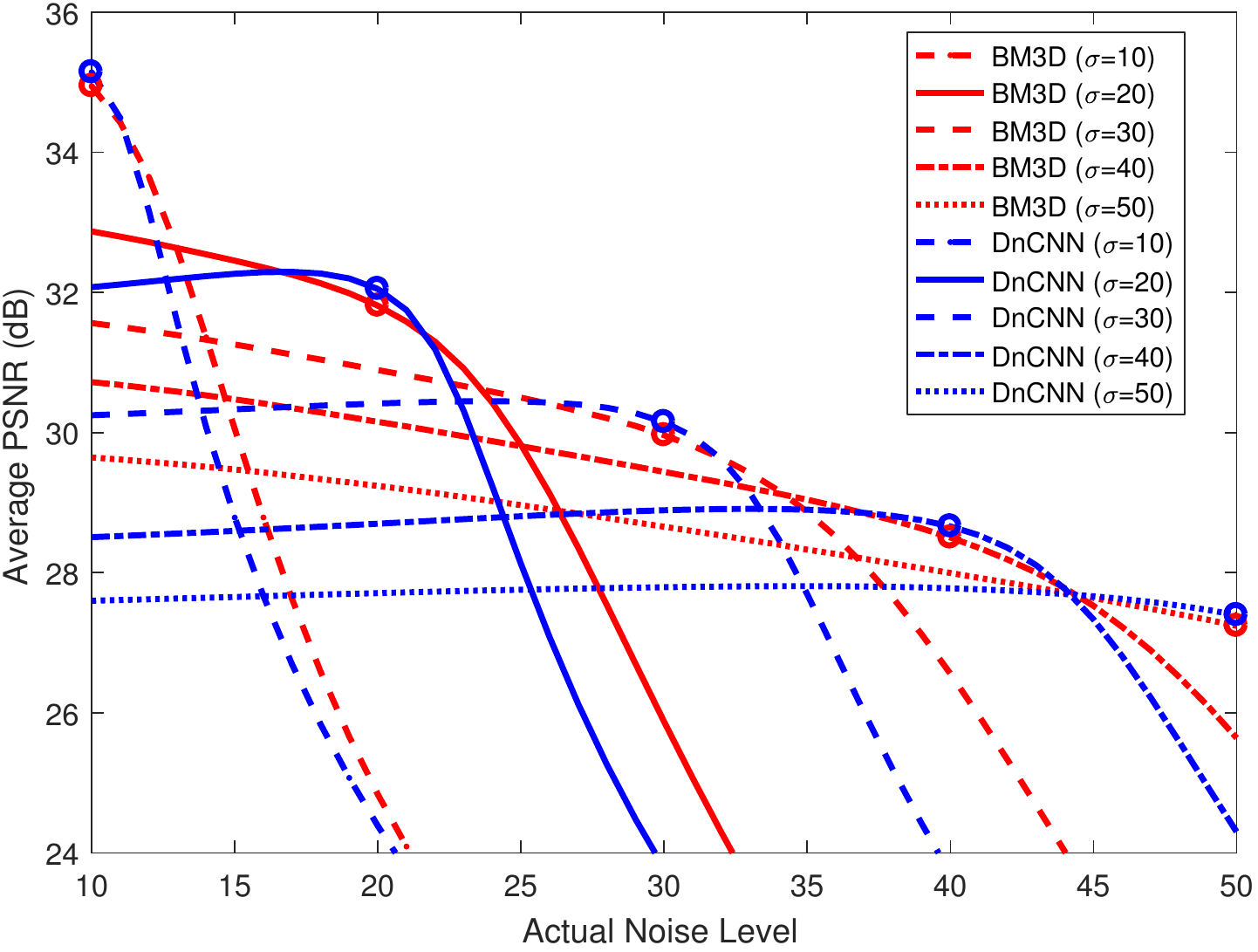}
	\caption{Illustration of noise-level mismatch. We compare BM3Ds and DnCNNs at noise levels $\widehat{\sigma} \in \{10,20,30,40,50\}$ in terms of true noise levels $[10, 50]$ on 10 Kodak images.}
	\label{fig:dncnn}
\end{figure}

The examples above bring out a question that if we have a set of denoisers, each having a different characteristic, how do we combine them to produce a better result? Answering this question is fundamental to designing ensembles of expert image restoration methods for complex scenes. The goal of this paper is to present a framework called the Consensus Neural Network (CsNet) which seeks consensus by using neural networks and convex optimization.

\subsection{Related Work}
Combining estimators is a long-standing statistical problem. In as early as 1959, Graybill and Deal \cite{Graybill_Deal_1959} started to consider linearly combining two unbiased scalar estimators to yield a new estimator that remains unbiased and has lower variance. More properties of the such combination scheme was discussed by Samuel-Cahn \cite{SamuelCahn_1994}. In \cite{Rubin_Weisberg_1975}, Rubin and Weisberg extended the idea by estimating weights from the samples. However, the estimators are still scalars and are assumed to be independent. Correlated scalar estimators are later studied by Keller and Olkin \cite{Keller_Olkin_2002}. For vector estimators (which is the case for image denoisers), Odell et al. \cite{Odell_Dorsett_Young_1989} presented a very comprehensive study. However, their result is limited to two vector estimators. The general case of multiple estimators is studied by Lavancier and Rochet \cite{Lavancier_Rochet_2016}, who proposed an optimization approach to estimate the weights.

Specific to image denoising, methods seeking linear combination of denoisers are scattered in the literature. The most popular approach is perhaps the linear expansion of thresholds by Blu and colleagues \cite{Blu2007}, using the Stein's unbiased risk estimator (SURE). In \cite{Chaudhury2015}, Chaudhury et al. presented an improved bilateral filter using the SURE estimator. For learning based methods, the loss-specific training approach by Jancsary et al. \cite{Jancsary_Nowozin_Rother_2012} presented a regression tree field model to optimize the denoising performance over different metrics. There is also an end-to-end neural network solution for selecting denoisers by Agostinelli et al. \cite{Agostinelli2013}, where the authors proposed to learn the weights using an auto-encoder.

The noise-level mismatch is discussed more often in the neural network literature. Conventional approach is to either truncate the noise level to the nearest trained level \cite{Zhang2017_cvpr} or to train the network with a large number of examples covering all noise levels \cite{Zhang2017_tip}. A more recent approach is to feed a noise map to the network and train the network to recognize the noise level \cite{Zhang2017_arxiv}. However, this approach requires a redesign of the network structure. In contrast, CsNet uses the same structure for all initial denoisers.

\subsection{Contributions}
An overview of the proposed CsNet framework is shown in \fref{fig:framework}. We summarize the three key contributions of this paper in the followings:

\begin{itemize}
\item Optimal Combination. We present an optimal combination framework via convex optimization. By minimizing a quadratic function over a unit simplex, we prove that the resulting combination is optimal in the MSE sense. We provide geometric interpolation of the solution, and a fast algorithm to determine the optimal point.
\item MSE Estimator. We present a novel deep neural network to estimate the mean square error (MSE) in the absence of the ground truth. Existing deep neural network based image quality assessment methods are designed to predict perceptual quality and not MSE. To the best of our knowledge, our deep learning based MSE estimator is the first of this kind in the literature.
\item Denoising Booster. We present a new deep neural network to boost the combined estimates. Unlike the existing boosters which are iterative, we cascade multiple simple neural networks to achieve a one-shot booster.
\end{itemize}

To help readers understand the design process, we proceed the paper by first discussing the optimal combination and its associated theoretical properties in Section II. Section III discusses the neural network estimator for estimating the MSE. We emphasize that the neural network presented here is just one of the many possible ways of estimating the MSE. Readers preferring non-training based approaches can use estimators such as SURE, although we will provide examples where SURE does not work. Section IV discusses the booster, and its cascade structure. Experiments are discussed in Section V.

\begin{figure}[t]
	\centering
	\vspace{-1ex}
	\includegraphics[width=0.48\textwidth]{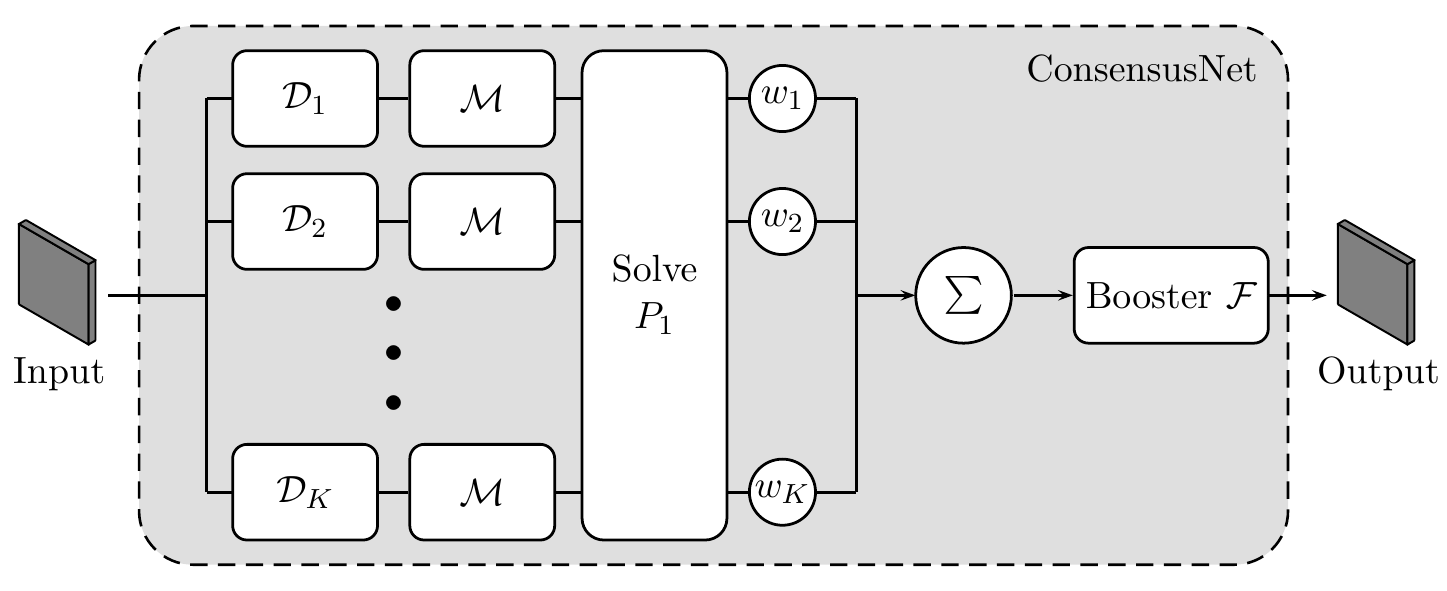}
	\caption{Structure of the proposed CsNet: Given a set of $K$ initial denoisers $\calD_1,\ldots,\calD_K$, CsNet uses an MSE estimator $(\calM)$ to estimate the MSE of each initial denoiser. After the MSEs are estimated, we solve a convex optimization problem ($P_1$) to determine the optimal weight $w_1,\ldots,w_K$. The combined estimate is then boosted using a booster neural network to improve contrast and details.}
	\label{fig:framework}
\end{figure}

\subsection{Notation}
Throughout this paper, we use lower case bold letters to denote vectors, e.g., $\vx \in \R^N$, and upper case bold letters to denote matrices, e.g., $\mA \in \R^{K \times K}$. An all-one vector is denoted as $\vone$. Standard basis vectors are denoted as $\ve_i$, i.e., $\ve_i = [0,\ldots,1,\ldots,0]^T$. For any vector $\vx$, $\|\vx\|_2$ means the $\ell_2$-Euclidean norm, and for any matrix $\mA$, $\|\mA\|_2 = \max_{\|\vx\|_2 = 1} \|\mA\vx\|_2$ denotes the matrix operator norm. To specify that a vector $\vx$ is non-negative for all its elements, we write $\vx \succeq 0$. For matrices, $\mA \succeq 0$ means that $\mA$ is positive semi-definite. Images in this paper are normalized so that every pixel is in $[0,1]$. Noise level of an i.i.d. Gaussian noise is specified by its standard deviation $\sigma$. For notational simplicity, we write $\sigma$ in the scale of $[0,255]$, e.g., ``$\sigma = 20$'' means $\sigma = 20/255$. Finally, an image denoiser $\calD$ is a mapping $\calD: [0,1]^N \to [0,1]^N$. We assume $\calD$ is bounded and is asymptotically invariant \cite{Chan2017}.

\section{Optimal Combination of Estimators}

\subsection{Problem Formulation}
Consider a linear forward model where a clean image $\vz \in \R^N$ is corrupted by additive i.i.d. Gaussian noise $\veta \sim \calN(0,\sigma^2\mI)$ so that the observed image is $\vy = \vz + \veta$. We apply a set of $K$ image denoisers $\calD_1,\ldots,\calD_K$ to yield $K$ initial estimates $\vzhat_k = \calD_k(\vy)$ for $k = 1,\ldots,K$. For convenience, we concatenate these initial estimates by constructing a matrix $\mZhat = [\vzhat_1,\ldots,\vzhat_K] \in \R^{N \times K}$.

In this paper, we are interested in the \emph{linear combination} of estimators. That is, for a given $\mZhat$, we construct the linearly combined estimate as
\begin{equation}
\vzhat = \sum_{k=1}^K w_k \vzhat_k = \mZhat \vw,
\end{equation}
where $\vw \bydef [w_1,\ldots,w_K]^T \in \R^K$ is the vector of combination weights. The goal of our work is to formulate an optimization problem to determine the optimal weights.

For analytic tractability, we use mean squared error (MSE) to measure the optimality, although it is known that alternative visual quality metrics correlate better to human visual systems \cite{Wang_Bovik_Sheikh_2004}. Denoting $\vz \in \R^N$ as the ground truth, we define the MSE between the combined estimate $\vzhat$ and the ground truth $\vz$ as
\begin{equation}
\mathrm{MSE}(\vzhat,\vz) \bydef \E \left[ \|\vzhat - \vz\|^2 \right] = \E \left[ \left\| \mZhat\vw - \vz \right\|^2 \right].
\end{equation}

The optimal combination problem can be posed as minimizing the MSE by seeking the weight vector $\vw \in \R^K$:
\begin{equation}
\begin{array}{ll}
\minimize{\vw} &\;\; \E \left[ \|\mZhat\vw - \vz\|^2 \right]\\
\subjectto     &\;\; \vw^T\vone = 1, \quad\mbox{and}\quad \vw \succeq 0.
\end{array}
\label{eq:optimization 1}
\end{equation}
Here, the constraint $\vw^T\vone = 1$ ensures that the sum of the weights is 1, and the constraint $\vw \succeq 0$ ensures that the combined estimate remains in $[0,1]^N$.


Let us simplify \eref{eq:optimization 1}. First, we define $\mZ = [\vz,\ldots,\vz] \in \R^{N \times K}$, i.e., a matrix with the ground truth $\vz$ in each column. Since $\vw^T\vone = 1$, we can show that
\begin{align*}
\E \left[ \left \|\mZhat\vw - \vz \right \|^2 \right]
&= \E \left[ \left \|\mZhat\vw - \mZ\vw \right \|^2 \right]\\
&= \E \left[ \vw^T (\mZhat - \mZ)^T (\mZhat - \mZ)\vw  \right]\\
&= \vw^T \mSigma \vw,
\end{align*}
where $\mSigma$ is defined as
$$\mSigma \bydef \E \left[(\mZhat - \mZ)^T (\mZhat - \mZ) \right].$$ We call $\mSigma$ the \emph{covariance matrix}\footnote{Straightly speaking, $\mSigma \bydef \E \left[(\mZhat - \mZ)^T (\mZhat - \mZ) \right]$ is not the conventional covariance matrix because denoisers are not necessarily unbiased, i.e., $\E[\mZhat] \not= \mZ$. }. Using this result, we can rewrite \eref{eq:optimization 1} into an equivalent form as
\begin{equation}
\begin{array}{ll}
\minimize{\vw} &\;\; \vw^T \mSigma \vw  \\
\subjectto     &\;\; \vw^T\vone = 1, \quad\mbox{and}\quad \vw \succeq 0,
\end{array}
\tag{$P_1$}
\label{eq:optimization}
\end{equation}
which is a convex problem because $\mSigma$ is positive semi-definite and the feasible set is convex.

Before we discuss how to solve \eref{eq:optimization}, we should first discuss how to obtain $\mSigma$. The $(i,i)$-th entry of $\mSigma$ is
\begin{equation*}
\Sigma_{ii} = \E \left[ \left\| \vzhat_i - \vz \right\|^2 \right] \bydef \mathrm{MSE}_{i},
\end{equation*}
which is the MSE of the $i$-th estimate. The $(i,j)$-th entry of $\mSigma$ is the correlation between $\vzhat_i$ and $\vzhat_j$:
\begin{align*}
\Sigma_{ij} = \E \left[ (\vzhat_i - \vz)^T(\vzhat_j - \vz)\right].
\end{align*}
To express $\Sigma_{ij}$ in terms of $\mathrm{MSE}_{i}$ and $\mathrm{MSE}_{j}$, we notice that
\begin{align*}
\E \left[ \left\| \vzhat_i - \vzhat_j \right\|^2\right]
&= \E \left[ \left\| \vzhat_i - \vz + \vz - \vzhat_j \right\|^2\right] \\
&= \E \left\| \vzhat_i - \vz \right\|^2 + \E \left\| \vzhat_j - \vz \right\|^2  + \ldots \\
&\quad\quad - 2 \E \left[ (\vzhat_i - \vz)^T(\vzhat_j - \vz)\right]\\
&= \mathrm{MSE}_{i} + \mathrm{MSE}_{j} - 2 \Sigma_{ij}.
\end{align*}
Rearranging the terms we can write $\Sigma_{ij}$ as
\begin{equation}
\Sigma_{ij} = \frac{ \mathrm{MSE}_{i} + \mathrm{MSE}_{j} - \E \left[ \left\| \vzhat_i - \vzhat_j \right\|^2 \right]}{2}.
\label{eq:Sigmaij}
\end{equation}
Therefore, when we do not have true $\mathrm{MSE}_{i}$ and $\mathrm{MSE}_j$ but estimates $\widetilde{\mathrm{MSE}}_i$ and $\widetilde{\mathrm{MSE}}_j$, \eref{eq:Sigmaij} provides a convenient way to construct $\Sigma_{ij}$ because $\E \left[ \left\| \vzhat_i - \vzhat_j \right\|^2\right]$ does not require the ground truth.

\subsection{Solving \eref{eq:optimization}}
The optimization problem in \eref{eq:optimization} is a quadratic minimization over a unit simplex. The problem does not have a closed form solution because the KKT conditions involve a complementary slackness term due to the non-negativity constraint. Iterative algorithms are available though, e.g., using general purpose semi-definite programming such as CVX \cite{cvx,Grant_Boyd_2008}, or using projected gradients \cite{Condat_2017,Mairal_2013}. However, since \eref{eq:optimization} has a simple structure, efficient algorithms can be derived.

Our algorithm is an accelerated gradient method following the work of Jaggi \cite{Jaggi_2011}. We briefly describe the algorithm for completeness. Let
\begin{equation}
f(\vw) = \vw^T\mSigma\vw
\end{equation}
be the objective function, and
\begin{equation}
\Omega \bydef \{\vw \;|\; \vw^T\vone = 1, \;\mbox{and}\; \vw \succeq 0\}
\end{equation}
be the feasible set. The first order linear approximation at the $t$-th iterate is
\begin{equation*}
f(\vu) = f(\vw^{(t)}) + \nabla f (\vw^{(t)})^T (\vu - \vw^{(t)}), \quad \forall \vu \in \Omega.
\end{equation*}
Thus, for any $\vu \in \Omega$, $\vu - \vw^{(t)}$ is a feasible search direction. One choice of $\vu$ is to make $\nabla f (\vw^{(t)})^T \vu$ minimized so that $f(\vu)$ has a lower cost. This leads to
\begin{equation}
\minimize{\vu \in \Omega} \;\; \nabla f(\vw^{(t)} )^T \vu,
\label{eq:greedy search direction}
\end{equation}
which has a linear objective function. Once $\vu$ is determined, we construct a standard accelerated gradient step:
\begin{equation}
\vw^{(t+1)} = \vw^{(t)} + \alpha (\vu - \vw^{(t)}),
\end{equation}
where $\alpha = \frac{2}{t+2}$ is the step size.

It remains to find the solution of the subproblem \eref{eq:greedy search direction}. However, the subproblem \eref{eq:greedy search direction} is a linear programming over the unit simplex. Therefore, the solution has to lie on one of the vertices. We derive a closed-form solution in Proposition~\ref{prop:algorithm}. The pseudo-code is shown in Algorithm~\ref{alg:greedy algorithm}.

\begin{proposition}
\label{prop:algorithm}
The solution to \eref{eq:greedy search direction} is $\vu = \ve_{i^*}$, where $i^* = \mathrm{argmin}_{i} (\nabla f(\vw^{(t)}))_i$.
\end{proposition}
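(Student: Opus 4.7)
The plan is to show that the linear objective $\vg^T\vu$, where $\vg \bydef \nabla f(\vw^{(t)})$, is a convex combination of the entries of $\vg$, and hence is lower bounded by $\min_i g_i$, with equality attained at a vertex of the simplex.

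First, I would fix $\vg = \nabla f(\vw^{(t)}) \in \R^K$ and rewrite the objective as $\vg^T\vu = \sum_{i=1}^K g_i u_i$. Since any $\vu \in \Omega$ satisfies $u_i \geq 0$ and $\sum_i u_i = 1$, the vector $\vu$ is literally a probability distribution over $\{1,\dots,K\}$, so $\vg^T\vu$ is an expected value of the finite collection $\{g_1,\dots,g_K\}$. This immediately gives the lower bound
\begin{equation*}
\vg^T\vu \;=\; \sum_{i=1}^K g_i u_i \;\geq\; \left(\min_{i} g_i\right)\sum_{i=1}^K u_i \;=\; g_{i^*},
\end{equation*}
where $i^* = \mathrm{argmin}_i \, g_i$.

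Next I would verify that this bound is attained within $\Omega$. Taking $\vu = \ve_{i^*}$, we have $\ve_{i^*}^T\vone = 1$ and $\ve_{i^*} \succeq 0$, so $\ve_{i^*} \in \Omega$. Moreover $\vg^T \ve_{i^*} = g_{i^*}$, which matches the lower bound. Therefore $\ve_{i^*}$ is a minimizer of \eref{eq:greedy search direction}, establishing the claim.

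The argument is essentially the standard fact that a linear function on a polytope attains its minimum at a vertex, and the vertices of the unit simplex $\Omega$ are the standard basis vectors $\ve_1,\ldots,\ve_K$. There is no real obstacle here; the only subtlety worth a sentence in the write-up is uniqueness, which fails when the minimum of $g_i$ is achieved by more than one index. In that case any convex combination of the corresponding $\ve_i$'s is also a minimizer, but choosing $\vu = \ve_{i^*}$ with $i^*$ the smallest minimizing index gives a well-defined rule that suffices for use inside Algorithm~\ref{alg:greedy algorithm}.
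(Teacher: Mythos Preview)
Your proposal is correct and follows essentially the same approach as the paper: both define $\vg = \nabla f(\vw^{(t)})$, use the simplex constraints to bound $\vg^T\vu = \sum_i g_i u_i \ge (\min_i g_i)\sum_i u_i = g_{\min}$, and observe that the bound is attained at $\vu = \ve_{i^*}$. Your added remarks about verifying $\ve_{i^*}\in\Omega$ and about non-uniqueness when $\min_i g_i$ is achieved at multiple indices are fine embellishments but not new ideas.
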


\begin{proof}
Let $\vg = \nabla f(\vw^{(t)})$. Then it follows that
\begin{align*}
\vg^T\vu &= \sum_{i=1}^K g_i u_i \ge g_{\min} \sum_{i=1}^K u_i = g_{\min},
\end{align*}
where $g_{\min} = \min_{i} g_i$, and $\sum_{i=1}^K u_i = 1$ because $\vu \in \Omega$. The lower bound can be attained when $\vu = \ve_{i^*}$, where $i^* = \mathrm{argmin}_{i} \;\; g_i$.
\end{proof}

\begin{algorithm}[h]
\caption{Algorithm to Solve \eref{eq:optimization}}
\begin{algorithmic}[1]
\STATE Initialize $\vw^0 = \ve_1$.
\FOR{$t = 0,1,\ldots,T_{\max}$}
    \STATE Let $i^* = \argmin{i} \; (\mSigma\vw^{(t)})_i$
    \STATE Update $\vw^{(t+1)} = \vw^{(t)} + \left(\frac{2}{t+2}\right)(\ve_{i^*} - \vw^{(t)})$.
\ENDFOR
\end{algorithmic}
\label{alg:greedy algorithm}
\end{algorithm}

\begin{example}
As an illustration of Algorithm~\ref{alg:greedy algorithm}, we compare its performance with an ADMM algorithm by Condat \cite{Condat_2017}. The reference method is CVX \cite{cvx}. We repeat the experiment $1000$ times using different random matrices $\mathbf{\Sigma}$, and take the average. As shown in \fref{fig:algorithm1}, Algorithm~\ref{alg:greedy algorithm} converges significantly faster than \cite{Condat_2017}. In terms of runtime, Algorithm~\ref{alg:greedy algorithm} takes about $4.4$ msec, \cite{Condat_2017} takes $13$ msec, and CVX takes $223.1$ msec.
\begin{figure}[h]
\centering
\includegraphics[width=0.8\linewidth]{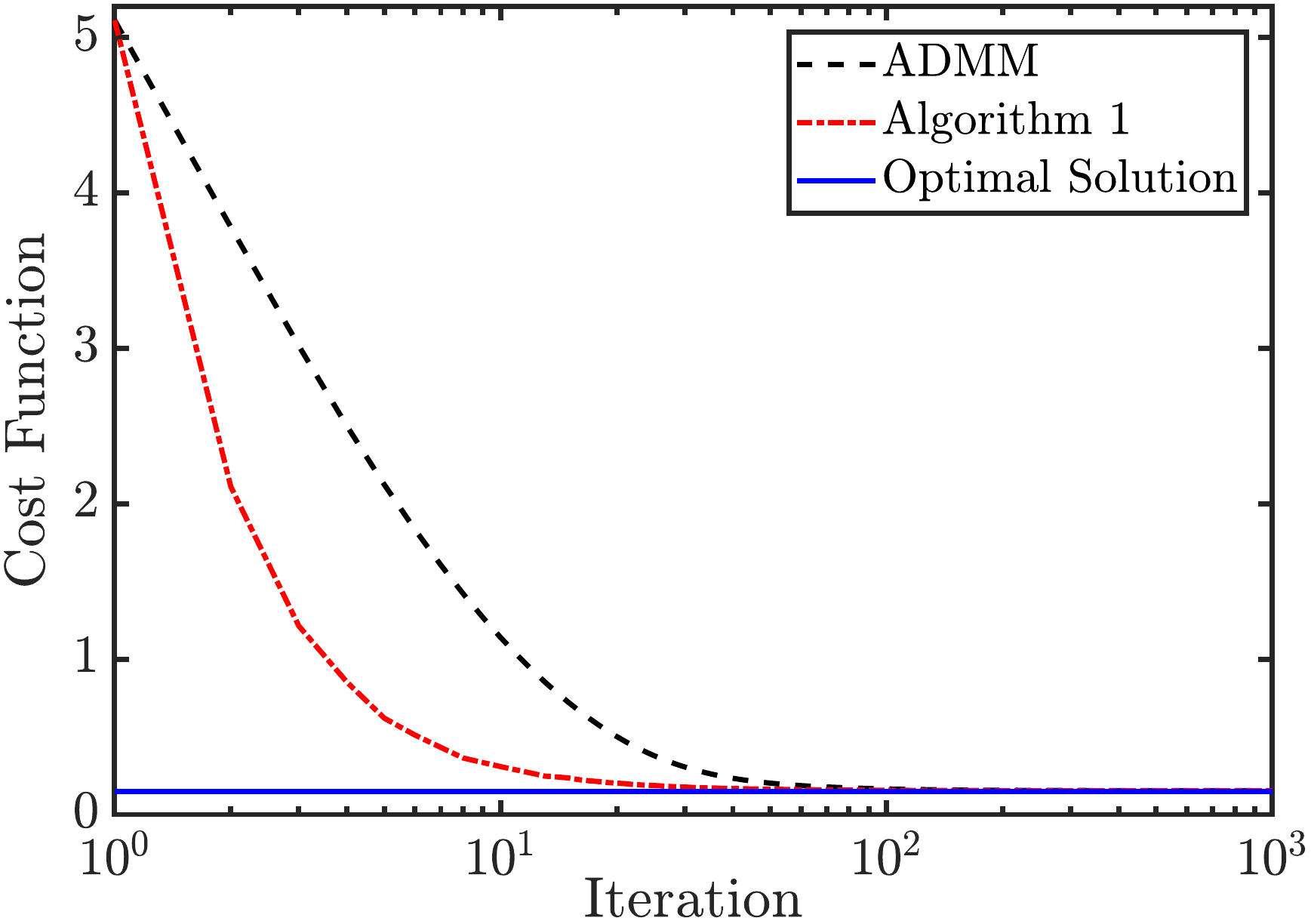}
\caption{Comparison of Algorithm~\ref{alg:greedy algorithm} and the ADMM algorithm by \cite{Condat_2017}, using the optimal solution obtained by CVX~\cite{cvx}.}
\label{fig:algorithm1}
\vspace{-1em}
\end{figure}
\end{example}

\subsection{Geometric Interpretation of \eref{eq:optimization}}
\noindent \textbf{Uniqueness}. The uniqueness of the solution of \eref{eq:optimization} is determined by the positive definiteness of $\mSigma$. If $\mSigma$ is positive definite, then \eref{eq:optimization} is strictly convex, and hence the optimal weight is unique. If $\mSigma$ is only positive semi-definite, then there are infinitely many optimal weights. The following proposition explains this phenomenon.
\begin{proposition}
Suppose that $\mSigma$ is positive semi-definite. Let $\vw_1^*$ and $\vw_2^*$ be two solutions of \eref{eq:optimization}. Then, for any $0 \le t \le 1$, the vector $\vw^* \bydef t\vw_1^* + (1-t) \vw_2^*$ is also a solution of \eref{eq:optimization}.
\label{proposition: uniqueness}
\end{proposition}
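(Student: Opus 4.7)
The plan is to verify the two ingredients that make the optimal set convex: feasibility of the convex combination, and optimality of its objective value. Both follow directly from standard properties of convex optimization once we observe that $(P_1)$ is a convex program (as noted earlier in the excerpt, $\mSigma \succeq 0$ makes $f(\vw) = \vw^T \mSigma \vw$ convex, and $\Omega$ is the unit simplex, which is convex).

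First I would show $\vw^* \in \Omega$. Since $\vw_1^*$ and $\vw_2^*$ both satisfy $\vone^T \vw_i^* = 1$ and $\vw_i^* \succeq 0$, linearity gives $\vone^T \vw^* = t\cdot 1 + (1-t)\cdot 1 = 1$, and for $t \in [0,1]$ the nonnegativity of each coordinate is preserved under convex combination, so $\vw^* \succeq 0$. Hence $\vw^*$ is feasible.

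Next I would bound $f(\vw^*)$ using convexity of $f$. Let $f^\star = f(\vw_1^*) = f(\vw_2^*)$ denote the common optimal value. Because $\mSigma \succeq 0$, the quadratic $f$ is convex, so
\begin{equation*}
f(\vw^*) \;\le\; t\,f(\vw_1^*) + (1-t)\,f(\vw_2^*) \;=\; f^\star.
\end{equation*}
On the other hand, since $\vw^* \in \Omega$, the definition of optimality gives $f(\vw^*) \ge f^\star$. Combining the two inequalities yields $f(\vw^*) = f^\star$, so $\vw^*$ is itself a minimizer of $(P_1)$.

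There is no real obstacle here; the statement is essentially the well-known fact that the solution set of a convex program is convex, specialized to our quadratic objective on the simplex. The only subtlety worth flagging in the write-up is that one should not invoke strict convexity (which would fail when $\mSigma$ is merely positive semi-definite and would contradict the hypothesis that two distinct optima may coexist); the argument uses only ordinary convexity of $f$ together with convexity of $\Omega$.
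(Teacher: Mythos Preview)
Your proposal is correct and follows essentially the same approach as the paper's own proof: verify feasibility of the convex combination by linearity and nonnegativity, then sandwich $f(\vw^*)$ between $f^\star$ (from optimality) and $f^\star$ (from convexity of $f$). The paper's write-up is slightly terser, but the logic is identical.
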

\begin{proof}
Let $f(\vw) = \vw^T\mSigma\vw$. Since both $\vw_1^*$ and $\vw_2^*$ are solutions to \eref{eq:optimization}, we have $f(\vw_1^*) = f(\vw_2^*)$. Also, by linearity, we have that $\vone^T\vw^* = 1$ and $\vw^* \succeq 0$. Since $f$ is convex, we can show that
\begin{align*}
f(\vw^*) &= f(t \vw_1^* + (1-t)\vw_2^*) \\
&\le t f(\vw_1^*) + (1-t) f(\vw_2^*) = f(\vw_1^*).
\end{align*}
But since $\vw_1^*$ is an optimal solution, it is impossible for $f(\vw^*) < f(\vw_1^*)$. So the only possibility is $f(\vw^*) = f(\vw_1^*)$. This implies that $\vw^*$ is also a solution.
\end{proof}
The implication of Proposition~\ref{proposition: uniqueness} is that if two initial estimates $\vzhat_i$ and $\vzhat_j$ are identical (or scalar multiple of one and the other), then $\mSigma$ will have dependent columns (hence positive semi-definite). When this happens, there will be infinitely many ways of combining the two initial estimates. However, in practice this is not an issue because even if the pair $(w_i^*, w_j^*)$ is not unique, the combined estimate $w_i^* \vzhat_i + w_j^*\vzhat_j$ remains unique when $\vzhat_i=\vzhat_j$.

\vspace{1ex}
\noindent \textbf{Geometry}. The geometry of \eref{eq:optimization} can be interpreted in low dimensions, e.g., \fref{fig:geometry}. In this figure, we consider a 2D case so that $\mSigma$ is a $2 \times 2$ matrix. We can show that the ellipse always has its minor axis pointing to the northeast direction if the two initial estimates are positively correlated.
\begin{figure}[t]
\centering
\begin{tabular}{cc}
	\includegraphics[width=0.32\linewidth]{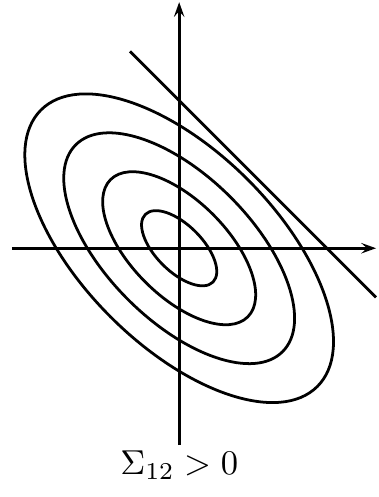}&
	\hspace{8ex} \includegraphics[width=0.32\linewidth]{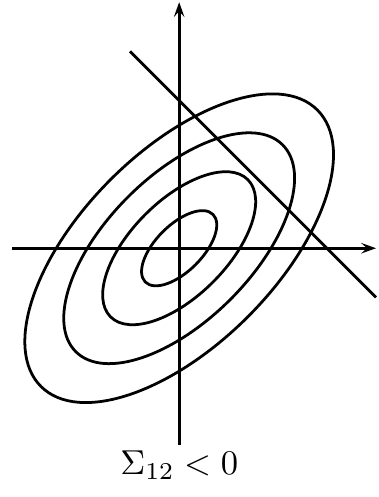}
\end{tabular}
\vspace{-0.5ex}
\caption{Geometry of the optimal weight minimization problem.}
\label{fig:geometry}
\vspace{-2ex}
\end{figure}

\begin{proposition}
\label{prop:geometry}
Consider a two-dimensional $\mSigma$. If $\Sigma_{12} > 0$, then $\mSigma$ always has its minor axis pointing to the northeast direction and major axis to the northwest direction.
\end{proposition}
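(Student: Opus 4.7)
The plan is to diagonalize the symmetric $2\times 2$ matrix $\mSigma$ and read off the orientation of the level sets $\vw^T\mSigma\vw = c$ directly from the eigenvectors. Since the principal axes of such an ellipse are aligned with the eigenvectors of $\mSigma$, and the semi-axis lengths are $\sqrt{c/\lambda_i}$, the minor axis corresponds to the \emph{larger} eigenvalue $\lambda_{+}$ and the major axis to the smaller eigenvalue $\lambda_{-}$. So the task reduces to showing that, under the hypothesis $\Sigma_{12} > 0$, the eigenvector of $\lambda_{+}$ has entries of the same sign (northeast direction) while the eigenvector of $\lambda_{-}$ has entries of opposite sign (northwest direction).

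First I would solve the characteristic equation $(\Sigma_{11}-\lambda)(\Sigma_{22}-\lambda) - \Sigma_{12}^2 = 0$ to obtain
\begin{equation*}
\lambda_{\pm} = \frac{\Sigma_{11}+\Sigma_{22}}{2} \pm \frac{1}{2}\sqrt{(\Sigma_{11}-\Sigma_{22})^2 + 4\Sigma_{12}^2}.
\end{equation*}
Next I would establish the key strict inequality $\lambda_{+} > \Sigma_{11} > \lambda_{-}$, which follows from $\sqrt{(\Sigma_{11}-\Sigma_{22})^2 + 4\Sigma_{12}^2} > |\Sigma_{11}-\Sigma_{22}|$, valid because $\Sigma_{12} > 0$. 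Then, from the eigenvector equation $(\Sigma_{11}-\lambda) v_1 + \Sigma_{12} v_2 = 0$, I obtain
\begin{equation*}
\frac{v_2}{v_1} = \frac{\lambda - \Sigma_{11}}{\Sigma_{12}},
\end{equation*}
so for $\lambda = \lambda_{+}$ the ratio is strictly positive (northeast), and for $\lambda = \lambda_{-}$ the ratio is strictly negative (northwest), which is exactly the claim.

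The main obstacle is essentially bookkeeping: being careful that ``minor axis'' in the geometry of the ellipse corresponds to the \emph{larger} eigenvalue of $\mSigma$ (short semi-axis length $\sqrt{c/\lambda_{+}}$), not the smaller one, and handling the degenerate case $\Sigma_{11}=\Sigma_{22}$ cleanly. In that degenerate case the eigenvectors are exactly $(1,1)^T$ and $(1,-1)^T$, confirming the northeast/northwest orientation; the general case is a perturbation of this picture, and the sign argument above covers it uniformly. No other subtlety arises, since positive semi-definiteness of $\mSigma$ guarantees the eigenvalues are real and non-negative and the level sets are genuine (possibly degenerate) ellipses.
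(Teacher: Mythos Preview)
Your proposal is correct and follows essentially the same approach as the paper: both compute the eigenvalues and eigenvectors of the $2\times 2$ matrix explicitly and then analyze the signs of the eigenvector components using the inequality $\sqrt{(\Sigma_{11}-\Sigma_{22})^2+4\Sigma_{12}^2} > |\Sigma_{11}-\Sigma_{22}|$ (equivalently, your $\lambda_{+}>\Sigma_{11}>\lambda_{-}$). Your derivation of the component ratio $v_2/v_1 = (\lambda-\Sigma_{11})/\Sigma_{12}$ directly from the eigenvector equation is a little cleaner than the paper's citation of closed-form eigenvector formulas, and you are more explicit than the paper about the bookkeeping point that the \emph{minor} axis of the level ellipse corresponds to the \emph{larger} eigenvalue of $\mSigma$; otherwise the arguments coincide.
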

\begin{proof}
Consider the eigen-decomposition of $\mSigma = \mU\mS\mU^T$. For a $2\times 2$ matrix, classical results in matrix analysis \cite{Deledalle_Denis_Tabti_2017} shows that the eigen-value and eigen-vectors are
\begin{align*}
s_1 = \frac{1}{2}\left( \Sigma_{11} + \Sigma_{22} - \lambda \right), \quad s_2 = \frac{1}{2}\left( \Sigma_{11} + \Sigma_{22} + \lambda \right),
\end{align*}
and
\begin{align*}
\vu_1 =
\begin{bmatrix}
\frac{\Sigma_{11}-\Sigma_{22}+\lambda}{2\Sigma_{12}}\\
1
\end{bmatrix}
, \quad
\vu_2 =
\begin{bmatrix}
\frac{\Sigma_{11}-\Sigma_{22}-\lambda}{2\Sigma_{12}}\\
1
\end{bmatrix}
\end{align*}
where $\lambda = \sqrt{4\Sigma_{12}^2 + (\Sigma_{11} - \Sigma_{12})^2}.$

Note that $\lambda \ge |\Sigma_{11}-\Sigma_{22}|$ because $\Sigma_{12}^2 \ge 0$. Therefore, $s_2 \ge s_1$ and so $\vu_1$ is the minor axis and $\vu_2$ is the major axis. The numerator of the first entry of $\vu_1$ is
\begin{align*}
\Sigma_{11} - \Sigma_{22} + \lambda
&\ge \Sigma_{11} - \Sigma_{22} + |\Sigma_{11} - \Sigma_{22}| \\
&= \begin{cases}
2 |\Sigma_{11}-\Sigma_{22}| \ge 0, &\mbox{if }\; \Sigma_{11} \ge \Sigma_{22},\\
0, &\mbox{otherwise}.
\end{cases}
\end{align*}
As a result, the numerator of the first entry of $\vu_1$ is always non-negative, implying that the sign of the denominator determines the sign of the entry. Therefore, if $\Sigma_{12} > 0$, then $\vu_1$ will be pointing to the northeast direction. By orthogonality of the eigen-vectors, $\vu_2$ points to the northwest direction.
\end{proof}

Proposition~\ref{prop:geometry} provides some insights about the solution. If $\Sigma_{12}>0$ (which is usually the case), the major axis must point to northwest. Therefore, the solution is more likely to be at one of the two vertices. In other words, the optimal solution tends to be \emph{sparse}. Such sparsity should come with no surprise, because the linear constraint $\vw^T\vone = 1$ is equivalent to $\|\vw\|_1 = 1$ if $\vw \succeq 0$. This also explains why the non-negativity constraint in our problem is essential.

\begin{remark}
In practice, if we only have an estimated covariance matrix $\mSigmatilde$, there is no guarantee that $\mSigmatilde$ is positive semi-definite. (Symmetry can be preserved by constructing the off-diagonals using \eref{eq:Sigmaij}.) When $\mSigmatilde$ is not positive semi-definite, we project $\mSigmatilde$ onto its closest positive semi-definite matrix by solving
\begin{equation}
\mSigma = \argmin{\mS \succeq 0} \;\|\mS - \mSigmatilde\|_F^2.
\label{eq:positive definite S}
\end{equation}
The solution to \eref{eq:positive definite S} is the truncated eigen-decomposition where negative eigenvalues of $\mSigmatilde$ are set to 0.
\end{remark}

\subsection{Optimal MSE Lower Bound}
We derive the MSE lower bound of \eref{eq:optimization}. To do so, we consider a relaxed optimization by removing the non-negativity constraint:
\begin{equation}
\begin{array}{ll}
\minimize{\vw} &\;\; \vw^T \mSigma \vw  \\
\subjectto     &\;\; \vw^T\vone = 1.
\end{array}
\tag{$P_2$}
\label{eq:optimization 2}
\end{equation}
Clearly, the feasible set of \eref{eq:optimization 2} includes the feasible set of \eref{eq:optimization}, and so the MSE obtained by solving \eref{eq:optimization 2} must be a lower bound of the MSE obtained by solving \eref{eq:optimization}. More precisely, if we let $\vwhat$ be the optimal weight vector obtained by \eref{eq:optimization}, and $\vw^*$ be that obtained by \eref{eq:optimization 2}, then
\begin{equation}
\E \left[ \left \|\mZhat\vwhat - \vz \right \|^2 \right] \ge \E \left[ \left \|\mZhat\vw^* - \vz \right \|^2 \right].
\label{eq:MSE lower bound}
\end{equation}

Let us analyze the right hand side of \eref{eq:MSE lower bound}. The optimization in \eref{eq:optimization 2} is a standard linear equality constrained quadratic minimization. Closed-form solution can be derived via the standard Lagrangian approach by defining:
\begin{equation}
\calL(\vw, \lambda) = \frac{1}{2} \vw^T\mSigma\vw - \lambda (\vw^T\vone - 1).
\end{equation}
The first order KKT conditions state that
\begin{equation*}
\frac{\partial \calL}{\partial \vw} = 0, \quad\quad \vw^T\vone = 1,
\end{equation*}
where the first condition is equivalent to
\begin{equation}
\mSigma\vw - \lambda\vone = 0, \quad\mbox{or}\quad \vw = \lambda \mSigma^{\dag} \vone,
\label{eq:optimal solution 1}
\end{equation}
where $\mSigma^{\dag}$ denotes the pseudo-inverse of a symmetric positive semi-definite matrix $\mSigma$. If $\mSigma$ is positive definite, then $\mSigma^{\dag} = \mSigma^{-1}$ and \eref{eq:optimal solution 1} can be written as $\vw = \lambda \mSigma^{-1} \vone$. Substituting \eref{eq:optimal solution 1} into the constraint, we have that
\begin{align}
\vone^T \left ( \lambda \mSigma^{\dag} \vone \right) = 1 \quad\Rightarrow\quad \lambda = \frac{1}{\vone^T \mSigma^\dag \vone}.
\label{eq:optimal lambda}
\end{align}
Substituting \eref{eq:optimal lambda} into \eref{eq:optimal solution 1}, we prove the following.

\begin{proposition}
The solution to \eref{eq:optimization 2} is given by
\begin{equation}
\vw^* = \frac{\mSigma^{\dag} \vone}{\vone^T \mSigma^\dag \vone},
\label{eq:optimal solution}
\end{equation}
where $\mSigma^\dag$ denotes the pseudo-inverse of the symmetric positive semi-definite matrix $\mSigma$.
\end{proposition}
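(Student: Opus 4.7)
The plan is to use Lagrange multipliers. Since $(P_2)$ is a convex quadratic program (convexity coming from $\mSigma \succeq 0$) with a single linear equality constraint and no inequality constraints, the first-order KKT conditions are both necessary and sufficient, so any stationary point of the Lagrangian that is also feasible is automatically a global minimizer. Thus it suffices to exhibit one feasible stationary point that matches the stated formula.

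First, I would form the Lagrangian exactly as the text already sets up,
\begin{equation*}
\calL(\vw,\lambda) = \tfrac{1}{2}\vw^T\mSigma\vw - \lambda(\vw^T\vone - 1),
\end{equation*}
and write the stationarity condition $\nabla_\vw \calL = \mSigma \vw - \lambda \vone = \vzero$. Multiplying both sides on the left by $\mSigma^\dag$ and using that $\mSigma^\dag$ is symmetric, I would show that $\vw = \lambda \mSigma^\dag \vone$ is a solution of this linear system (more on this in the next paragraph). Substituting into the equality constraint $\vone^T \vw = 1$ gives $\lambda \vone^T \mSigma^\dag \vone = 1$, hence $\lambda = 1/(\vone^T \mSigma^\dag \vone)$, and plugging back produces the claimed $\vw^* = \mSigma^\dag \vone / (\vone^T \mSigma^\dag \vone)$.

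The step I expect to require the most care is justifying the use of $\mSigma^\dag$ when $\mSigma$ is only positive semidefinite rather than positive definite. The subtle point is that $\vw = \lambda \mSigma^\dag \vone$ satisfies $\mSigma \vw = \lambda \vone$ only if $\vone \in \mathrm{range}(\mSigma)$, since $\mSigma \mSigma^\dag$ is the orthogonal projector onto $\mathrm{range}(\mSigma)$. I would handle this by invoking the convention from Remark~1: if the estimated covariance happens to have $\vone$ outside its range, the authors first project onto the positive-semidefinite cone, and the resulting $\mSigma$ inherits $\vone$ in its range whenever the denoisers give non-degenerate estimates (each $\Sigma_{ii} = \mathrm{MSE}_i > 0$). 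Under this mild assumption the stationary point is well-defined, and the resulting $\vw^*$ is feasible and optimal. Non-uniqueness of $\vw^*$ in the rank-deficient case is not an issue either: by Proposition~\ref{proposition: uniqueness}, any two optimal weights give the same combined estimate, and the pseudo-inverse formula simply singles out the minimum-norm representative.
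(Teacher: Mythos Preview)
Your proposal is correct and follows essentially the same Lagrangian/KKT argument that the paper gives in the paragraph immediately preceding the proposition: form $\calL(\vw,\lambda)$, set $\mSigma\vw = \lambda\vone$, write $\vw = \lambda\mSigma^{\dag}\vone$, and solve for $\lambda$ via the constraint. You are in fact more careful than the paper about the semidefinite case (the paper simply writes $\vw = \lambda\mSigma^{\dag}\vone$ without discussing whether $\vone \in \mathrm{range}(\mSigma)$), though note that your specific justification---that $\Sigma_{ii}>0$ for all $i$ forces $\vone$ into the range---is not quite airtight, since e.g.\ $\mSigma = \bigl(\begin{smallmatrix}1 & -1\\ -1 & 1\end{smallmatrix}\bigr)$ has positive diagonal yet $\mSigma^{\dag}\vone = \vzero$.
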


Given the optimal weight vector $\vw^*$, we can determine the corresponding mean squared error:
\begin{align}
\E \left[ \left \|\mZhat\vw^* - \vz \right \|^2 \right] = (\vw^*)^T \mSigma \vw^* = \frac{1}{\vone^T \mSigma^\dag \vone}.
\label{eq:MSE lower bound 2}
\end{align}
Since the weight $\vw^*$ provides a lower bound on the MSE, in particular if we consider a weight vector $\ve_k = [0,\ldots,1,\ldots,0]^T$ (i.e., the $k$-th standard basis vector), we must have
\begin{align}
\mathrm{MSE}_k = \ve_k^T \mSigma \ve_k \ge \vwhat^T \mSigma \vwhat \ge \frac{1}{\vone^T \mSigma^\dag \vone}.
\label{eq: lower bound result}
\end{align}
The first inequality holds because $\ve_k$ is one of the feasible vectors of \eref{eq:optimization} but $\vwhat$ is the optimal solution. The second inequality holds because $\vw^*$ is a solution of \eref{eq:optimization 2}. The result of \eref{eq: lower bound result} states that an optimally combined estimate using $\vwhat$ has to be at least as good as any initial estimate.

\begin{remark}
The MSE lower bound result presented here is more general than the previous result by Odell et al. \cite{Odell_Dorsett_Young_1989} which only considered $K = 2$. When $K = 2$, we have
\begin{equation}
w_1^* = \frac{\Sigma_{22} - \Sigma_{12}}{\Sigma_{11} + \Sigma_{22} - 2\Sigma_{12}}, \quad\mbox{and}\quad w_2^* = 1-w_1^*,
\end{equation}
which is the same as Equation 2 of Table 3 in \cite{Odell_Dorsett_Young_1989}. \footnote{In Equation 2 of Table 3 in \cite{Odell_Dorsett_Young_1989}, there is a typo of the numerator which should be corrected as $m_{22}-m_{12}$.}
\end{remark}

\subsection{Perturbation in $\mSigma$}
We conclude this section by discussing the perturbation issue when we use an estimated covariance matrix $\mSigmatilde$ instead of $\mSigma$. To facilitate the discussion, we define two weight vectors:
\begin{align}
\vwtilde = \argmin{\vv \in \Omega} \;\; \vv^T \mSigmatilde \vv, \quad\mbox{and}\quad \vw = \argmin{\vv \in \Omega} \;\; \vv^T \mSigma \vv.
\end{align}
That is, $\vwtilde$ is the optimal weight vector found according to the estimated covariance matrix $\mSigmatilde$, and $\vw$ is the optimal weight vector found according to the true covariance matrix $\mSigma$. Correspondingly, we define their combined estimates as
\begin{equation}
\vztilde = \mZhat\vwtilde, \quad\mbox{and}\quad \vzhat = \mZhat\vw.
\end{equation}
The following proposition summarizes the perturbation result.
\begin{proposition}
\label{prop:perturbation}
Assume that $\mSigmatilde \succ 0$ and $\mSigma \succ 0$. Then,
\begin{equation}
\E \|\vztilde - \vzhat \|^2 \le \E\|\vzhat - \vz\|^2 (2\Delta + \Delta^2),
\label{eq:pertubation theorem}
\end{equation}
where
\begin{equation*}
\Delta \bydef \|\mSigmatilde\mSigma^{-1}-\mSigmatilde^{-1}\mSigma\|_2.
\end{equation*}
\end{proposition}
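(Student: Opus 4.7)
The plan is to (i) use $\vone^T(\vwtilde-\vw)=0$ to rewrite the target expectation as a $\mSigma$-weighted norm of $\vd\bydef\vwtilde-\vw$, (ii) apply KKT optimality of $\vw$ to reduce the claim to the multiplicative bound $\vwtilde^T\mSigma\vwtilde\le(1+\Delta)^2\vw^T\mSigma\vw$, and (iii) establish that multiplicative bound via a generalized-Rayleigh chain paired with the per-eigenvalue estimate $|\mu_i-\mu_i^{-1}|\le\Delta$ for the generalized eigenvalues $\mu_i$ of the pencil $(\mSigmatilde,\mSigma)$.

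For (i), $\mZ\vd=\vz\,\vone^T\vd=\vzero$, so $\vztilde-\vzhat=(\mZhat-\mZ)\vd$ and $\E\|\vztilde-\vzhat\|^2=\vd^T\mSigma\vd$. For (ii), KKT for \eref{eq:optimization} yields $\lambda\in\R$ and $\vmu\succeq 0$ with $\mSigma\vw=\lambda\vone+\vmu$ and $\vmu^T\vw=0$, so $\vd^T\mSigma\vw=\lambda\vone^T\vd+\vmu^T\vwtilde=\vmu^T\vwtilde\ge 0$. Expanding $\vwtilde^T\mSigma\vwtilde=\vw^T\mSigma\vw+2\vd^T\mSigma\vw+\vd^T\mSigma\vd$ produces $\vd^T\mSigma\vd\le\vwtilde^T\mSigma\vwtilde-\vw^T\mSigma\vw$, and the identity $(1+\Delta)^2-1=2\Delta+\Delta^2$ then reduces \eref{eq:pertubation theorem} to the multiplicative bound.

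For (iii), let $\mu_1,\ldots,\mu_K>0$ denote the eigenvalues of $\mSigmatilde\mSigma^{-1}$, equivalently the eigenvalues of the symmetric positive-definite matrix $\mSigma^{-1/2}\mSigmatilde\mSigma^{-1/2}$ (so the eigenvalues of $\mSigmatilde^{-1}\mSigma$ are $\mu_i^{-1}$). The generalized Rayleigh quotient $\mu_{\min}\le\vv^T\mSigmatilde\vv/\vv^T\mSigma\vv\le\mu_{\max}$ and the optimality $\vwtilde^T\mSigmatilde\vwtilde\le\vw^T\mSigmatilde\vw$ combine into
\[
\vwtilde^T\mSigma\vwtilde \,\le\, \tfrac{1}{\mu_{\min}}\vwtilde^T\mSigmatilde\vwtilde \,\le\, \tfrac{1}{\mu_{\min}}\vw^T\mSigmatilde\vw \,\le\, \tfrac{\mu_{\max}}{\mu_{\min}}\vw^T\mSigma\vw,
\]
reducing the task to proving $\mu_{\max}/\mu_{\min}\le(1+\Delta)^2$.

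That spectral bound is the main obstacle, and the key lies in a pairing between the left and right eigenstructures of $\mSigmatilde\mSigma^{-1}$ and $\mSigmatilde^{-1}\mSigma$. Transposing $\mSigmatilde\mSigma^{-1}\vv_i=\mu_i\vv_i$ yields $\vv_i^T(\mSigmatilde^{-1}\mSigma)^{-1}=\mu_i\vv_i^T$, and right-multiplying by $\mSigmatilde^{-1}\mSigma$ gives the reciprocal relation $\vv_i^T\mSigmatilde^{-1}\mSigma=\mu_i^{-1}\vv_i^T$. Thus $\vv_i^T(\mSigmatilde\mSigma^{-1}-\mSigmatilde^{-1}\mSigma)\vv_i=(\mu_i-\mu_i^{-1})\|\vv_i\|^2$, and the submultiplicative estimate $|\vv^T\mA\vv|\le\|\mA\|_2\|\vv\|^2$ forces $|\mu_i-\mu_i^{-1}|\le\Delta$ for every $i$. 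From this scalar inequality, $\mu\ge 1$ gives $\mu^2-\Delta\mu-1\le 0$, whence $\mu\le(\Delta+\sqrt{\Delta^2+4})/2\le 1+\Delta$ (using $\sqrt{\Delta^2+4}\le 2+\Delta$); symmetrically $\mu\le 1$ yields $\mu^{-1}\le 1+\Delta$. Applying these to $\mu_{\max}$ and $\mu_{\min}$ respectively gives $\mu_{\max}\le 1+\Delta$ and $\mu_{\min}^{-1}\le 1+\Delta$, hence $\mu_{\max}/\mu_{\min}\le(1+\Delta)^2$, closing the argument.
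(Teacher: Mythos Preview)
Your proof is correct. Steps (i) and (ii) coincide with the paper's opening: both rewrite $\E\|\vztilde-\vzhat\|^2$ as $(\vwtilde-\vw)^T\mSigma(\vwtilde-\vw)$ and use first-order optimality of $\vw$ on $\Omega$ to drop the cross term, arriving at the bound $\vwtilde^T\mSigma\vwtilde-\vw^T\mSigma\vw$.

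The two arguments diverge in step (iii). The paper introduces an auxiliary quantity
\[
\delta=\max\Bigl(\Bigl|\tfrac{\vwtilde^T\mSigma\vwtilde}{\vwtilde^T\mSigmatilde\vwtilde}-1\Bigr|,\ \Bigl|\tfrac{\vw^T\mSigmatilde\vw}{\vw^T\mSigma\vw}-1\Bigr|\Bigr),
\]
bounds $\vwtilde^T\mSigma\vwtilde-\vw^T\mSigma\vw\le(\vw^T\mSigma\vw)(2\delta+\delta^2)$ by a telescoping manipulation, and finally shows $\delta\le\Delta$ via the scalar inequality $|1-t|\le|t-1/t|$ applied to the generalized eigenvalues. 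You instead chain Rayleigh quotients directly to reduce everything to $\mu_{\max}/\mu_{\min}\le(1+\Delta)^2$, and then prove the per-eigenvalue bound $|\mu_i-\mu_i^{-1}|\le\Delta$ by the neat observation that a right eigenvector of $\mSigmatilde\mSigma^{-1}$ is simultaneously a left eigenvector of $\mSigmatilde^{-1}\mSigma$ with reciprocal eigenvalue. Your route is shorter and makes the passage from eigenvalues to the operator norm $\Delta$ fully explicit (the paper's final inequality $\max_i|1/\lambda_i-\lambda_i|\le\|\mSigma\mSigmatilde^{-1}-\mSigma^{-1}\mSigmatilde\|_2$ is asserted somewhat tersely, since the two matrices in the difference do not obviously share eigenvectors). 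The paper's $\delta$ formulation, on the other hand, isolates exactly the two multiplicative deviations that matter and would adapt more readily if one wanted a bound in terms of a one-sided quantity rather than the symmetric $\Delta$.
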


\begin{proof}
The proof is given in the Appendix. Our proof simplifies the multi-block concept of \cite{Lavancier_Rochet_2016}. We also utilize the generalized Rayleigh quotient idea to obtain the bound.
\end{proof}

The implication of Proposition~\ref{prop:perturbation} can be seen from the two terms on the right hand side of \eref{eq:pertubation theorem}. First, $\E\|\vzhat - \vz\|^2$ measures the bias between the oracle combination $\vzhat$ and the ground truth $\vz$. That it is an upper bound in \eref{eq:pertubation theorem} implies that the perturbed estimate is upper limited by the bias. The second term $\Delta$ measures the closeness between the oracle covariance $\mSigma$ and the estimated covariance $\mSigmatilde$. If $\mSigma\mSigmatilde^{-1} = \mI$, then $\Delta = 0$ and so the perturbation is minimized. In practice, if $\mSigmatilde$ can be estimated in $n$ random trials and if $\mSigma\mSigmatilde_n^{-1} \overset{p}{\rightarrow} \mI$ as $n \rightarrow \infty$, then we can also show that $\Delta \overset{p}{\rightarrow} 0$. (For example, use SURE on multiple noisy observations, if available.)

\section{MSE Estimator}
\label{sec:covariance estimator}
The key to make ($P_1$) succeed is an accurate covariance matrix $\mSigma$. Estimating the covariance matrix requires estimating the mean squared error (MSE). In this section we discuss a neural network solution.

\subsection{Why not SURE?}
In image processing, perhaps the most popular approach to estimate MSE is the Stein's Unbiased Risk Estimator (SURE). (See, e.g., \cite{Blu2007,Luisier2007} for illustrations, \cite{Ramani2008} for a Monte-Carlo version, and \cite{Cha2018} for a recent work using SURE in deep neural network.) As its name suggested, SURE is an unbiased estimator of the true MSE, i.e., the estimator will approach to the true MSE as the number of samples grows.

While SURE-based estimators work well in ideal situations, it also has many shortcomings:
\begin{itemize}
\item Large Variance. SURE only provide \emph{average performance} guarantee. For Monte-Carlo SURE, there is another level of randomness due to the Monte-Carlo scheme. Therefore, given a single noisy image, SURE can be inaccurate, especially for non-linear denoises such as BM3D.
\item Clipped Noise. SURE is designed to handle additive i.i.d. Gaussian noise. However, most real images are clipped to $[0,1]^N$. Most neural network denoisers also clip the signal to stabilize training. If the observed image is clipped, then SURE will fail \cite{Foi_2009}.
\item Beyond Denoisers. While SURE is a good choice for image denoising problems, one has to re-derive the SURE equations for different forward models, e.g., deblurring or super-resolution. This severely limits the generality of the present optimal combination framework.
\end{itemize}

To illustrate the problems of SURE, we conduct two experiments comparing SURE and the proposed neural network approach. The task of the experiments is to denoise the \texttt{cameraman256} image, corrupted by i.i.d. Gaussian noise of different noise levels. In the first experiment, the i.i.d. Gaussian noise is unclipped so that the theory of SURE applies. The result of this experiment is shown in \fref{fig:SURE vs MSE}(a). The average of SURE (over 100 random trials of different noise realizations) is very similar to the true MSE, something we expect from the theory. However, the variance of SURE is big; indeed very big. If we use SURE to construct a $\mSigma$, the resulting $\mSigma$ can be bad.

The second experiment modifies the i.i.d. Gaussian noise to clipped Gaussian so that the resulting signal is bounded to $[0,1]$. We argue that the clipped noise is more realistic because no physical sensor can produce a signal level below 0 or beyond 1. When the noise is clipped, the symmetry of Gaussian distribution is destroyed and the clipping is signal dependent. As a result, the MSE predicted by SURE is significantly off from the theory. \fref{fig:SURE vs MSE}(b) illustrates the result. SURE produces a completely opposite trend of the MSE whereas the NN produces a more reasonable estimate.

\begin{figure}[h]
\centering
\begin{tabular}{c}
\includegraphics[width=0.8\linewidth]{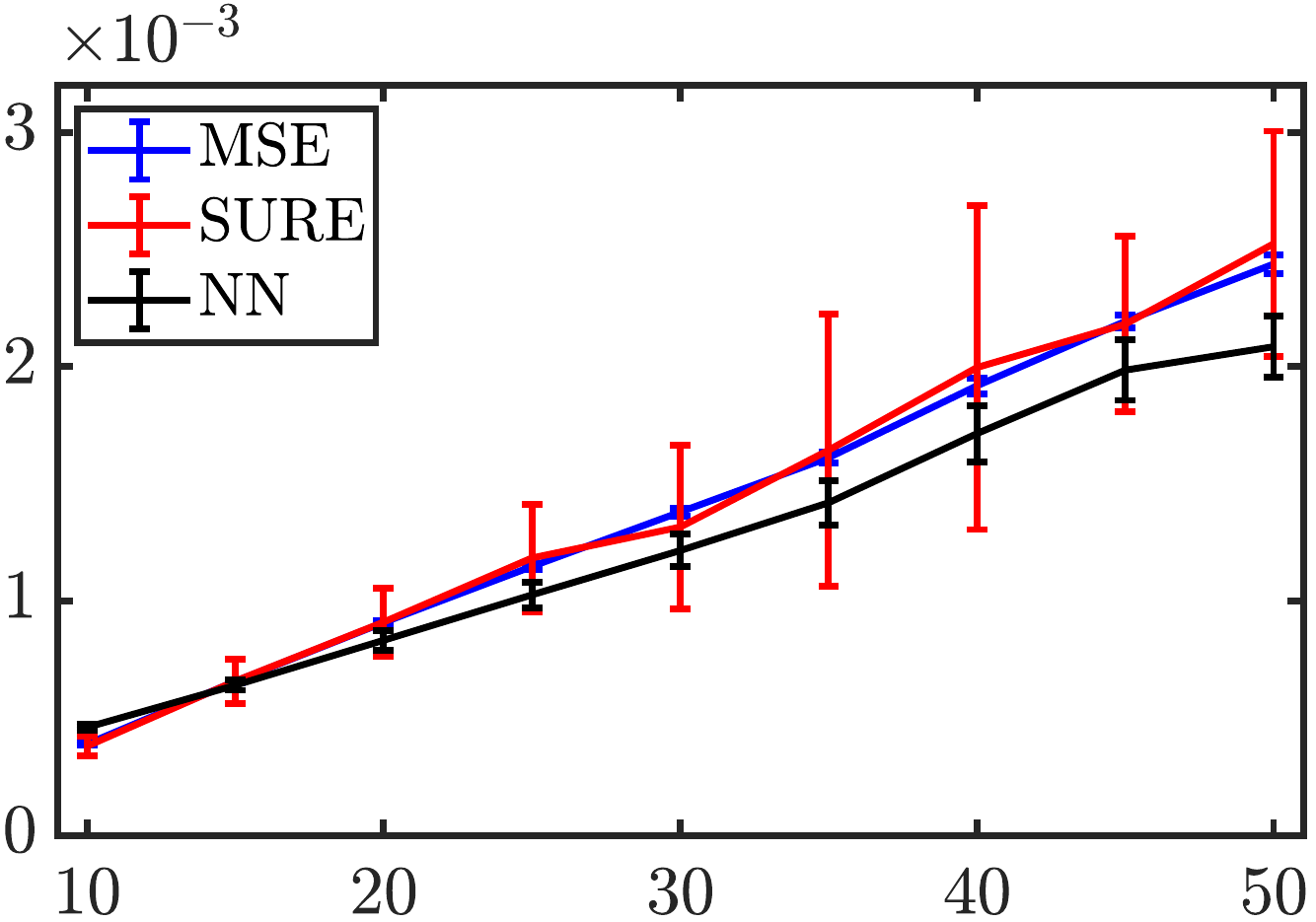}\\
\footnotesize{(a)}\\
\includegraphics[width=0.8\linewidth]{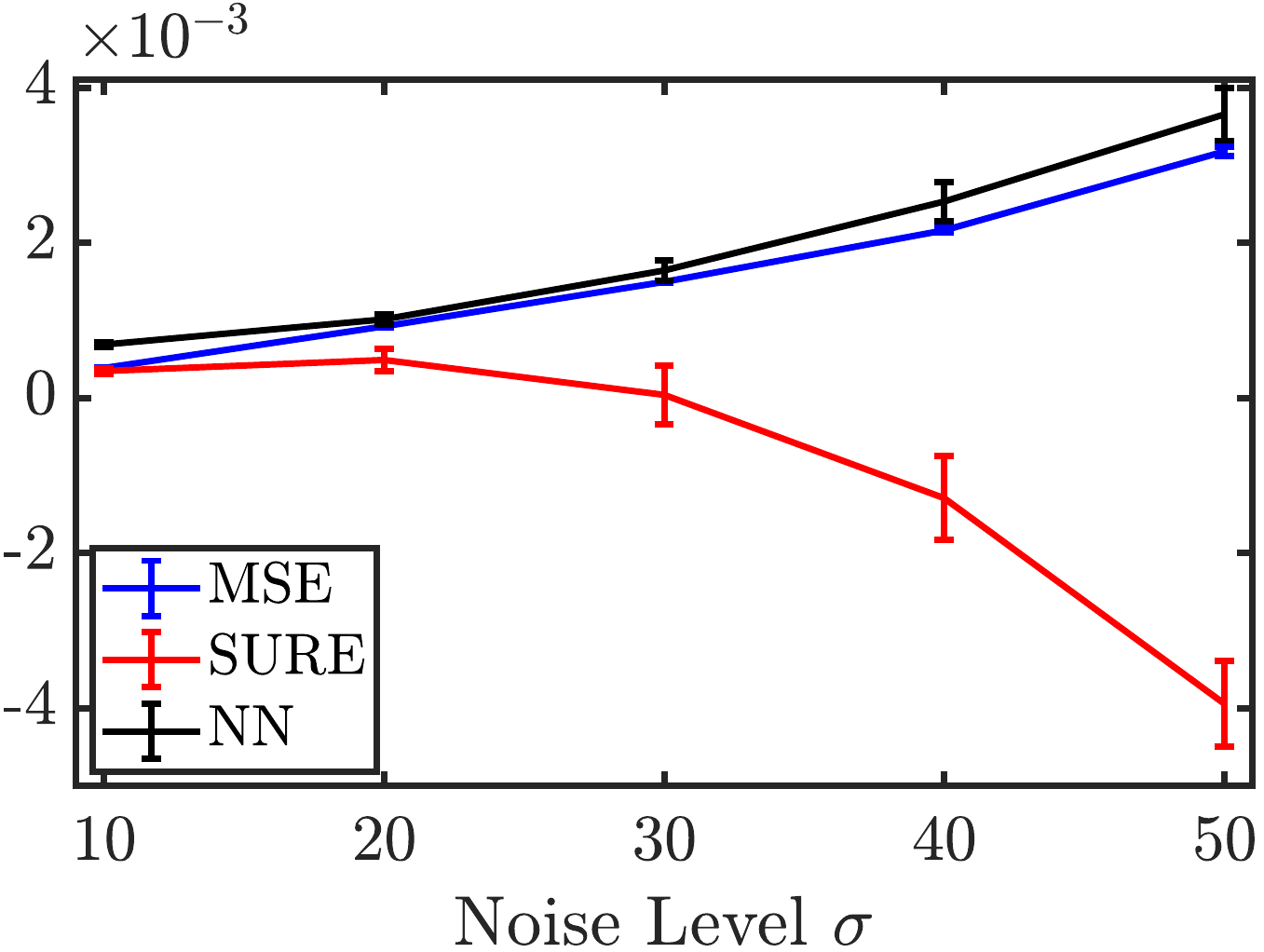}\\
\footnotesize{(b)}
\end{tabular}
\caption{(a) Unclipped and (b) Clipped Noise Examples. Compare SURE and the proposed neural network (NN) on estimating the MSE. In this experiment, we use BM3D to denoise the \texttt{cameraman} image. The noise level changes from $\sigma = 10$ to $\sigma = 50$. The observed images are clipped to $[0,1]^N$. The error bars are computed using 50 random trials of the i.i.d. Gaussian noise realizations. Dotted lines indicate the max and min of the realizations.}
\label{fig:SURE vs MSE}
\end{figure}

\subsection{Neural Network MSE Estimator}
Our proposed solution is a deep neural network MSE estimator. Using deep neural networks for image quality assessment is an active research topic  \cite{Kang2014,Li_Po_Feng_2016,Bosse_Maniry_Wiegand_2016,Kim_Zeng_Ghadiyaram_2017,Li_Ye_Li_2017}. However, the existing neural network based image quality assessment methods are tailored to predict the human visual system responses when presenting an image to a user. A pure MSE estimator is not common. To the best of our knowledge, the only existing MSE estimator is \cite{Bosse_Maniry_Wiegand_2016}. However, the MSE estimator in \cite{Bosse_Maniry_Wiegand_2016} is used to quantify \emph{noisy} images, i.e., the amount of noise. An MSE estimator for \emph{denoised} images does not currently exist.

The proposed neural network based MSE estimator is shown in \fref{fig:mse_estimator}. There are two unique features of the network. First, the input to the network is a pair of images $(\vy,\vzhat_k)$, i.e., the noisy observation and the $k$-th denoised image. Using both $\vy$ and $\vzhat_k$ is reminiscent to the SURE approach, as $\vy$ provides noise statistics that cannot be obtained from $\vzhat_k$ alone.

Second, instead of feeding the entire image into the network, we partition the image into non-overlapping patches of size $64\times64$. That is, if we denote the MSE of the $i$-th patch of the $k$-th denoiser as $\widetilde{\MSE}_{k,i} \bydef \widetilde{\MSE}(\vy_i, \vzhat_{k,i})$, then the overall MSE of the $k$-th denoiser is
$$
\widetilde{\MSE}_k = \frac{1}{M}\sum_{i=1}^M \widetilde{\MSE}(\vy_i, \vzhat_{k,i}),
$$
where $\vy_i$ is the $i$-th patch of $\vy$, $\vzhat_{k,i}$ is the $i$-th patch of $\vzhat_k$, and $M$ is the number of non-overlapping patches in the image. Partitioning the image into small patches reduces the breath and depth of the neural network.

The network consists of 8 convolutional layers, 3 maxpool layers and 2 fully connected layers. The inputs to the network are the $i$-th noisy patch $\vy_i$ and the $i$-th denoised patch $\vzhat_{k,i}$ of the $k$-th denoiser. The patches separately pass through two convolutional layers, and then concatenate and pass over four convolutional layers. The convoultional layers use $3\times 3$ kernels with zero-padding and the rectifier activation function (ReLU). We apply maxpool layer with $2 \times 2$ kernel every two convoultional layer. Fully connected layers use ReLU and dropout regularization of ratio 0.5. The cost function is the $L_1$-loss, defined as
\begin{equation}
\label{eq:mse_loss}
L = \abs{\MSE_{k,i} - \widetilde{\MSE}_{k,i}}
\end{equation}
where $\MSE_{k,i}$ is the true MSE of $i$-th block of the $k$-th denoiser. For implementation, we use ADAM optimizer \cite{Kingma2014} with learning rate $\alpha=10^{-4}$.

\begin{figure*}[t]
	\centering
	\includegraphics[width=0.8\linewidth]{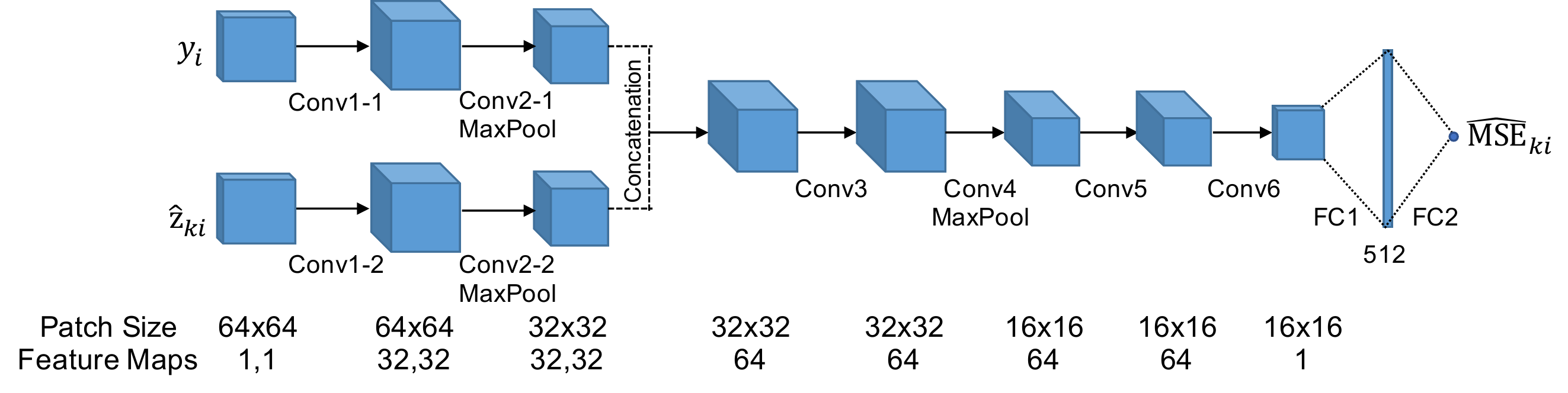}
	\caption{Network structure of a proposed MSE Estimator.}
	\label{fig:mse_estimator}
\end{figure*}

The training data we use is the 300 Training and Validation images in BSD500. For each image, we randomly extracted 32 patches of size $64 \times 64$ and generate 6 variations by flipping horizontally and vertically and rotating at $0^\circ$, $90^{\circ}$, $180^{\circ}$ and $270^{\circ}$. The noise level is $\sigma \in [1,60]$, with clipping to $[0,1]^N$. To prepare denoised images for training the networks, we use five pre-trained REDNets \cite{Mao2016} at noise levels $\sigmahat = 10, 20, 30, 40, 50$. Therefore, for every noisy input we generate multiple denoised images, and every denoised image forms an input-output pair with the ground truth MSE. We trained the MSE estimator network with 100 epochs for around 7 hours.

\subsection{Comparison with SSDA}
Readers familiar with the image denoising literature may ask about the difference between the proposed method and the AMC-SSDA method by Agostinelli et al. \cite{Agostinelli2013} (or SSDA in short). The SSDA method is an end-to-end neural network for denoising images of different noise types, e.g., salt-pepper, Gaussian, and Poisson. We are not interested in this problem because it is less common to have an image denoising problem where the noise type is totally blind. In contrast, it is more likely to have multiple denoisers for different noise levels (Section~\ref{sec:experiment 1}), different image classes (Section~\ref{sec:experiment 2}), and different denoiser types (Section~\ref{sec:experiment 3}).

There are other differences. First, the SSDA has a set of fixed neural network denoisers. In contrast, CsNet can support \emph{any} initial denoisers. Second, the weight prediction of the SSDA is done using a neural network which does not have optimality guarantee. CsNet, however, is optimal in the MMSE sense. Additionally, CsNet estimates the MSE (which is a scalar) from an image. This is easier than estimating the weight vector in SSDA. Third, CsNet can be generalized to other estimation problems such as deblurring and super-resolution. SSDA, however, has limited generalization capability because the initial estimators are limited to SSDA.

\section{Booster Network}
\label{sec:booster}
In our proposed CsNet, besides the convex optimization algorithm and the MSE estimator, there is a third component known as the booster. The booster is used to improve the combined estimates by enhancing the contrast and to recover lost details. To provide readers a quick preview of the booster, we show a few examples in \fref{fig:booster}.

\subsection{What is a Booster?}
The concept of boosting can be traced back to at least the 70's, when Tukey \cite{Tukey_1977} proposed a ``twicing procedure''. In machine learning, the same concept was studied by B\"{u}hlmann and Yu \cite{Buhlmann_Yu_1998}. The essential step of boosting is simple: Given a current estimate $\vzhat^{(t)}$ and the observation $\vy$, we construct a mapping $\calB: \R^N \rightarrow \R^N$ (usually another denoising algorithm), and then define the next estimate $\vzhat^{(t+1)}$ in terms of $\vzhat^{(t)}$, $\vy$ and $\calB$ with the goal to improve the MSE. In Tukey's ``twicing'', the relationship between $\vzhat^{(t)}$ and $\vzhat^{(t+1)}$ is
\begin{equation}
\vzhat^{(t+1)} = \calB(\vy - \vzhat^{(t)}) + \vzhat^{(t)}.
\label{eq:twicing}
\end{equation}
Thus, if $\calB$ is a denoiser, then $\calB(\vy - \vzhat^{(t)})$ is the filtered version of the residue. As shown in \cite{Charest_Milanfar_2008}, MSE is not monotonically decreasing as $t \rightarrow \infty$ because of the bias-variance trade-off. However, with proper monitoring such as cross-validation, MSE can be minimized by stopping the boosting procedure before saturation. (See additional discussion for the image denoising problem in \cite{Talebi_Zhu_Milanfar_2013}.)


In the image denoising literature, the above idea of boosting has been studied in multiple places such as \cite{Charest_Elad_Milanfar_2006,Charest_Milanfar_2008,Talebi_Zhu_Milanfar_2013}. There are several variations, e.g., Osher's iterative regularization \cite{Osher_Burger_Goldfarb_2005}, and Romano and Elad's SOS \cite{Romano2015}. In all these boosting methods, the idea is the take the noisy input and the estimate $\vzhat^{(t)}$ to recursively update the estimate. Table~\ref{table:booster} shows a comparison of different denoising boosters.

\begin{table}[h]
\begin{tabular}{cl}
\hline
Method & Idea \\
\hline\hline
Twicing \cite{Tukey_1977, Buhlmann_Yu_1998}   & $\vzhat^{(t+1)} = \calB(\vy - \vzhat^{(t)}) + \vzhat^{(t)}$\\
Osher et al. \cite{Osher_Burger_Goldfarb_2005}        & $\vzhat^{(t+1)} = \calB\left(\vy + \sum_{i=1}^t (\vy - \vzhat^{(t)}) \right)$\\
Charest-Milanfar \cite{Charest_Milanfar_2008}  & $\vzhat^{(t+1)} = \vy + \left(\vzhat^{(t)} - \calB(\vzhat^{(t)})\right)$\\
Talebi-Milanfar \cite{Talebi_Zhu_Milanfar_2013}& $\vzhat^{(t+1)} = \calB(\vy - \vzhat^{(t)}) + \vzhat^{(t)}$\\
Romano-Elad \cite{Romano2015}                  & $\vzhat^{(t+1)} = \calB(\vy + \vzhat^{(t)}) - \vzhat^{(t)}$\\
Proposed & $\vzhat^{(t+1)} = \calB_t(\vy, \vzhat^{(t)}) + \vzhat^{(t)}$\\
\hline
\end{tabular}
\caption{Different denoising boosters in the literature. Our proposed method generalizes the classical boosters by replacing $\calB$ with deep neural networks $\calB_t$. }
\label{table:booster}
\end{table}


\subsection{Deep Learning based Booster}
Our proposed neural network booster is motivated by the above examples of classical boosters. The specific network architecture is shown in \fref{fig:booster_structure}. Instead of using a deterministic function $\calB$, we use a multi-layer neural network as the building block of the booster. We then cascade the building blocks to form the overall booster.

Referring to \fref{fig:booster_structure}, if we denote the $t$-th building block as $\calB_t$, then the input-output relationship of $\calB_t$ is
\begin{equation}
\vzhat^{(t+1)} = \calB_t(\vy, \vzhat^{(t)}) + \vzhat^{(t)}.
\label{eq:twicing, NN}
\end{equation}
Clearly, \eref{eq:twicing, NN} is a generalization of \eref{eq:twicing} as $\calB_t$ now becomes a nonlinear mapping trained from the data. Also, when cascading a sequence $\{\calB_t\}$, we generalize \eref{eq:twicing} by allowing each $\calB_t$ to have its own network weights.

\begin{figure}[!]
	\centering
		\includegraphics[width=1\linewidth]{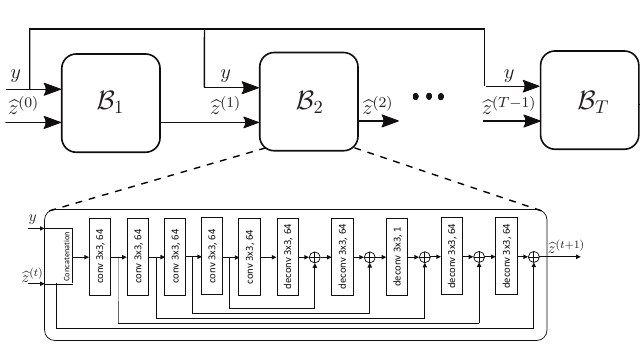}
	\caption{Network structure of the proposed booster network. The network contains 5 convolutional layers followed by 5 deconvolutional layers. Convolutional and deconvolutional layers consists of residual neural network blocks. Skip connections are used to enforce symmetry of the network. This network is repeated five times, i.e., $T=5$.}
	\label{fig:booster_structure}
	\vspace{-1.5em}
\end{figure}

The architecture of the $t$-th building block $\calB_t$ consists of 5 convolutional layers followed by 5 deconvolutional layers, each using kernels of size $3 \times 3$. The input to the network is the pair $(\vy,\vzhat^{(t)})$, which is concatenated to form a common input. The convolutional layers are used to smooth out the noisy input $\vy$, whereas the deconvolutional layers are used to recover the sharp details. Skip connections are used to ensure that signals are not attenuated as it passes through the layers. Note that we purposely add a skip connection from the input $\vzhat^{(t)}$ to the output $\vzhat^{(t+1)}$ to mimic the addition in \eref{eq:twicing}. We cascade $\calB_t$ for $t = 1,\ldots,T$, where $T$ is typically small ($T = 5$).

\begin{figure*}[!]
\centering
\begin{tabular}{cccccc}
    \multicolumn{2}{c}{\includegraphics[width=5.1cm]{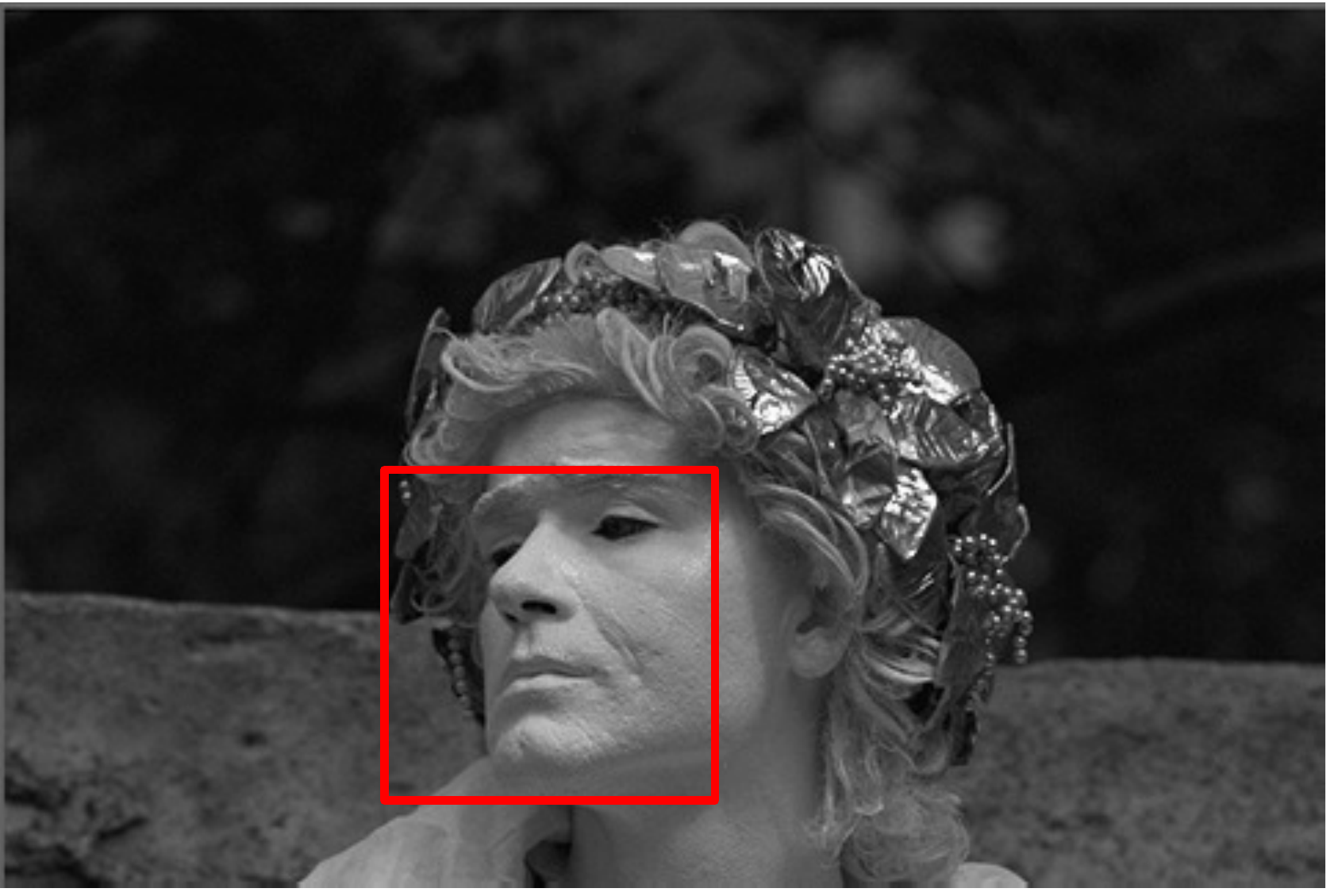}}&
    \multicolumn{2}{c}{\includegraphics[width=5.1cm]{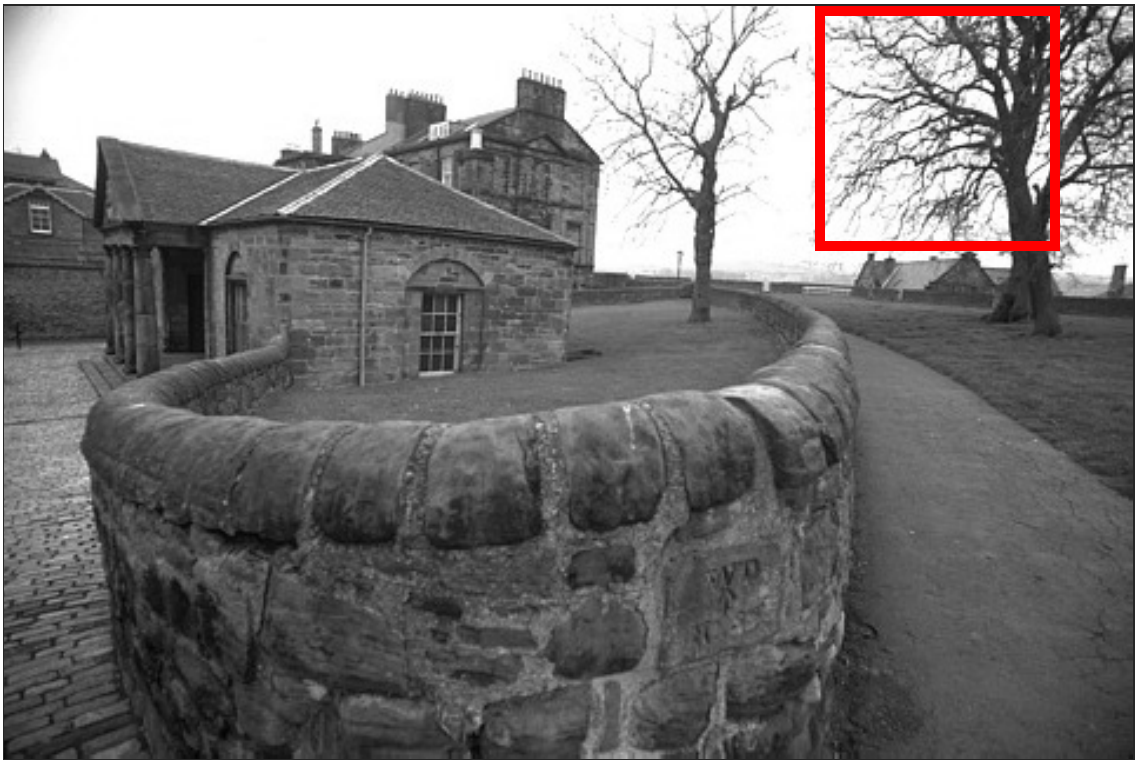}}&
    \multicolumn{2}{c}{\includegraphics[width=5.1cm]{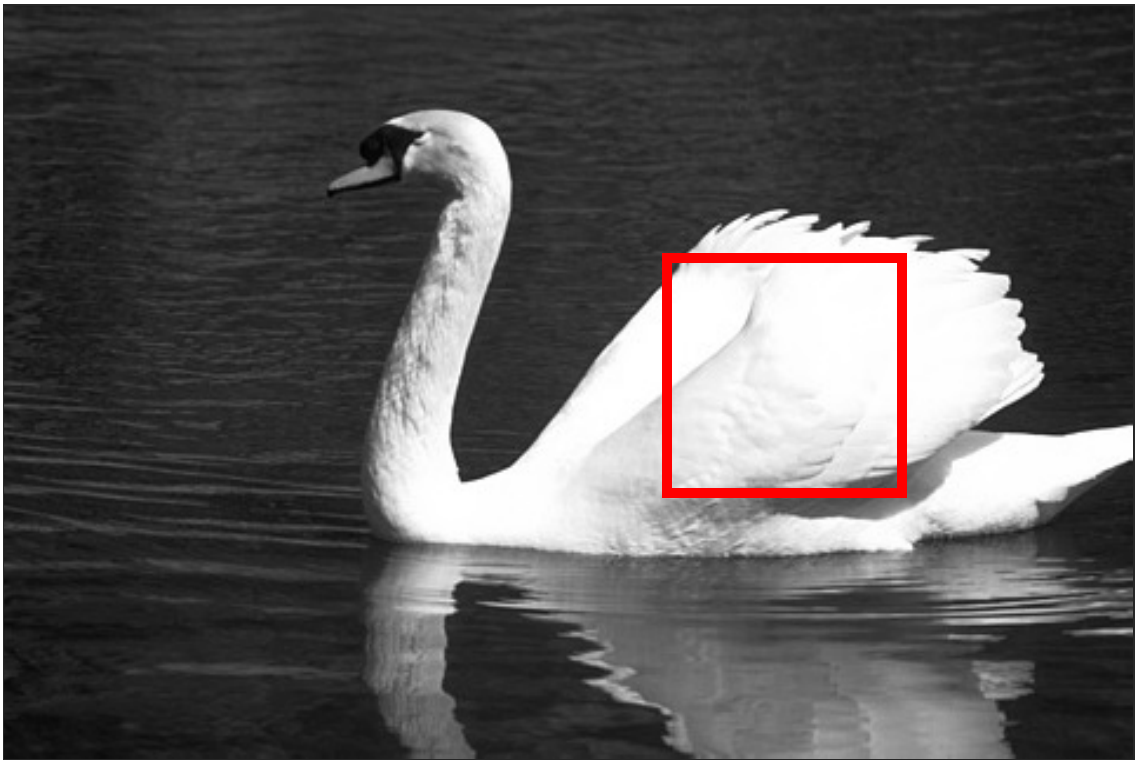}}\\
    \includegraphics[width=2.5cm]{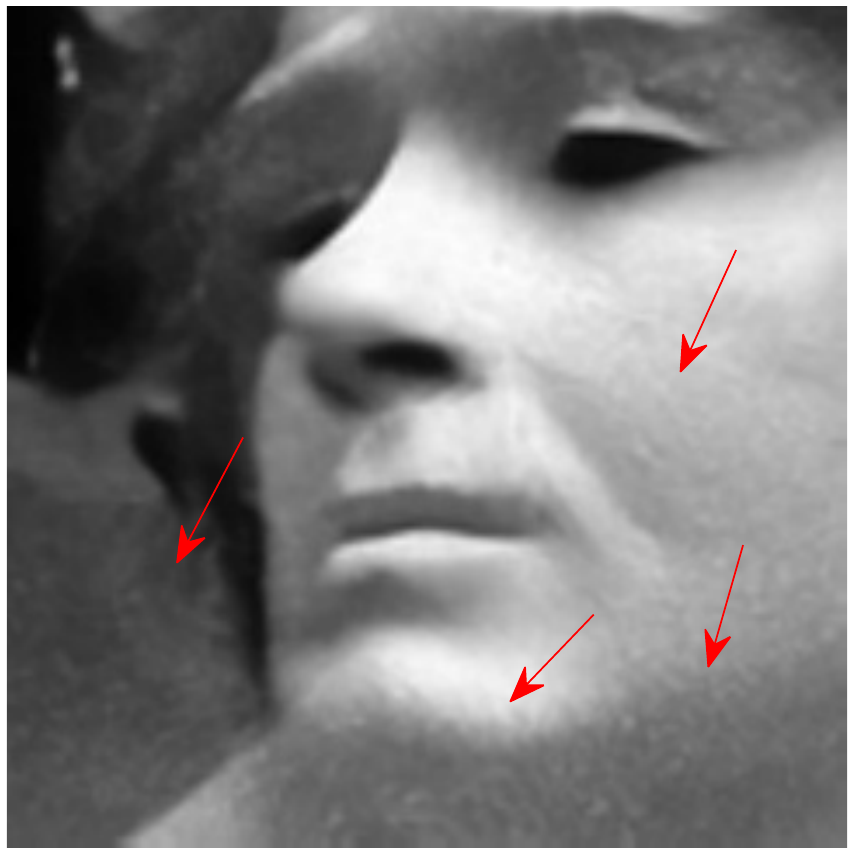} &
    \hspace{-2ex}\includegraphics[width=2.5cm]{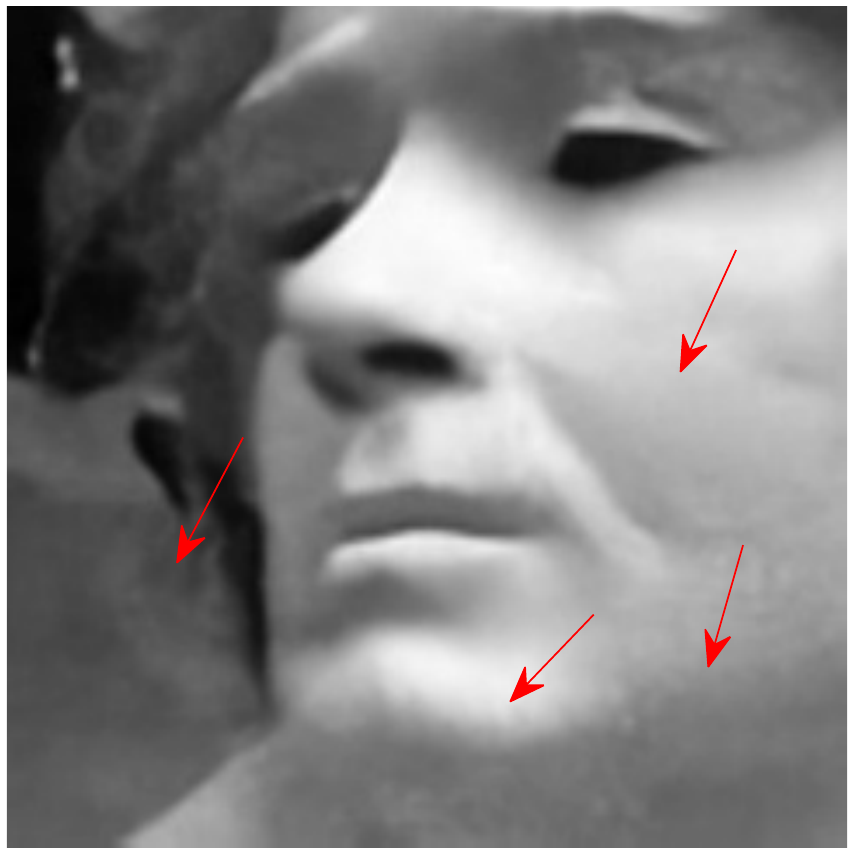}&
    \includegraphics[width=2.5cm]{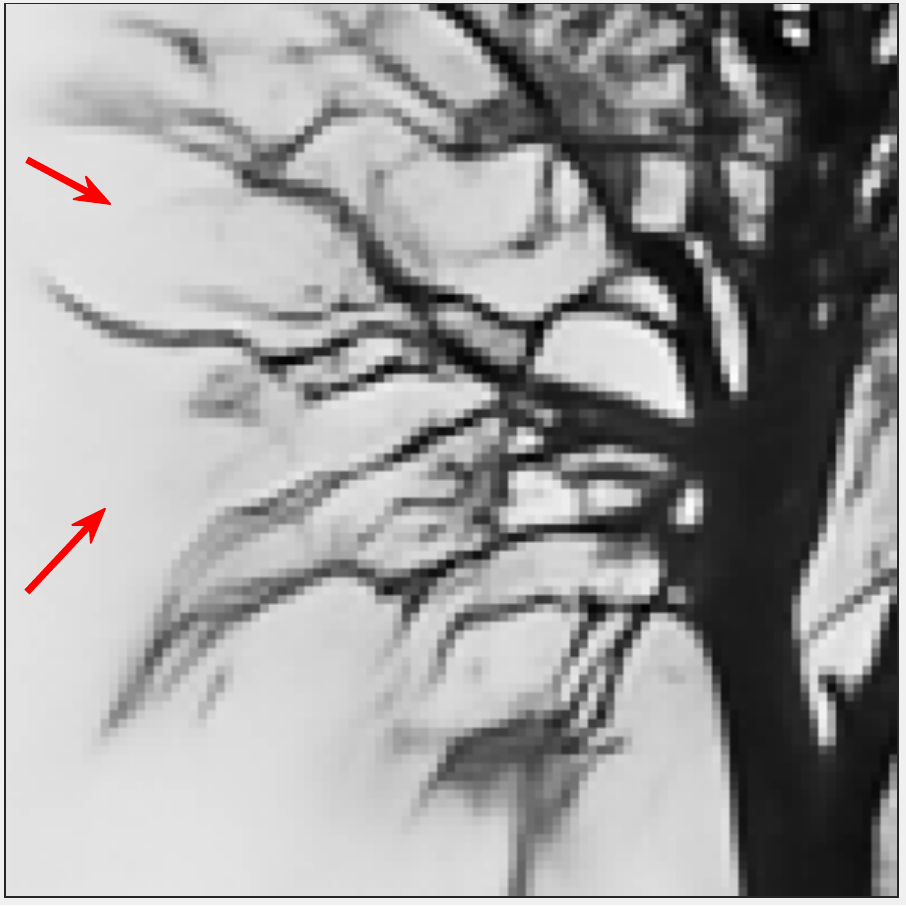} &
    \hspace{-2ex}\includegraphics[width=2.5cm]{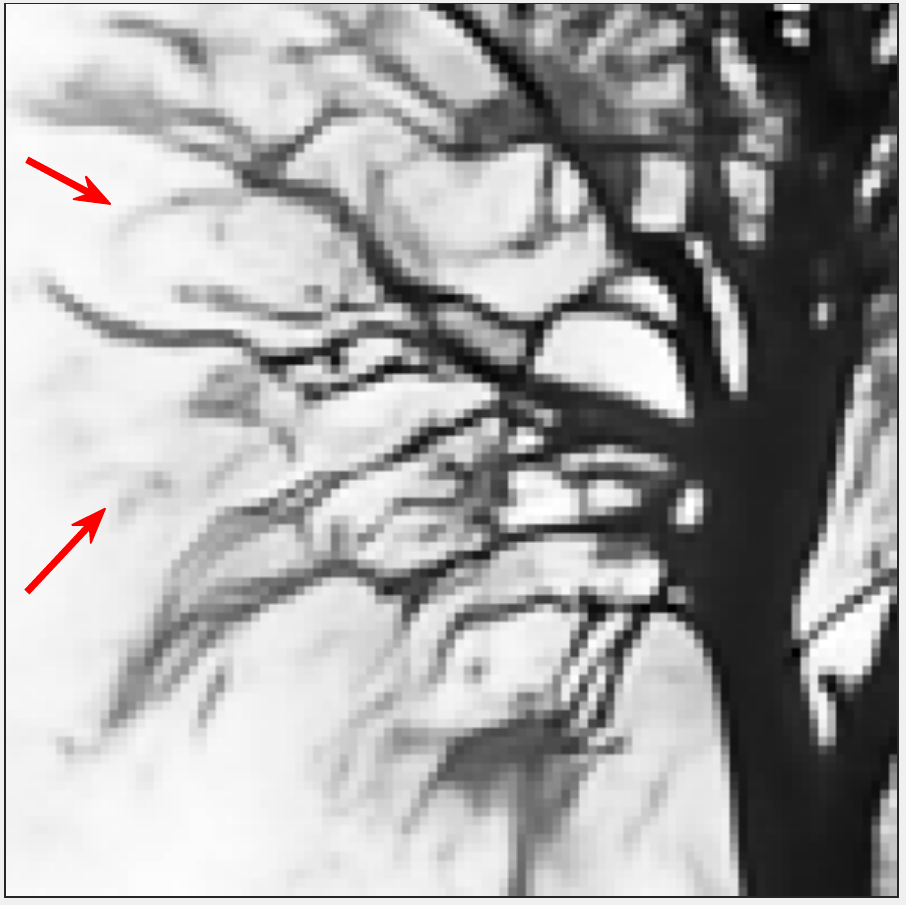}&
    \includegraphics[width=2.5cm]{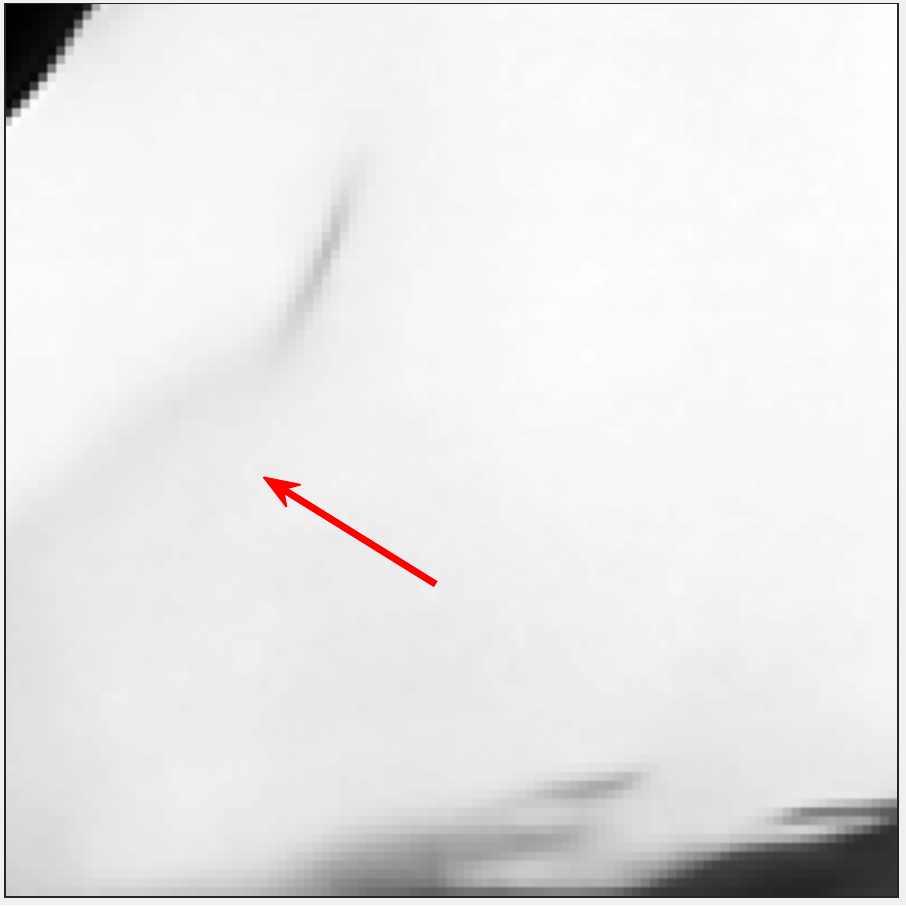} &
    \hspace{-2ex}\includegraphics[width=2.5cm]{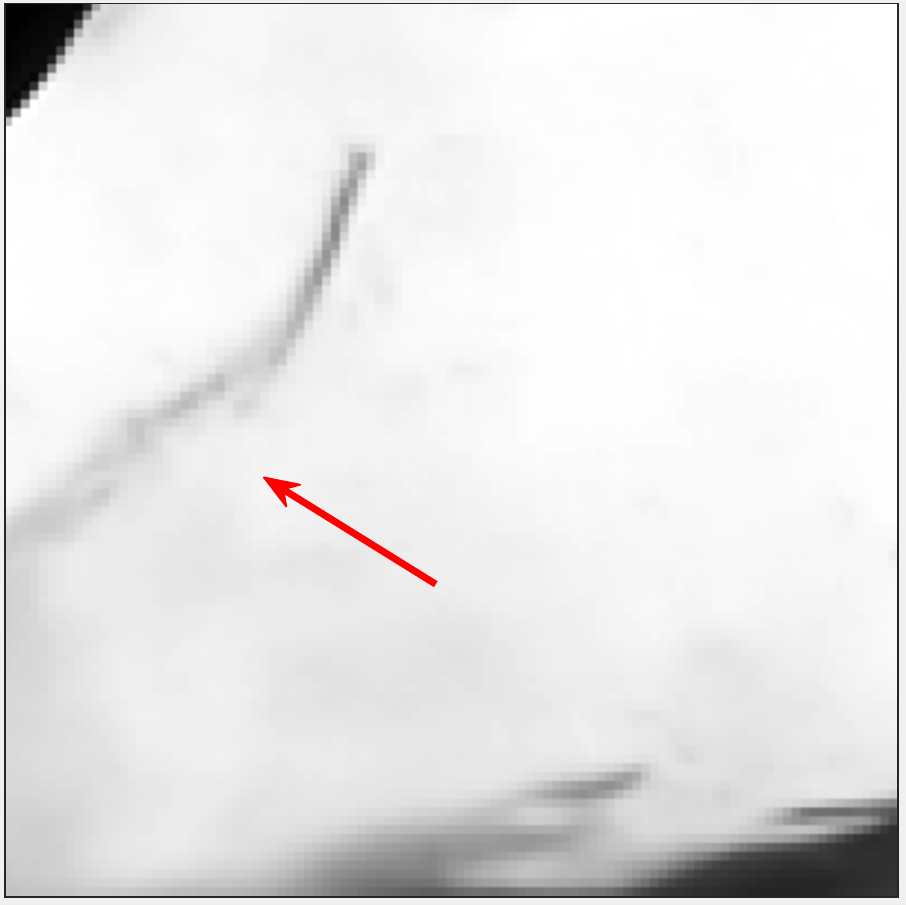}\\
    \footnotesize\begin{tabular}{@{}c@{}}29.08dB, 0.7939 \\  Before Booster\end{tabular}&
    \footnotesize\begin{tabular}{@{}c@{}}29.95dB, 0.8202 \\  After Booster\end{tabular}&
    \footnotesize\begin{tabular}{@{}c@{}}23.55dB, 0.6317 \\  Before Booster\end{tabular}&
    \footnotesize\begin{tabular}{@{}c@{}}24.75dB, 0.6334 \\  After Booster\end{tabular}&
    \footnotesize\begin{tabular}{@{}c@{}}26.63dB, 0.7483 \\  Before Booster\end{tabular}&
    \footnotesize\begin{tabular}{@{}c@{}}29.51dB, 0.7680 \\  After Booster\end{tabular}
\end{tabular}
\caption{Examples showing the effectiveness of the booster in improving the details and contrast of the combined result. See Section \ref{sec:experiment 3} for experiment details.}
\vspace{-3ex}
\label{fig:booster}
\end{figure*}

To train a booster, we feed the booster network with linearly combined estimates and the ground truths. The initial denoisers are the REDNets at different noise levels. The training data we use is the 300 train and validation images in BSD500. We extract 32 patches of size $64 \times 64$ from each training dataset. For each patch we generate 6 variations by flipping horizontally and vertically and rotating at $0^\circ$, $90^{\circ}$, $180^{\circ}$ and $270^{\circ}$. The cost function we use in training the booster network is the standard $L_1$-loss:
\begin{equation}
\label{eq:mse_loss}
L = \norm{\vz - \vzhat^{(T)}}_1
\end{equation}
where $\MSE_{k,i}$ is the true MSE of $i$-th block of the $k$-th denoiser. During the training, we use ADAM optimizer with learning rate $10^{-4}$. We trained booster network with 100 epochs for 12 hours.

\begin{figure*}[!]
	\hfill
	\captionsetup[subfigure]{labelformat=empty}
	\begin{subfigure}[b]{0.19\textwidth}
		\centering
		\includegraphics[width=\textwidth]{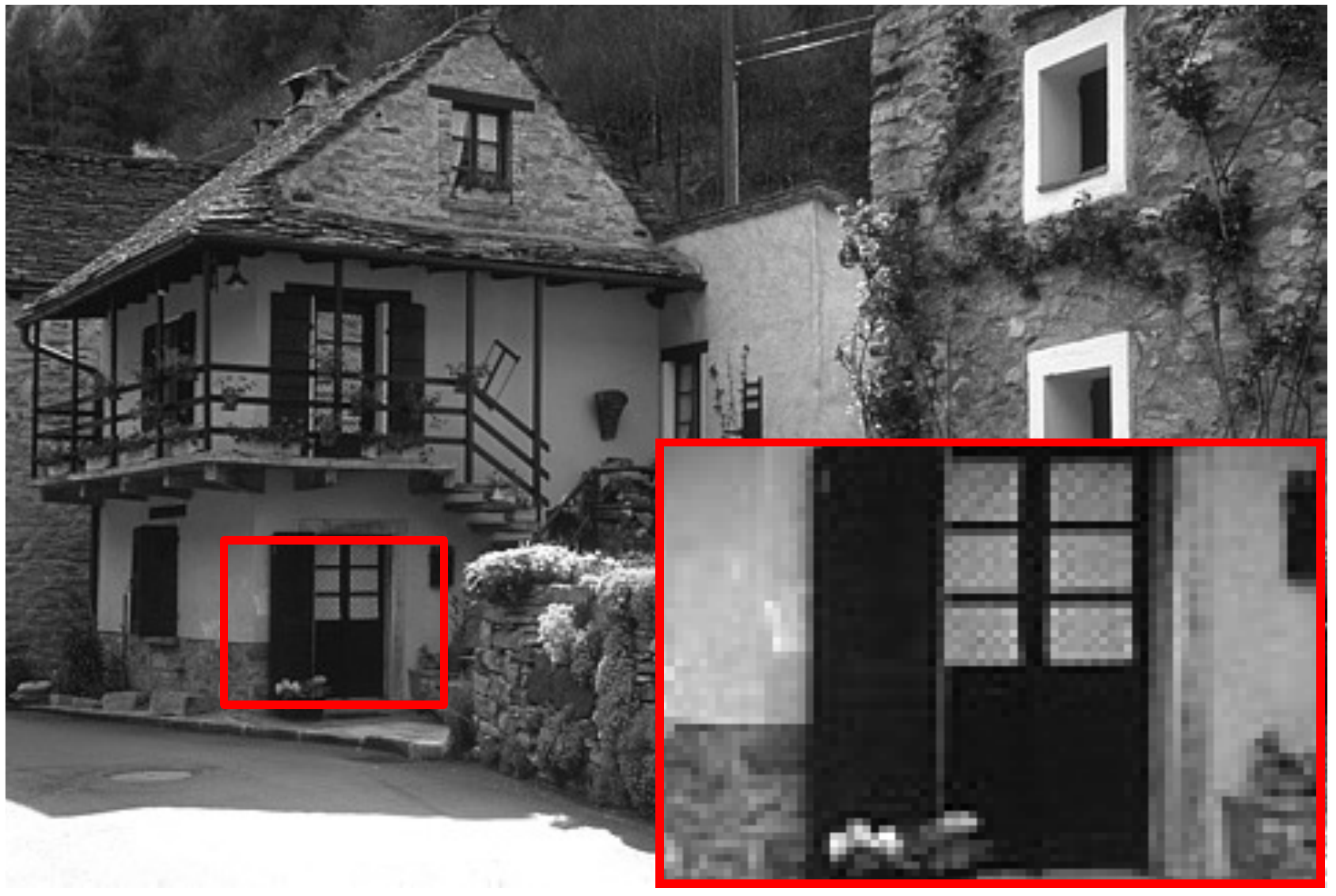}
		\caption{Groundtruth}
	\end{subfigure}
	\begin{subfigure}[b]{0.19\textwidth}
		\centering
		\includegraphics[width=\textwidth]{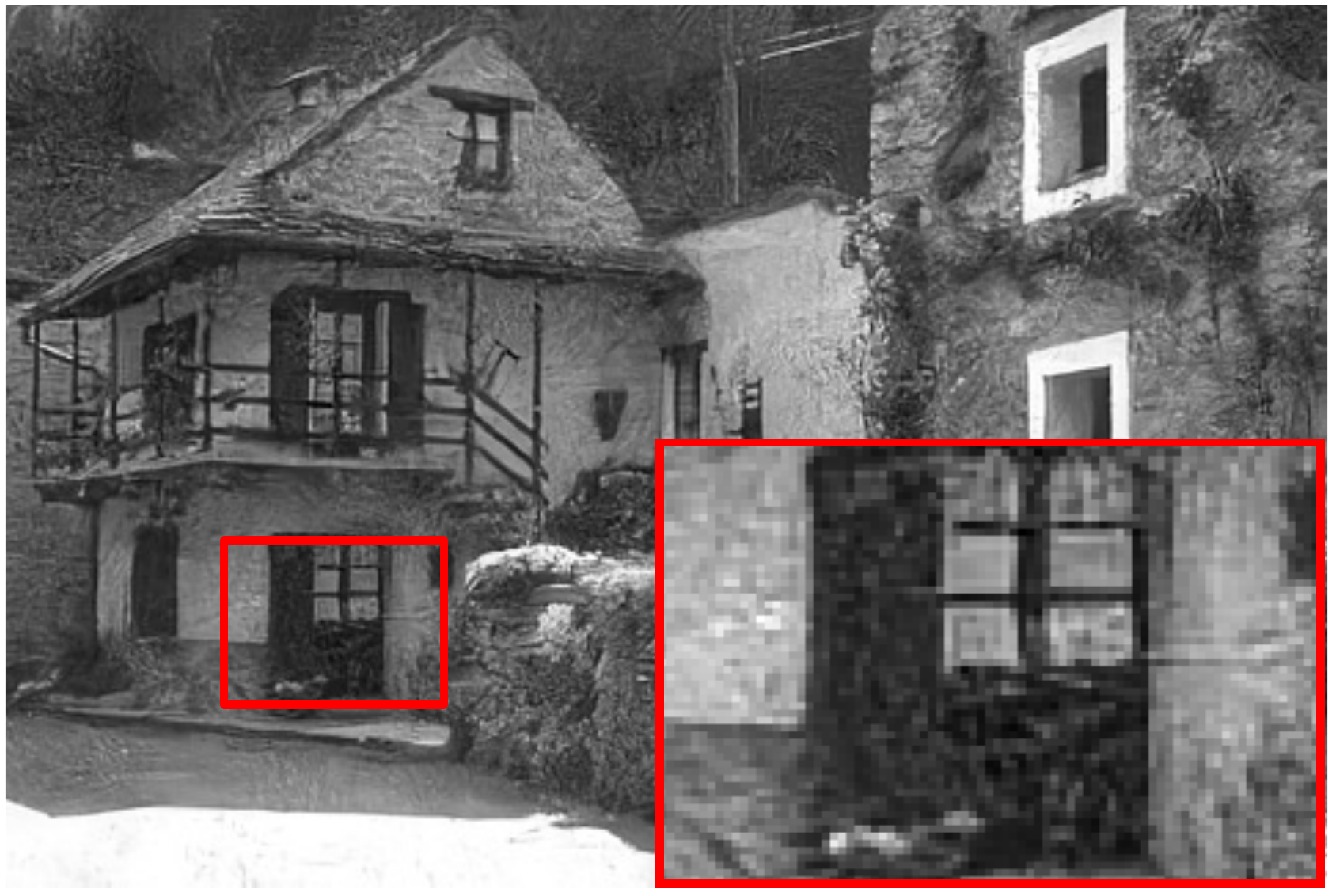}
		\caption{$\mbox{DnCNN}_{30}$, 24.49dB, 0.670}
	\end{subfigure}
	\begin{subfigure}[b]{0.19\textwidth}
		\centering
		\includegraphics[width=\textwidth]{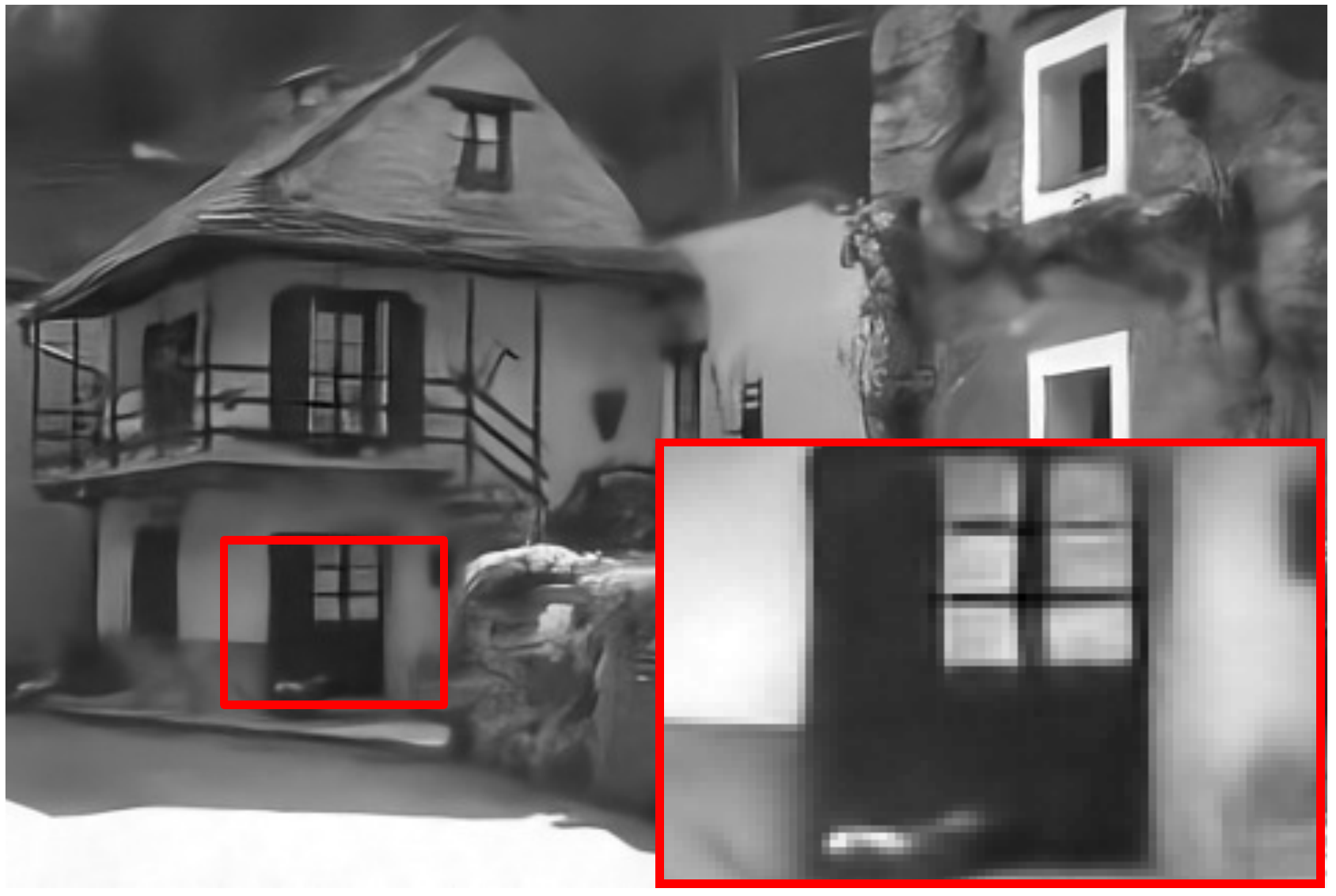}
		\caption{$\mbox{DnCNN}_{40}$, 24.88dB, 0.654}
	\end{subfigure}
	\begin{subfigure}[b]{0.19\textwidth}
		\centering
		\includegraphics[width=\textwidth]{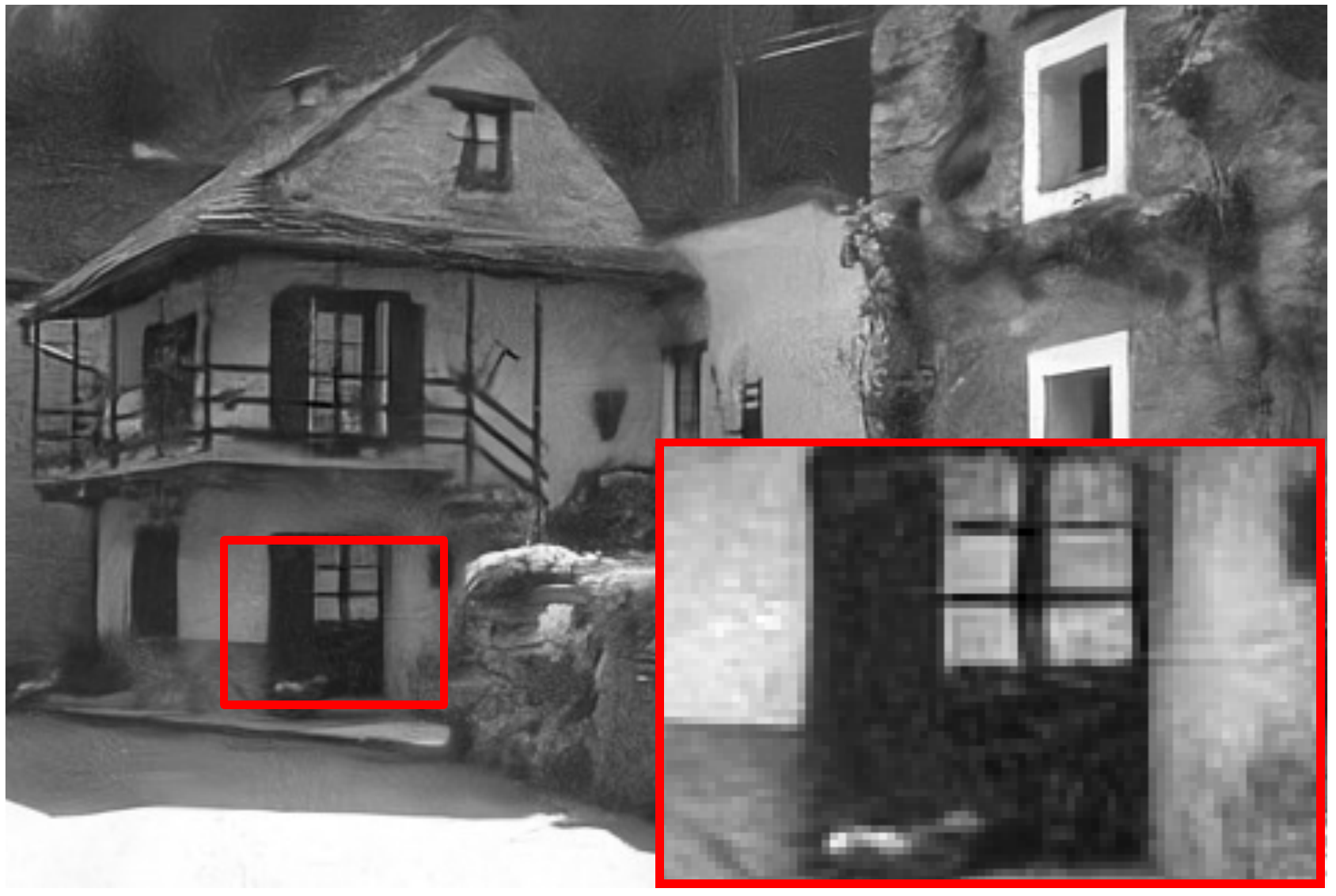}
		\caption{Before, 25.38dB, 0.703}
	\end{subfigure}
	\begin{subfigure}[b]{0.19\textwidth}
		\centering
		\includegraphics[width=\textwidth]{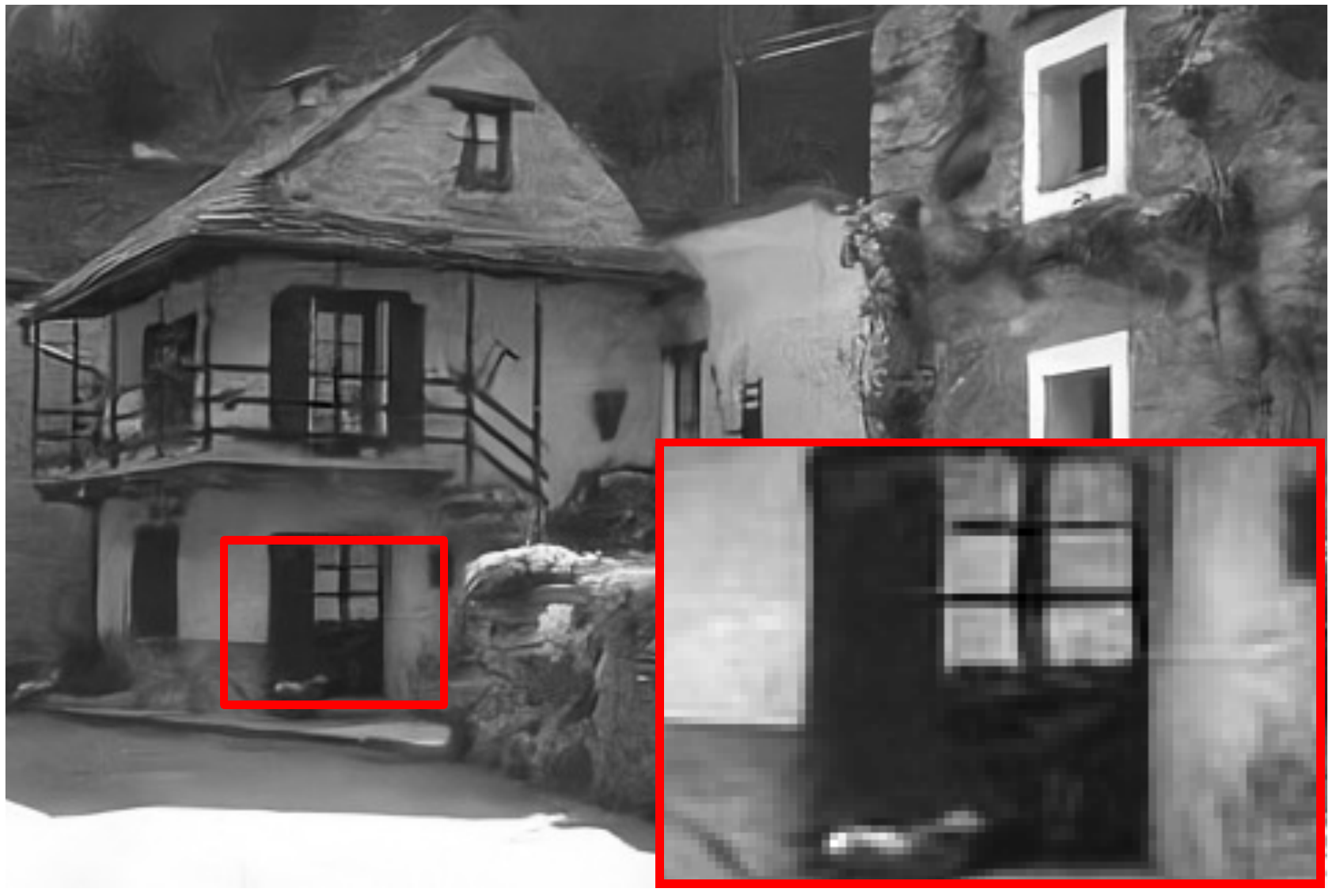}
		\caption{After, \textbf{25.44dB}, \textbf{0.710}}
	\end{subfigure}
	
	\hfill
	\begin{subfigure}[b]{0.19\textwidth}
		\centering
		\includegraphics[width=\textwidth]{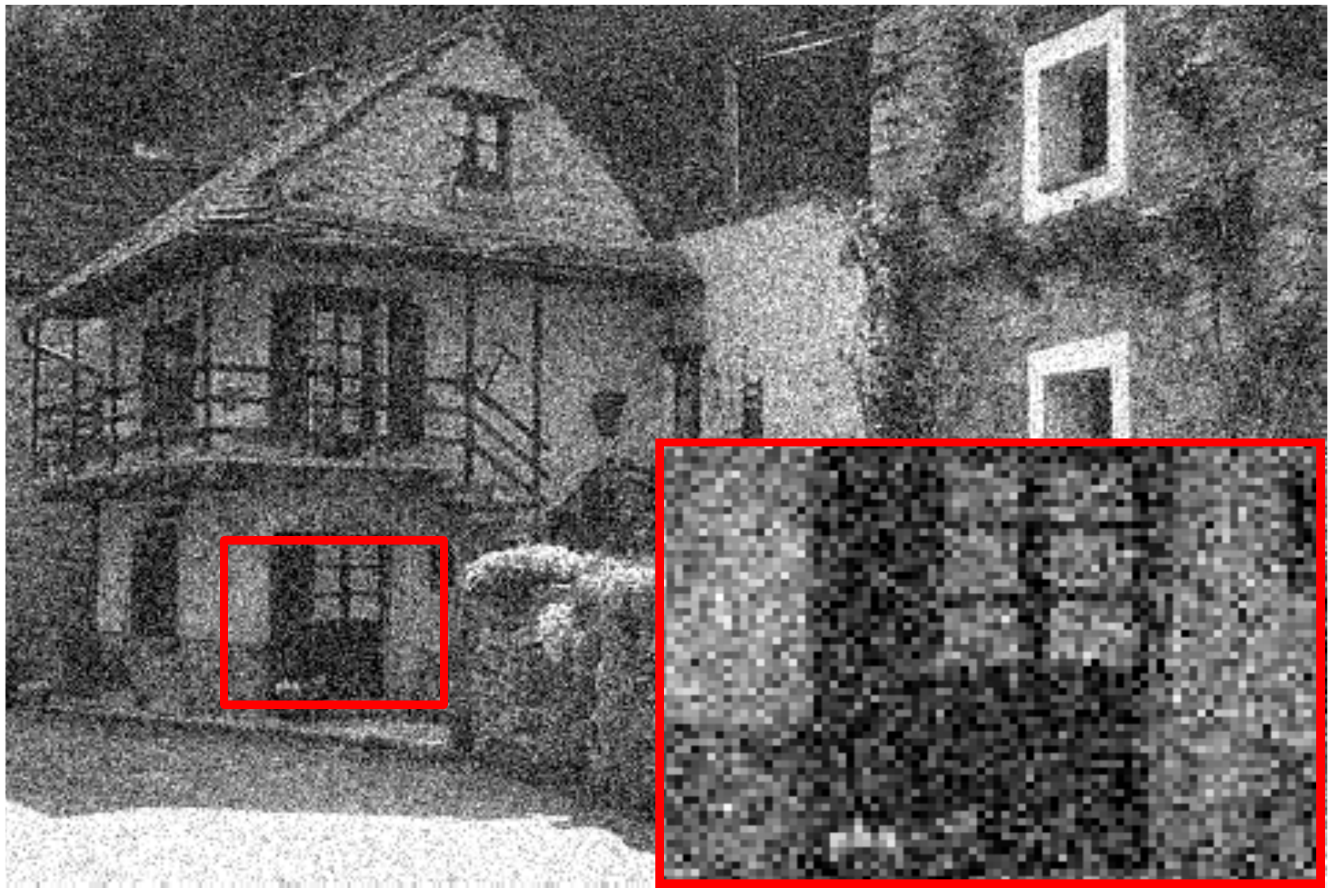}
		\caption{Input, 35, 17.53dB, 0.360}
	\end{subfigure}
	\begin{subfigure}[b]{0.19\textwidth}
		\centering
		\includegraphics[width=\textwidth]{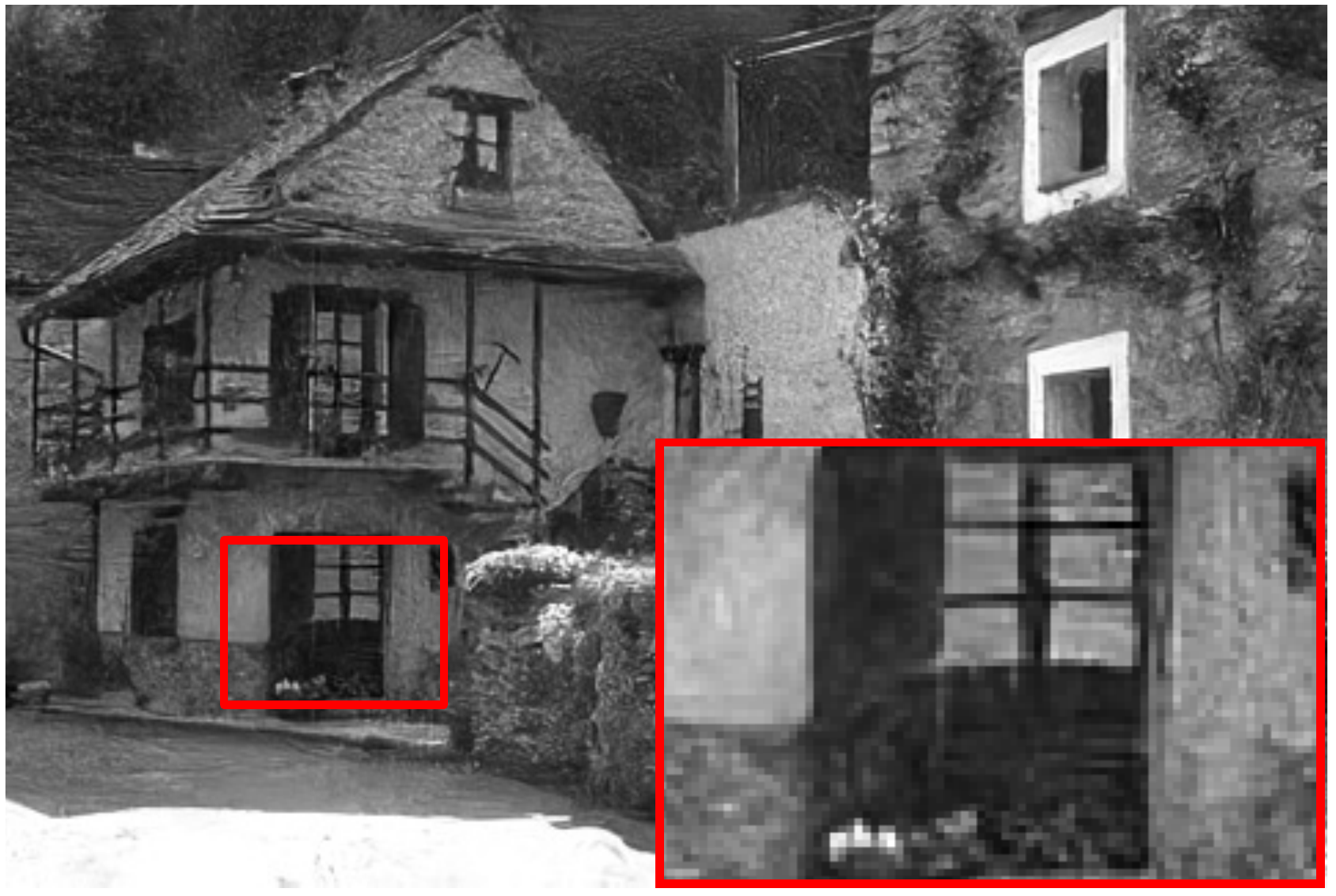}
		\caption{$\mbox{RED}_{30}$, 24.77dB, 0.681}
	\end{subfigure}
	\begin{subfigure}[b]{0.19\textwidth}
		\centering
		\includegraphics[width=\textwidth]{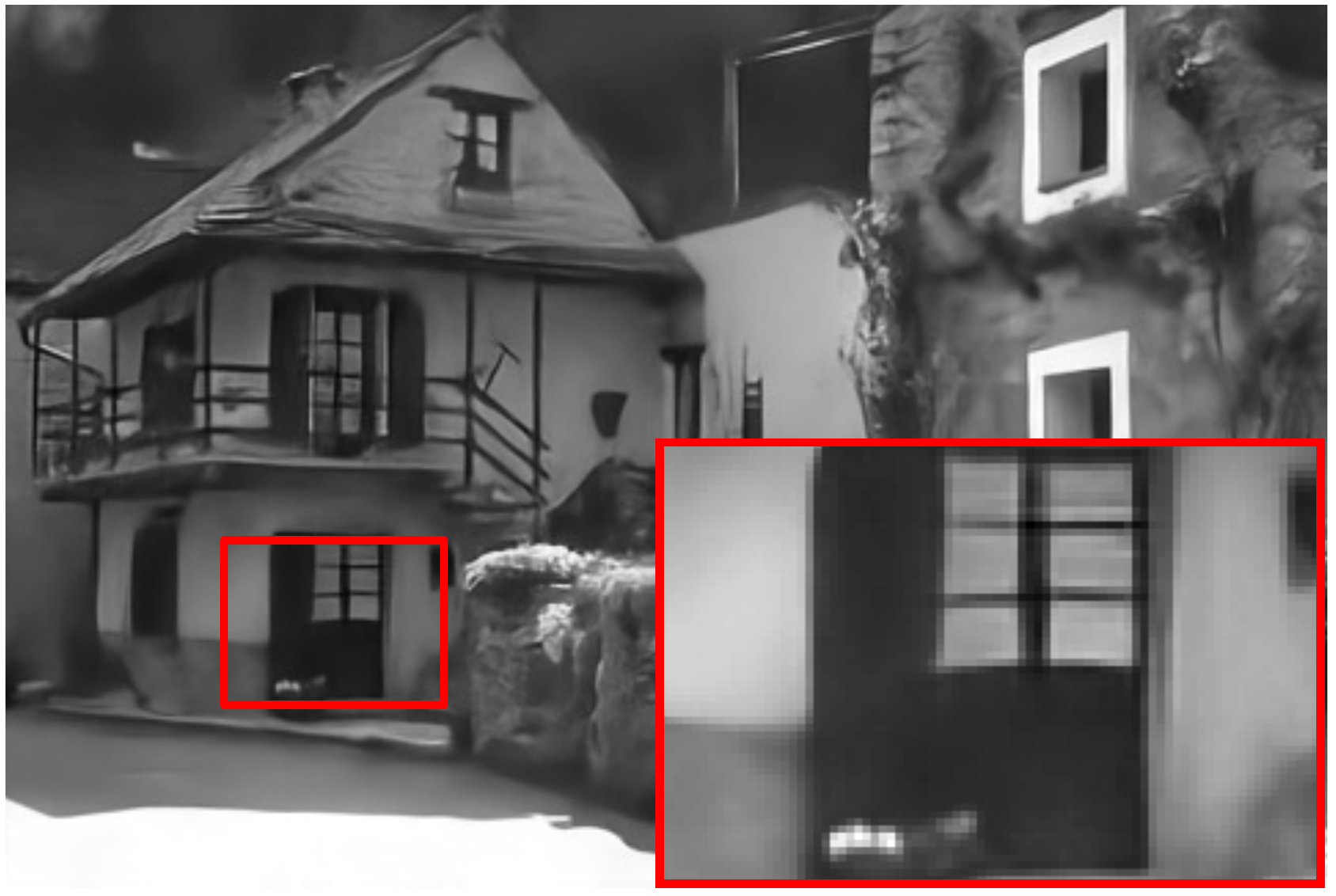}
		\caption{$\mbox{RED}_{40}$, 24.98dB, 0.663}
	\end{subfigure}
	\begin{subfigure}[b]{0.19\textwidth}
		\centering
		\includegraphics[width=\textwidth]{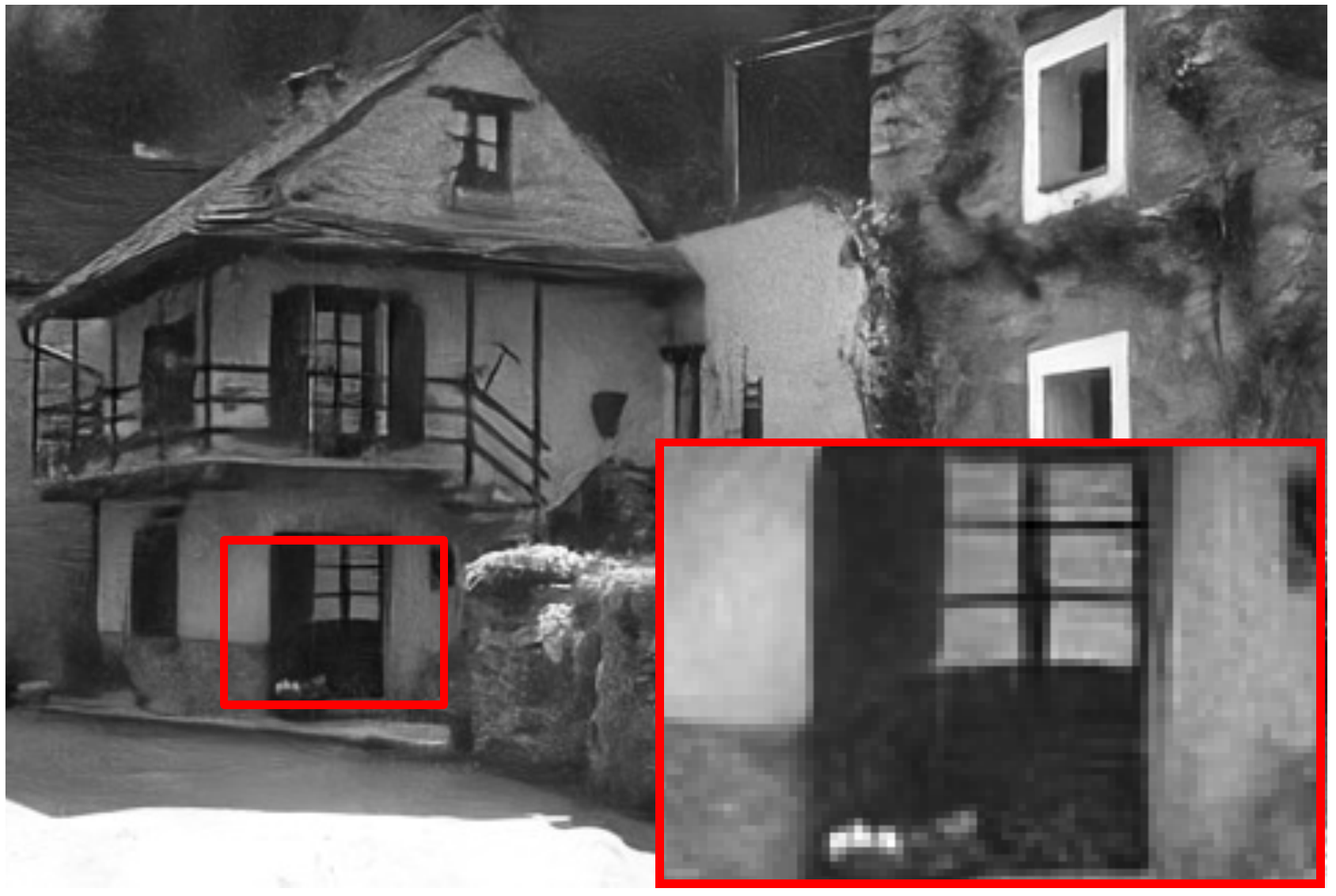}
		\caption{Before, 25.37dB, 0.703}
	\end{subfigure}
	\begin{subfigure}[b]{0.19\textwidth}
		\centering
		\includegraphics[width=\textwidth]{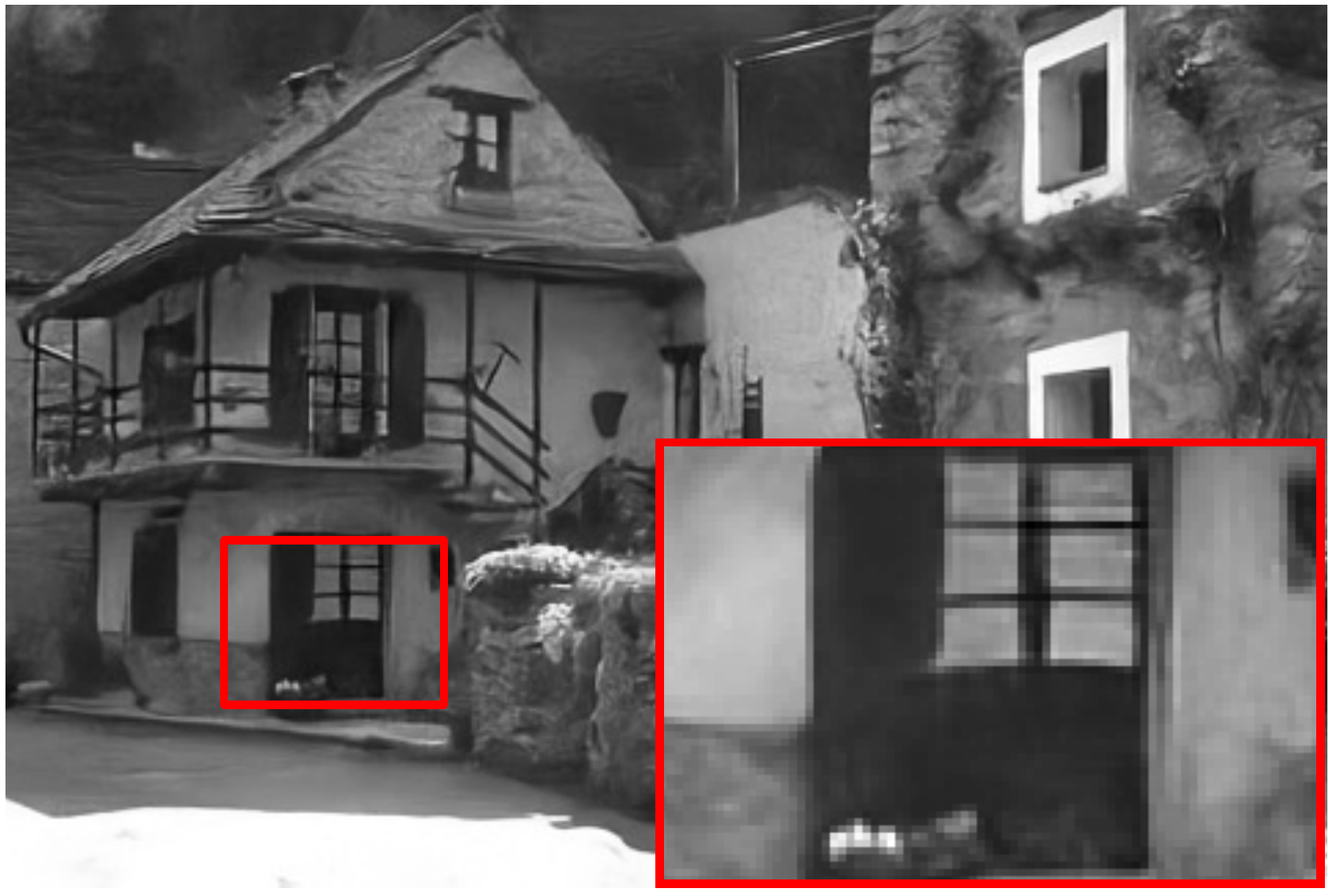}
		\caption{After, \textbf{25.42dB}, \textbf{0.709}}
	\end{subfigure}
	\caption{Experiment 1: Noise-level mismatch for image \texttt{House} (size 321$\times$481) from BSD500. The actual noise level is $\sigma=35$. Top row: use DnCNN as initial denoisers; Bottow row: use REDNet as initial denoisers. Reported are the PSNR and SSIM values. In this figure, ``before'' and ``after'' refer to the result before and after applying the booster.}
	\label{fig:sigma35}
\end{figure*}

\begin{table*}[!ht]
	\centering
	\begin{tabular}{c|ccccc|cc|cc}
		\hline
		& & & & & &  Before  & After	& Before & After 	\\
		& REDNet & REDNet & REDNet & REDNet & REDNet & Booster & Booster & Booster & Booster 	\\
		& ($\sigmahat=10$) & ($\sigmahat=20$) & ($\sigmahat=30$) & ($\sigmahat=40$) & ($\sigmahat=50$) & (est) & (est)  & (oracle) & (oracle) \\
		\hline
		$\sigma=10$ & \textbf{34.1705}	& 30.7509 &	28.2515 & 27.0308 & 25.9679 & \textbf{34.1438} & 33.9859 & \textbf{34.1747} &	33.9913 \\
		$\sigma=15$ & 28.2492 & \textbf{30.8902} & 28.3384 & 27.0760 & 25.9920 & 31.4585 & \textbf{31.7896} & 31.4729 & \textbf{31.7905}  \\
		$\sigma=20$ & 24.1948 & \textbf{30.4820} & 28.4766 & 27.1502 & 26.0329 & 30.4768 & \textbf{30.4805} & 30.4931 & \textbf{30.4888}  \\
		$\sigma=25$ & 21.6813 & 26.6475 & \textbf{28.6138} & 27.2381 & 26.0826 & 29.0650 & \textbf{29.2997} & 29.0723 & \textbf{29.3038}  \\
		$\sigma=30$ & 19.8598 & 22.9125 & \textbf{28.5231} & 27.3544 & 26.1494 & 28.5199 & \textbf{28.5494} & 28.5323 & \textbf{28.5571}  \\
		$\sigma=35$ & 18.4271 & 20.5155 & 26.5631 & \textbf{27.4453} & 26.2322 & 27.7247 & \textbf{27.8402} & 27.7352 & \textbf{27.8460}  \\
		$\sigma=40$ & 17.2471 & 18.8398 & 23.2288 & \textbf{27.2409} & 26.3338 & 27.2387 & \textbf{27.2781} & 27.2542 & \textbf{27.2887}  \\
		$\sigma=45$ & 16.2479 & 17.5394 & 20.7749 & 25.3760 & \textbf{26.4112} & 26.6592 & \textbf{26.7435} & 26.6722 & \textbf{26.7609}  \\
		$\sigma=50$ & 15.3815 & 16.4471 & 18.9488 & 22.5099 & \textbf{26.3197} & \textbf{26.3191} & 26.3145 & \textbf{26.3250} & 26.3227 \\
		\hline
	\end{tabular}
	\caption{Experiment 1A: Noise-level mismatch for \textbf{Unclipped Noise}, where noise is i.i.d. Gaussian \emph{without} clipping the signal to $[0,1]$. The average PSNRs of REDNet ($\sigmahat = 10,20,30,40,50$), Blind REDNet with 50 layers and CsNet on 200 test images from BSD500.In this figure, ``est'' and ``oracle'' refer to estimated MSE and the oracle MSE, respectively.}
	\label{tab:1}
\end{table*}

\subsection{Performance of Booster}
The effectiveness of the booster can be seen in Figure~\ref{fig:booster}, where we show a few examples taken from the BSD500 dataset. In this example, we consider a neural network denoiser trained at five different noise levels (See Section~\ref{sec:experiment 3} for experiment details).

As we see in Figure~\ref{fig:booster}, the booster is doing particularly well for two types of improvements. The first type of improvement is the recovery of the fine details. For example, in the Swam image we can recover the lines on the feather; in the House image we can recover branches of the tree. These are also reflected in the PSNR. The second type of improvement is the contrast enhancement. For example, before boosting the House image we see that the background sky has a gray-ish intensity. However, after boosting the background sky has a brighter background.

\section{Experiments}
We build our neural networks using Tensorflow and run on Intel(R) Core(TM) i5-4690K CPU 3.50GHz with an Nvidia Titan-X GPU, except DnCNN which is downloaded from the author's website \footnote{Note that the original REDNet in \cite{Mao2016} was implemented in Caffe, and the network was trained using patches of $50 \times 50$. We implemented REDNet on Tensorflow with patch size $64 \times 64$. On BSD200 dataset, our implementation shows better PSNR than the original REDNet.}.

\subsection{Experiment 1: Noise-Level Mismatch}
\label{sec:experiment 1}
Our first experiment is to evaluate CsNet for the case of noise-level mismatch. We consider two types of initial denoisers: DnCNN \cite{Zhang2017_tip} and REDNet \cite{Mao2016}. For each denoiser type, we use 300 training and validation images in BSD500 to train five initial denoisers $\calD_1,\ldots,\calD_5$. The denoising strength is set as one of the values $\sigmahat = 10,20,30,40, \textrm{and }50$. When testing, we use a noise level of $\sigma \in [10,50]$. In this experiment, the noise is unclipped i.i.d. Gaussian.

\begin{table*}[t]
	\centering
	\begin{tabular}{c|ccccc|cc|cc}
		\hline
		& & & & & &  Before  & After	& Before & After 	\\
		& REDNet & REDNet & REDNet & REDNet & REDNet & Booster & Booster & Booster & Booster 	\\
		& ($\sigmahat=10$) & ($\sigmahat=20$) & ($\sigmahat=30$) & ($\sigmahat=40$) & ($\sigmahat=50$) & (est) & (est)  & (oracle) & (oracle)  \\
		\hline
		$\sigma=10$ & \textbf{34.1428}	& 30.6934 &	28.2434 & 26.8287 & 25.8601 & \textbf{34.0756} & 33.9220 & \textbf{34.1434} & 33.9061 \\
		$\sigma=15$ & 28.4337 & \textbf{30.7544} & 28.2961 & 26.8381 & 25.8532 & 31.3295 & \textbf{31.7896} &  31.3878 & \textbf{31.8022} \\
		$\sigma=20$ & 24.4306 & \textbf{30.3462} & 28.3595 & 26.8382 & 25.8341 & 30.3121 & \textbf{30.4621} & 30.3516 & \textbf{30.4763} \\
		$\sigma=25$ & 21.8383 & 26.9932 & \textbf{28.4116} & 26.8396 & 25.8065 & 28.8881 & \textbf{29.3027} & 28.9210 & \textbf{29.3030} \\
		$\sigma=30$ & 19.9669 & 23.4285 & \textbf{28.2041} & 26.8316 & 25.7651 & 28.1983 & \textbf{28.5163} & 28.2213 & \textbf{28.5225} \\
		$\sigma=35$ & 18.4955 & 21.0504 & 26.2027 & \textbf{26.7998} & 25.7074 & 27.2566 & \textbf{27.7785} & 27.2774 & \textbf{27.7848} \\
		$\sigma=40$ & 17.2907 & 19.3423 & 23.2651 & \textbf{26.6291} & 25.6314 & 26.6547 & \textbf{27.2147} & 26.6777 & \textbf{27.2208} \\
		$\sigma=45$ & 16.2759 & 18.0084 & 20.9738 & \textbf{25.5692} & 25.5244 & 25.9516 & \textbf{26.6856} & 25.9750 & \textbf{26.6975} \\
		$\sigma=50$ & 15.3992 & 16.9077 & 19.2047 & 23.3792 & \textbf{25.3426} & 25.3940 & \textbf{26.2533} & 25.4284 & \textbf{26.2612} \\
		\hline
	\end{tabular}
	\caption{Experiment 1B: Noise-level mismatch for the \textbf{Clipped Noise}, where the i.i.d. Gaussian is clipped to ensure that the signal lies in $[0,1]$. The average PSNRs of REDNet ($\sigmahat = 10,20,30,40,50$), Blind REDNet with 50 layers and CsNet on 200 test images from BSD500. In this figure, ``est'' and ``oracle'' refer to estimated MSE and the oracle MSE, respectively.}
	\label{tab:clipped noise}
\end{table*}

The result of this experiment is shown in Table~\ref{tab:1} and Figure~\ref{fig:sigma35}. Table~\ref{tab:1} shows the comparison with REDNet as initial denoisers, whereas Figure~\ref{fig:sigma35} shows a visual comparison of an image in the BSD500 dataset. We can make a few observations here:

\begin{itemize}
\item \textbf{General Performance}. For each $\sigma$, the best performing REDNet is the one with $\sigmahat$ right above $\sigma$. This result is consistent with the suggestion made by Zhang et al \cite{Zhang2017_tip}. However, the combination (before boosting) is able to improve the performance by an average of 0.3dB for noise levels that are originally not trained for, i.e., $\sigma = 15,25,35,45$. For noise levels that are originally in the training set, i.e., $\sigma = 10,20,30,40,50$, the improvement is marginal.
    \vspace{1ex}
\item \textbf{Effect of Boosting}. If the actual noise level is unseen by the denoiser, e.g., $\sigma = 15$, the PSNR gain due to the booster is significant. For noise levels that have been observed, e.g., $\sigma = 20$, the gain is marginal. The reason is that the booster has less room to improve when the denoised image is already good. This is consistent to the results reported in the boosting literature \cite{Romano2015}. We also observe that for noise levels $\sigma = 10$ and $\sigma = 50$ there is a minor drop in the booster. This is because the booster is itself an estimator. When handling a wide range of noise levels, the network is only able to maximize the performance on the average case. For the extreme cases, there is a fundamental limitation which prevents the booster from being able to produce consistently good results. The same finding holds for other blind deep neural network denoisers, e.g., \cite{Zhang2017_tip}, which has worse performance for extreme low-noise and high-noise cases.
    \vspace{1ex}
\item \textbf{Oracle VS Estimate}. The difference between the oracle MSE and the estimated MSE is very small. Here, by oracle MSE we meant that the MSE is calculated from the ground truth. This will give us the best possible $\mSigma$ when solving the convex optimization, and the PSNR can be regarded as the upper bound of any estimation method. As shown in the table, the performance of the MSE estimator is very similar to the oracle. This suggests that our neural network MSE estimator can reliably predict the MSE and hence facilitates the combination scheme.
\end{itemize}

\subsection{Deeper Vanilla Network?}
A natural question we can ask is that since we have five initial deep neural networks, is the performance gain due to the increased model capacity of the overall denoiser? To answer this question, we consider a blind denoiser of the same model capacity as the overall CsNet before boosting. Specifically, since we are using five REDNet-30 in the previous experiment, here we train a blind REDNet with 150 layers by repeating the structure of REDNet-30 five times. We call this the deep vanilla network.

\begin{table}[h]
	\centering
	\begin{tabular}{c|cc|c@{ }c@{}}
		\hline
		& Before  & After  & REDNet & REDNet \\
		& Booster & Booster & Blind 150 & Blind 150 	\\
		& (est) & (est)  & ($\sigmahat$=1,2,...,70) & ($\sigmahat$=10,20,...,50) \\
		\hline
		$\sigma$=10 &  \textbf{34.1438} & 33.9859 & 33.8295 & \textbf{33.9487} \\
		$\sigma$=15 &  31.4585 & \textbf{31.7896} & \textbf{31.7352}  & 30.2000 \\
		$\sigma$=20 &  30.4768 & \textbf{30.4805} & 30.3304  & \textbf{30.3557} \\
		$\sigma$=25 &  29.0650 & \textbf{29.2997} & \textbf{29.2868}  & 28.1811 \\
		$\sigma$=30 &  28.5199 & \textbf{28.5494} & 28.4640  & \textbf{28.4782} \\
		$\sigma$=35 &  27.7247 & \textbf{27.8402} & \textbf{27.7810}  & 27.1076 \\
		$\sigma$=40 &  27.2387 & \textbf{27.2781} & \textbf{27.2084}  & 27.1945 \\
		$\sigma$=45 &  26.6592 & \textbf{26.7435} & \textbf{26.7229}  & 25.0532 \\
		$\sigma$=50 &  \textbf{26.3191} & 26.3145 & \textbf{26.3013}  & 26.2898 \\
		\hline
	\end{tabular}
	\caption{ConsensusNet vs. Deep Vanilla Network. For the Deep Vanilla Network, one REDNet is trained with $\sigmahat=1, 2, ..., 70$ and the other is with $\sigmahat=10, 20, ..., 50$ like the initial denoisers.}
	\label{tab: vanilla}
\end{table}

The result of this experiment is shown in Table~\ref{tab: vanilla}. The first two columns of this table show the unclipped noise performance using our proposed method. The third column is the vanilla 150-layer REDNet trained using noisy samples of noise level from 1 to 70. This is an advantageous setting, because the network is allowed to see samples of noise levels such as 15 or 35 which are not present in the five baseline REDNet-30's. The last column is another vanilla 150-layer REDNet, but trained using noise levels of $\{10,20,30,40,50\}$. This is more fair, as the network has the same training samples as the five baseline REDNet-30's. Both networks are trained with the same number of training examples.

As we observe from Table~\ref{tab: vanilla}, the proposed combination scheme actually works better than the 150-layer REDNet. If we compare ``before boosting'' and the last column (the REDNet trained with the same set of samples as ours), the combination scheme produces significantly better performance in all cases. This suggests that the improvement is not due to the increased model capacity but the intrinsic power of the combination. If we allow the 150-layer REDNet to see the unseen examples (i.e., the third column), then the performance is worse than our ``before boosting'' for noise levels $\sigma = 10,20,\ldots,50$. For noise levels such as $15,25,\ldots,45$, the 150-layer REDNet is better than ``before boosting''. However, this is an unfair comparison because this REDNet is allowed to see images of those noise levels.

We also observe in some cases the weaker REDNet-150 (last column) performs better than the more powerful REDNet-150 (third column). These happens when $\sigma = 10, 20, 30$. One reason is that for the same amount of training examples, the more powerful REDNet distributes the training examples to all noise levels from 1 to 70, whereas the weaker REDNet only focuses on $10,20,\ldots,50$. This puts advantageous on the weaker REDNet-150 when it goes to those noise levels. In fact, even for $\sigma = 40$ and $50$, the difference between the two REDNet's are marginal.

\subsection{Clipped and Unclipped Noise}
Since our proposed framework can be adapted to different types of noise (by training a different MSE estimator), here we demonstrate the performance of the proposed method on clipped and unclipped noise. To generate the clipped noisy image, we first add i.i.d. Gaussian noise to the image and clip the resulting image to the range $[0,1]$. We argue that this is a more natural configuration, because most physical sensor have limited dynamic range.

The result of this experiment is shown in Table~\ref{tab:clipped noise}. One thing to notice is that the REDNet's are still the same; They are re-trained using the clipped noise. As a result, their performance is worse than the unclipped version because of the training-testing mismatch. However, this deficiency of the initial denoiser brings out a useful feature of the proposed framework: Regardless of what the initial denoiser does, the proposed framework is able to pick the strongest denoiser and make improvements. If we look at Table~\ref{tab:clipped noise}, besides the case of $\sigma = 10$, the proposed method is always better than the initial denoiser, despite the fact that the noise is clipped.

\subsection{Experiment 2: Different Image Classes}
\label{sec:experiment 2}

\begin{figure*}[t]
	\centering
	\captionsetup[subfigure]{labelformat=empty, justification=centering}
	\begin{subfigure}[b]{0.13\textwidth}
		\centering
		\includegraphics[width=\textwidth]{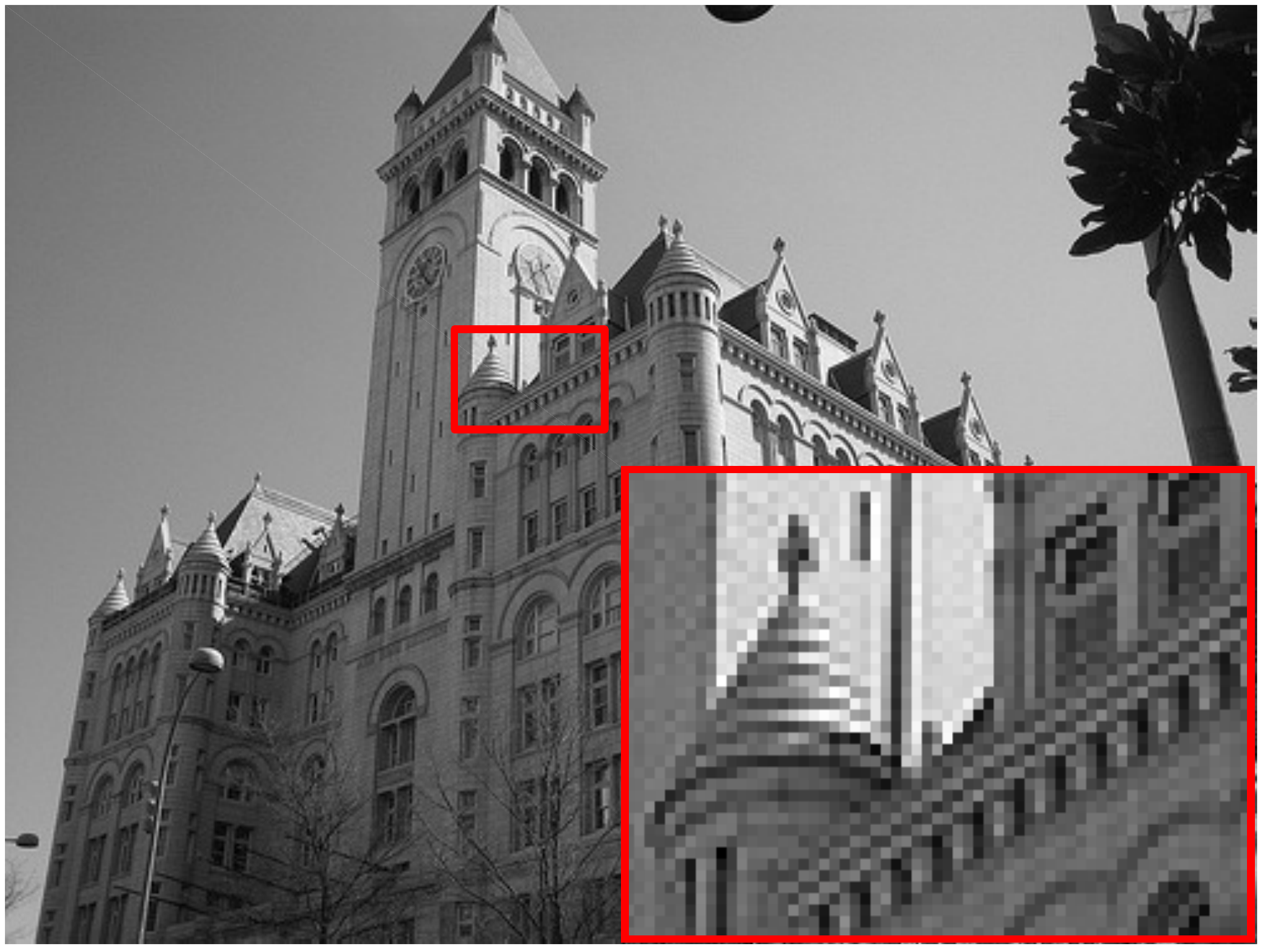}
		\caption{Groundtruth \\ Bldg}
	\end{subfigure}
	\begin{subfigure}[b]{0.13\textwidth}
		\centering
		\includegraphics[width=\textwidth]{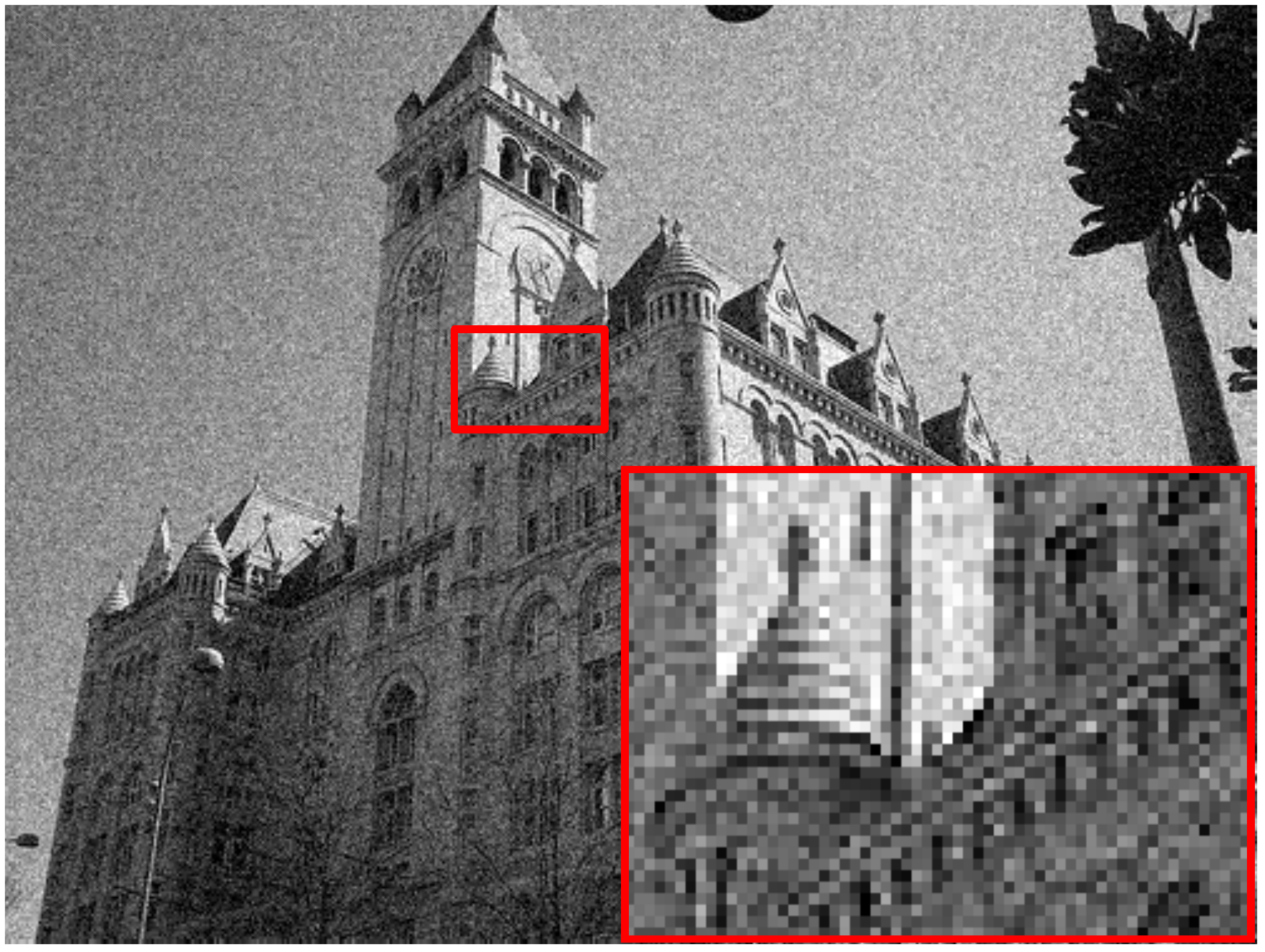}
		\caption{22.20dB, 0.4892 \\ Input}
	\end{subfigure}
	\begin{subfigure}[b]{0.13\textwidth}
		\centering
		\includegraphics[width=\textwidth]{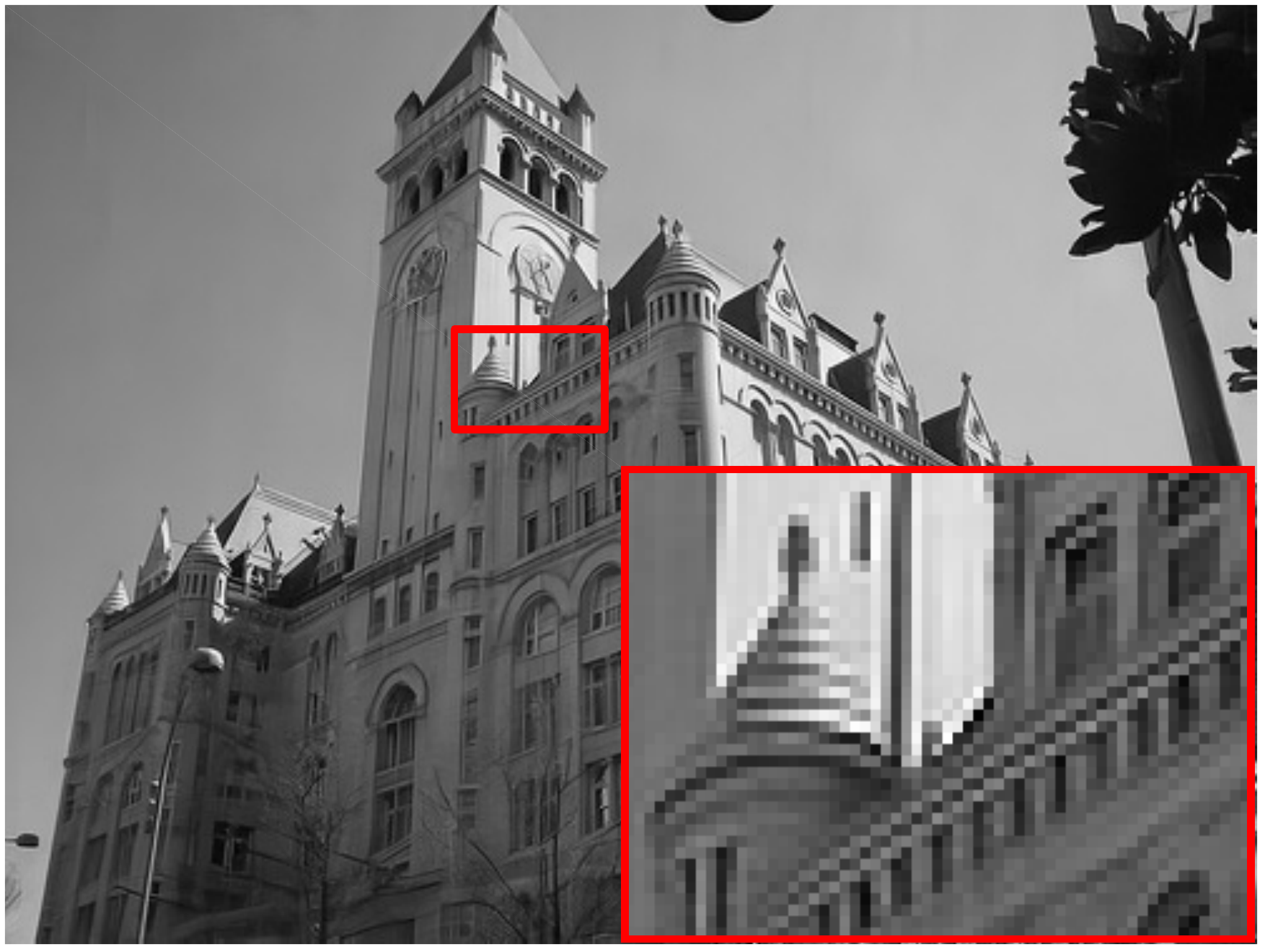}
		\caption{29.61dB, 0.8941 \\ RED-Bldg}
	\end{subfigure}
	\begin{subfigure}[b]{0.13\textwidth}
		\centering
		\includegraphics[width=\textwidth]{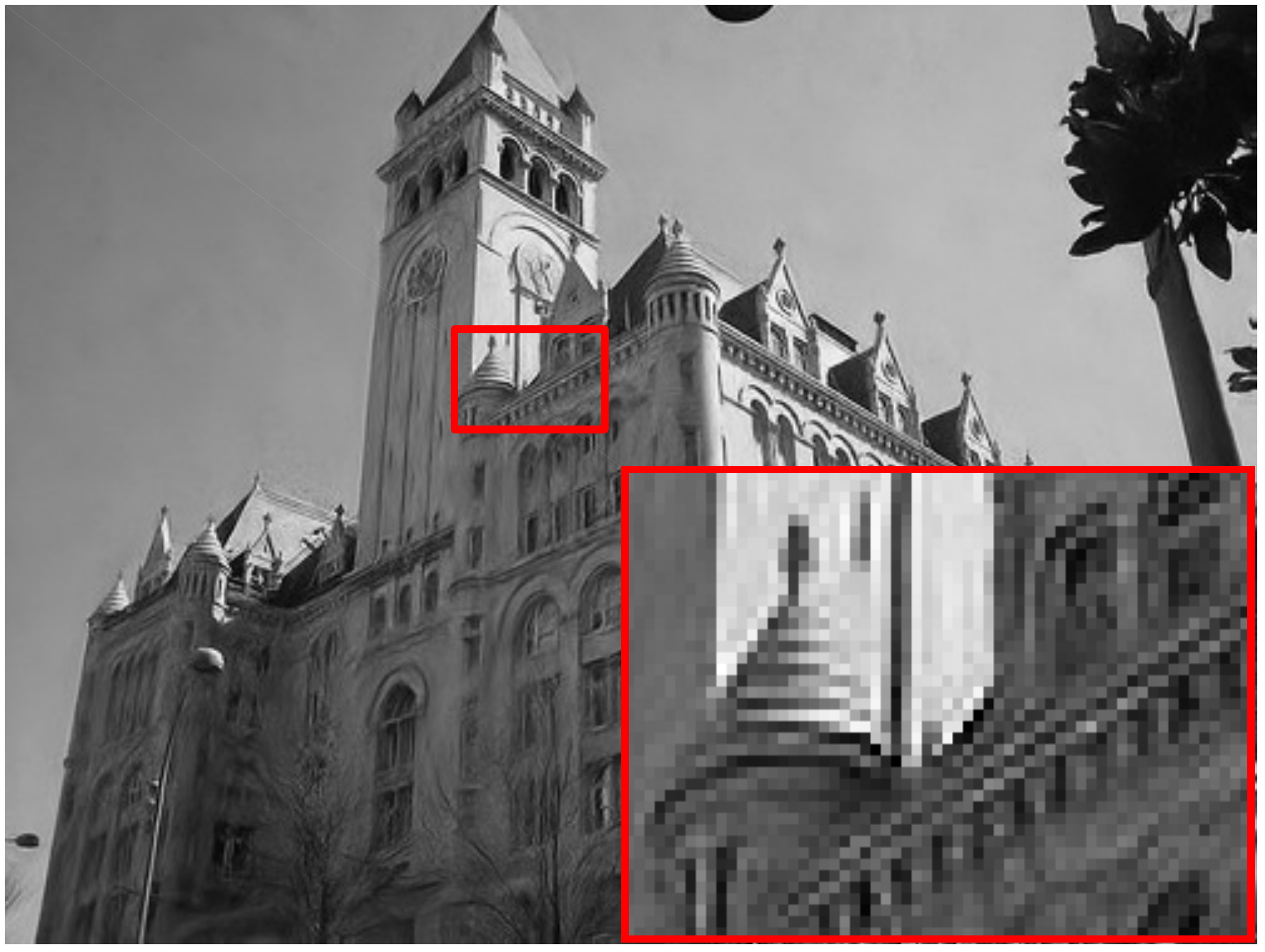}
		\caption{28.67dB, 0.8731 \\ RED-Face}
	\end{subfigure}
	\begin{subfigure}[b]{0.13\textwidth}
		\centering
		\includegraphics[width=\textwidth]{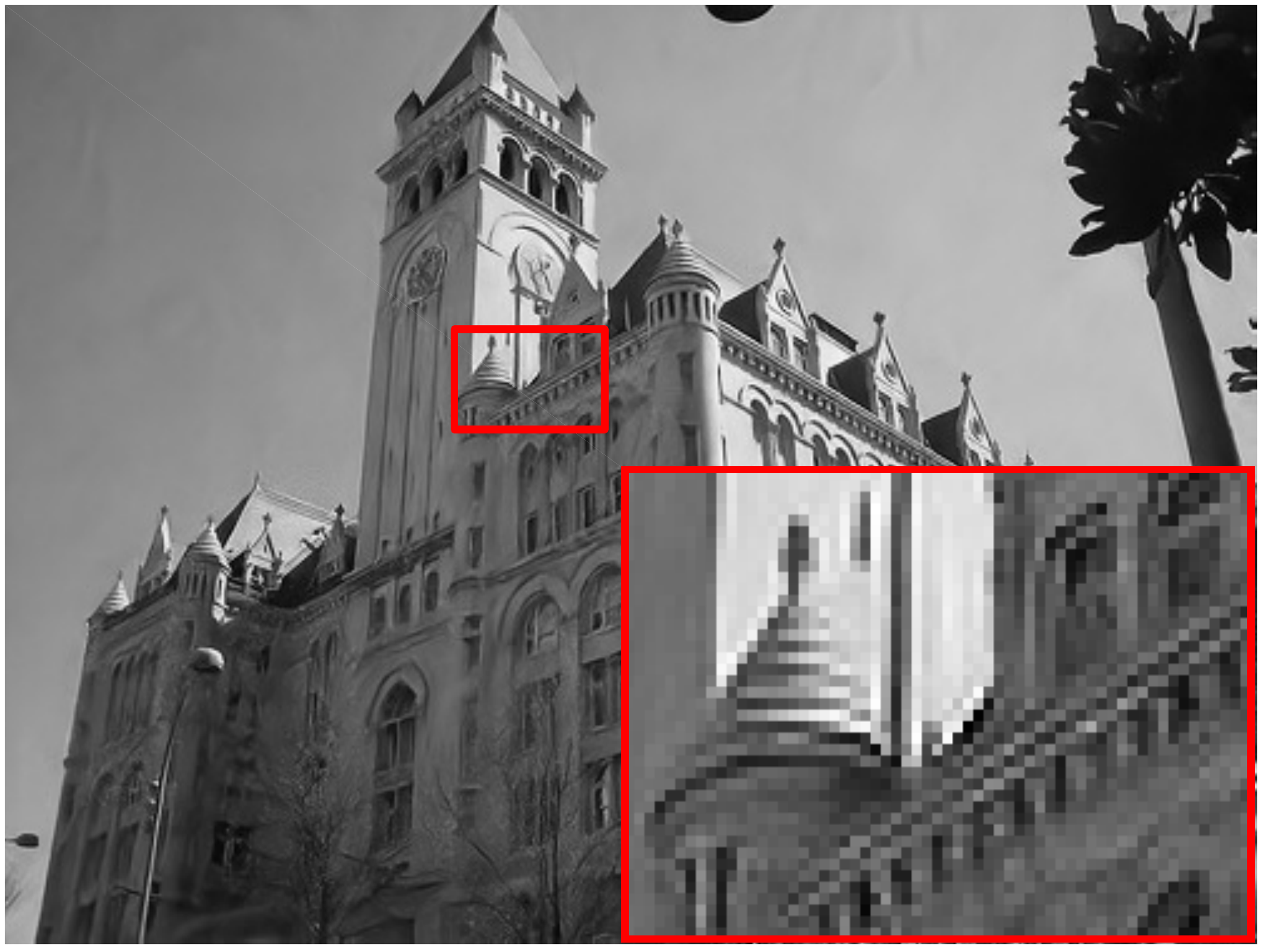}
		\caption{29.02dB, 0.8798 \\ RED-Flower}
	\end{subfigure}
	\begin{subfigure}[b]{0.13\textwidth}
		\centering
		\includegraphics[width=\textwidth]{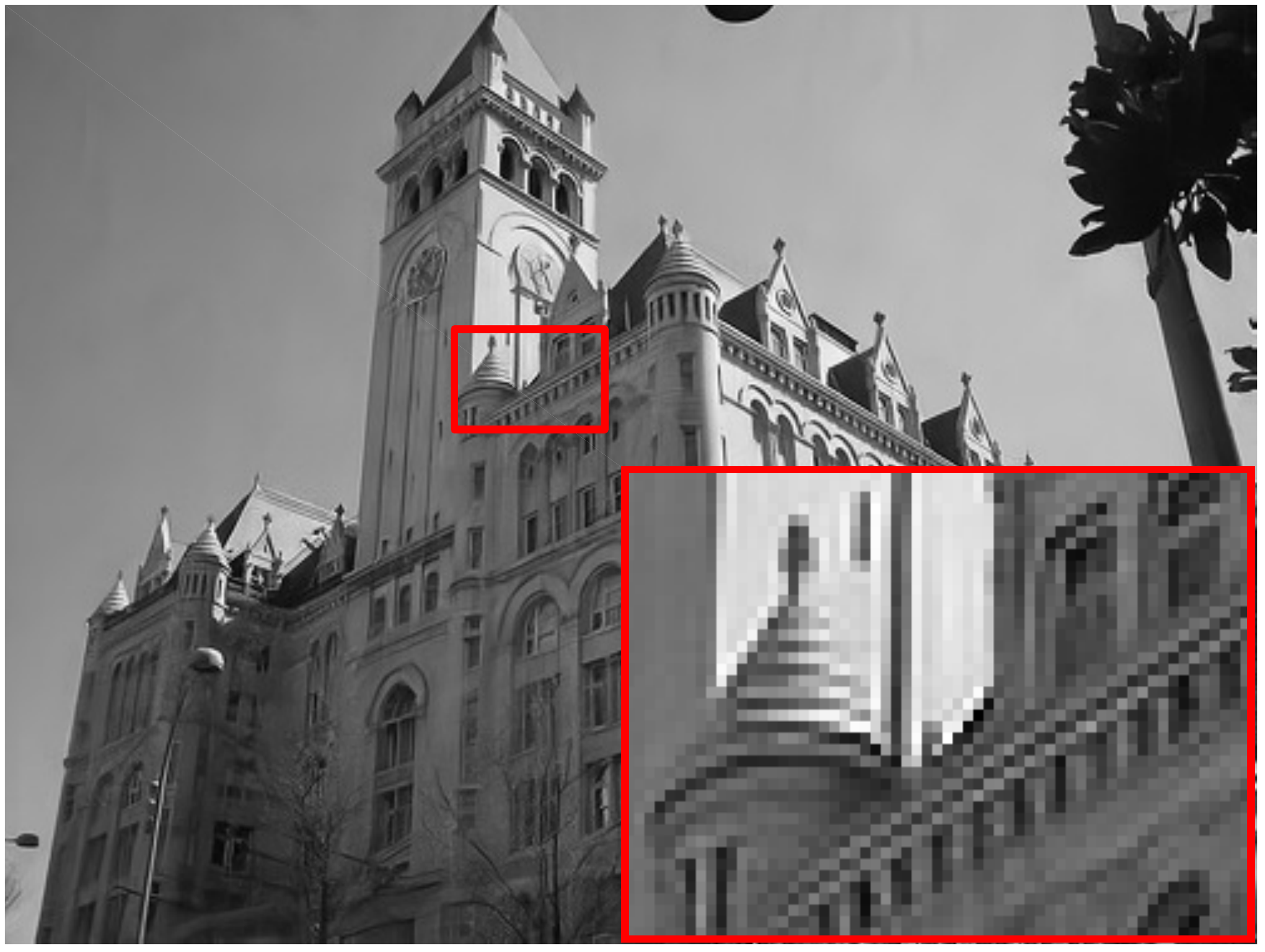}
		\caption{29.56dB, 0.8918 \\ Before}
	\end{subfigure}
	\begin{subfigure}[b]{0.13\textwidth}
		\centering
		\includegraphics[width=\textwidth]{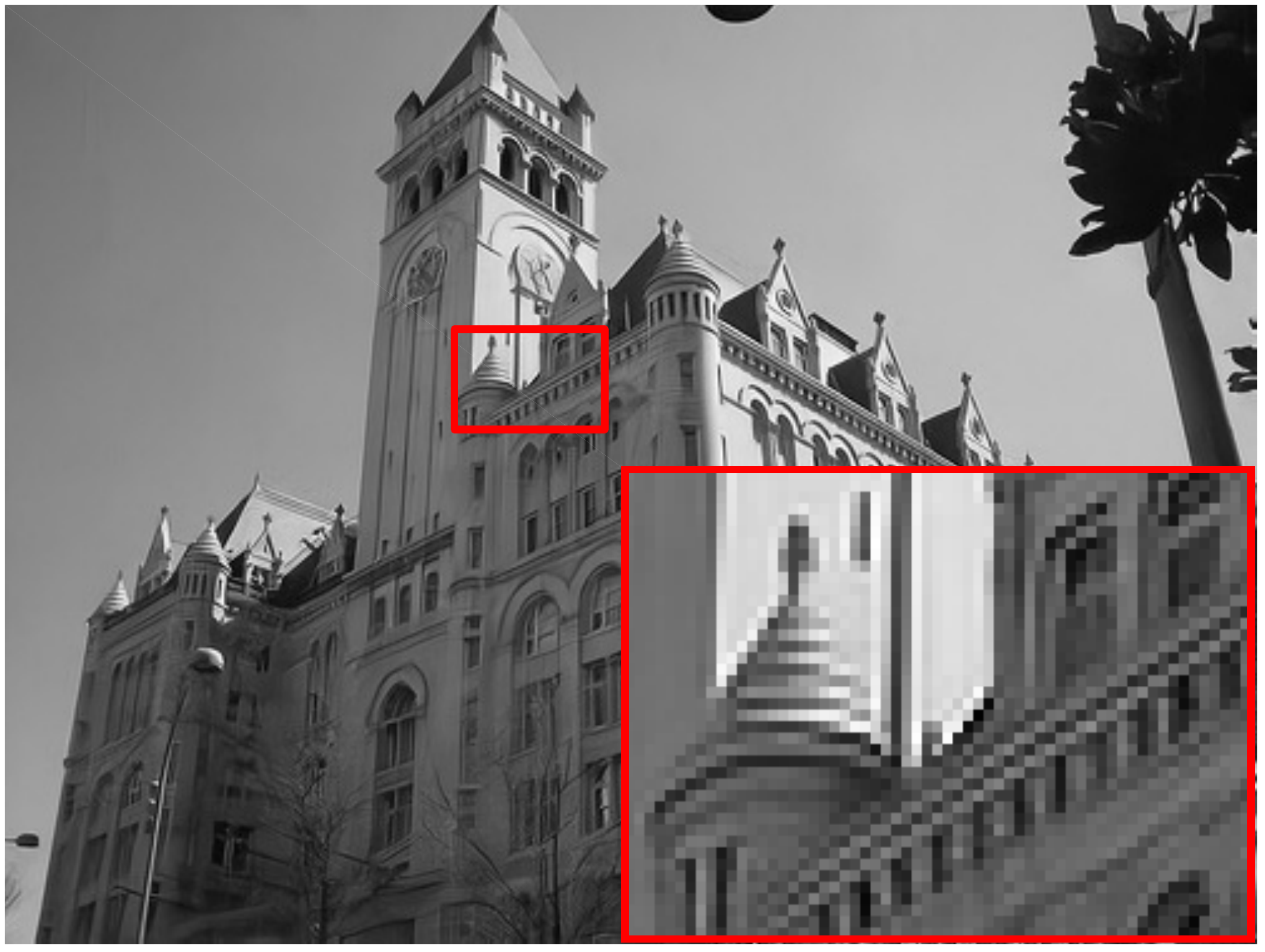}
		\caption{\textbf{29.73dB}, \textbf{0.8946} \\ After}
	\end{subfigure}

	\begin{subfigure}[b]{0.13\textwidth}
		\centering
		\includegraphics[width=\textwidth]{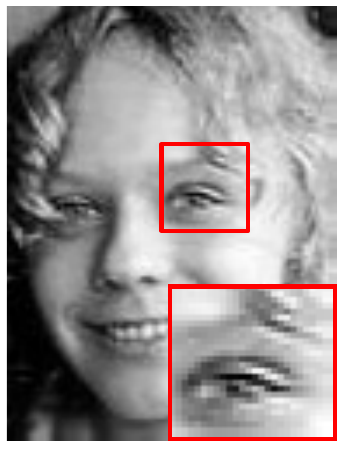}
		\caption{Ground Truth \\ Face}
	\end{subfigure}
	\begin{subfigure}[b]{0.13\textwidth}
		\centering
		\includegraphics[width=\textwidth]{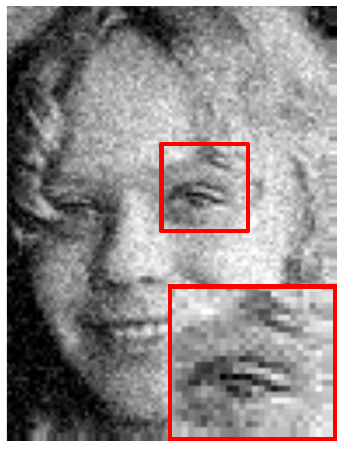}
		\caption{22.38dB , 0.5732\\ Input}
	\end{subfigure}
	\begin{subfigure}[b]{0.13\textwidth}
		\centering
		\includegraphics[width=\textwidth]{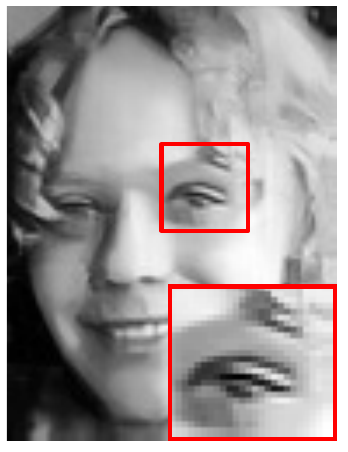}
		\caption{29.14dB, 0.8716 \\ RED-Bldg}
	\end{subfigure}
	\begin{subfigure}[b]{0.13\textwidth}
		\centering
		\includegraphics[width=\textwidth]{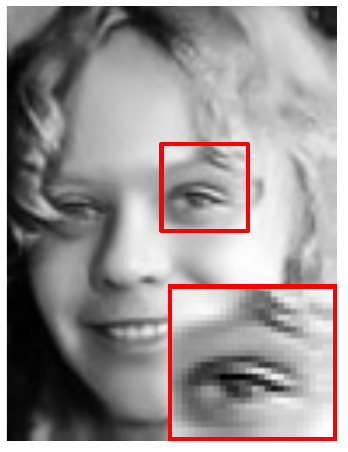}
		\caption{29.43dB, 0.8780 \\ RED-Face}
	\end{subfigure}
	\begin{subfigure}[b]{0.13\textwidth}
		\centering
		\includegraphics[width=\textwidth]{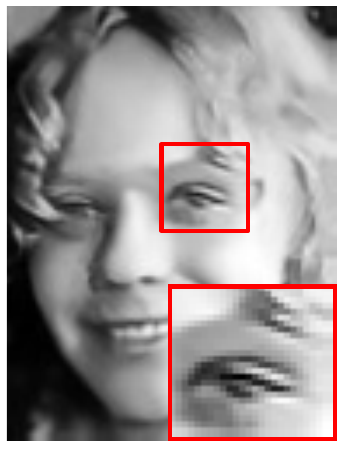}
		\caption{29.41dB, 0.8769 \\ RED-Flower}
	\end{subfigure}
	\begin{subfigure}[b]{0.13\textwidth}
		\centering
		\includegraphics[width=\textwidth]{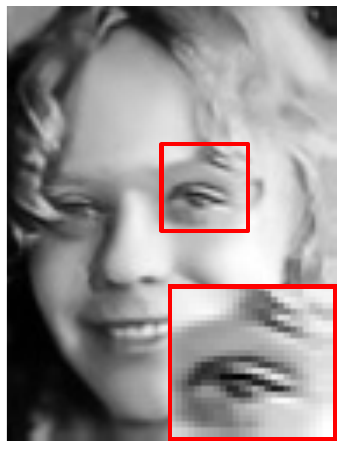}
		\caption{29.50dB, 0.8789 \\ Before}
	\end{subfigure}
	\begin{subfigure}[b]{0.13\textwidth}
		\centering
		\includegraphics[width=\textwidth]{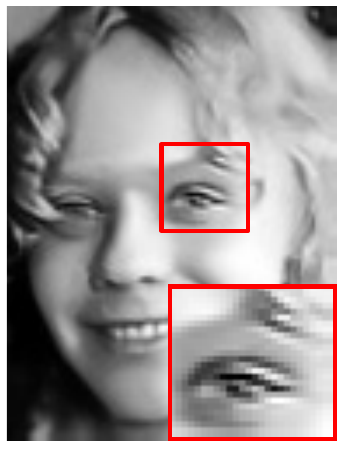}
		\caption{\textbf{29.92dB}, \textbf{0.8847} \\ After}
	\end{subfigure}

	\begin{subfigure}[b]{0.13\textwidth}
		\centering
		\includegraphics[width=\textwidth]{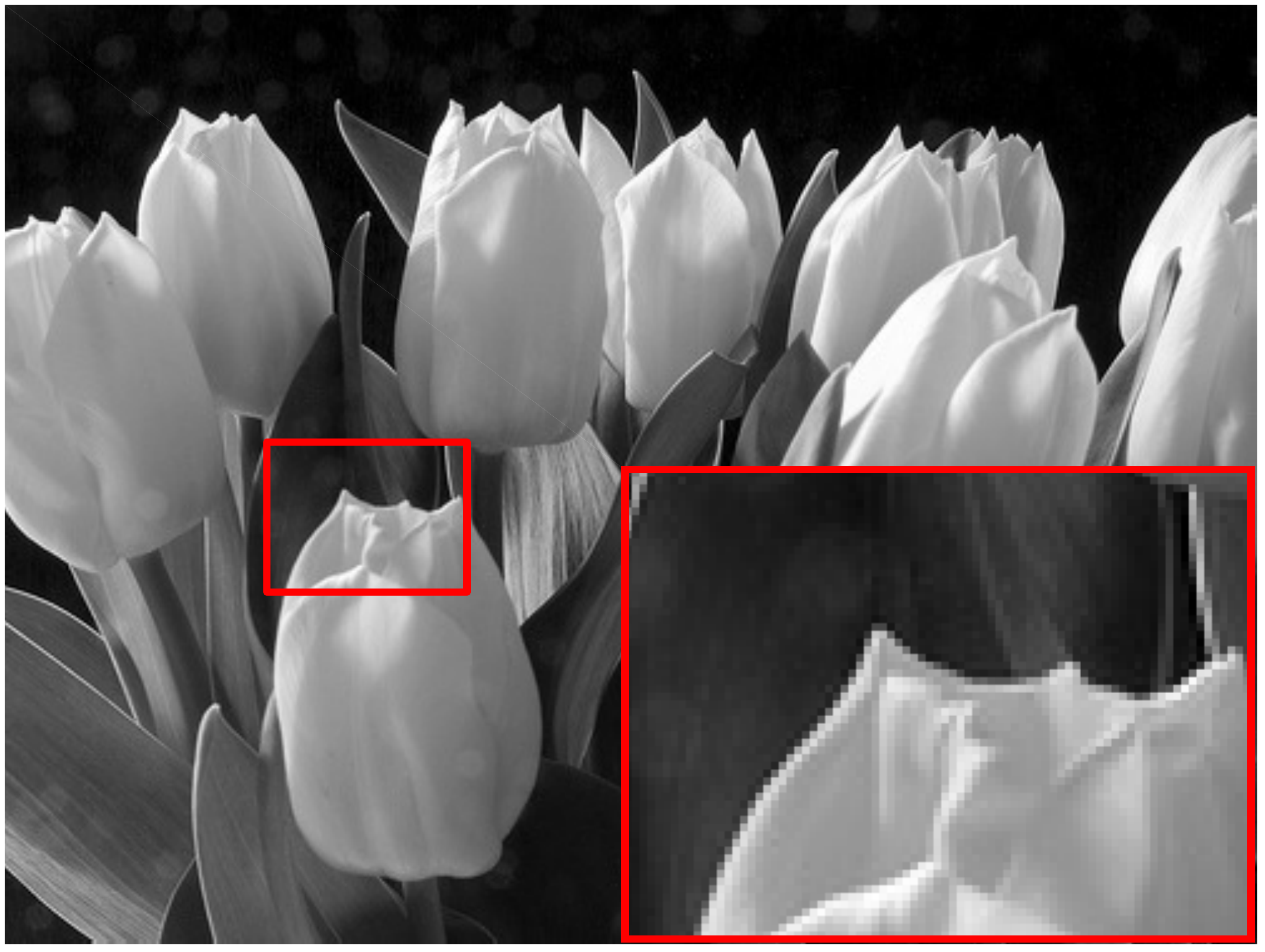}
		\caption{Groundtruth \\ Flower}
	\end{subfigure}
	\begin{subfigure}[b]{0.13\textwidth}
		\centering
		\includegraphics[width=\textwidth]{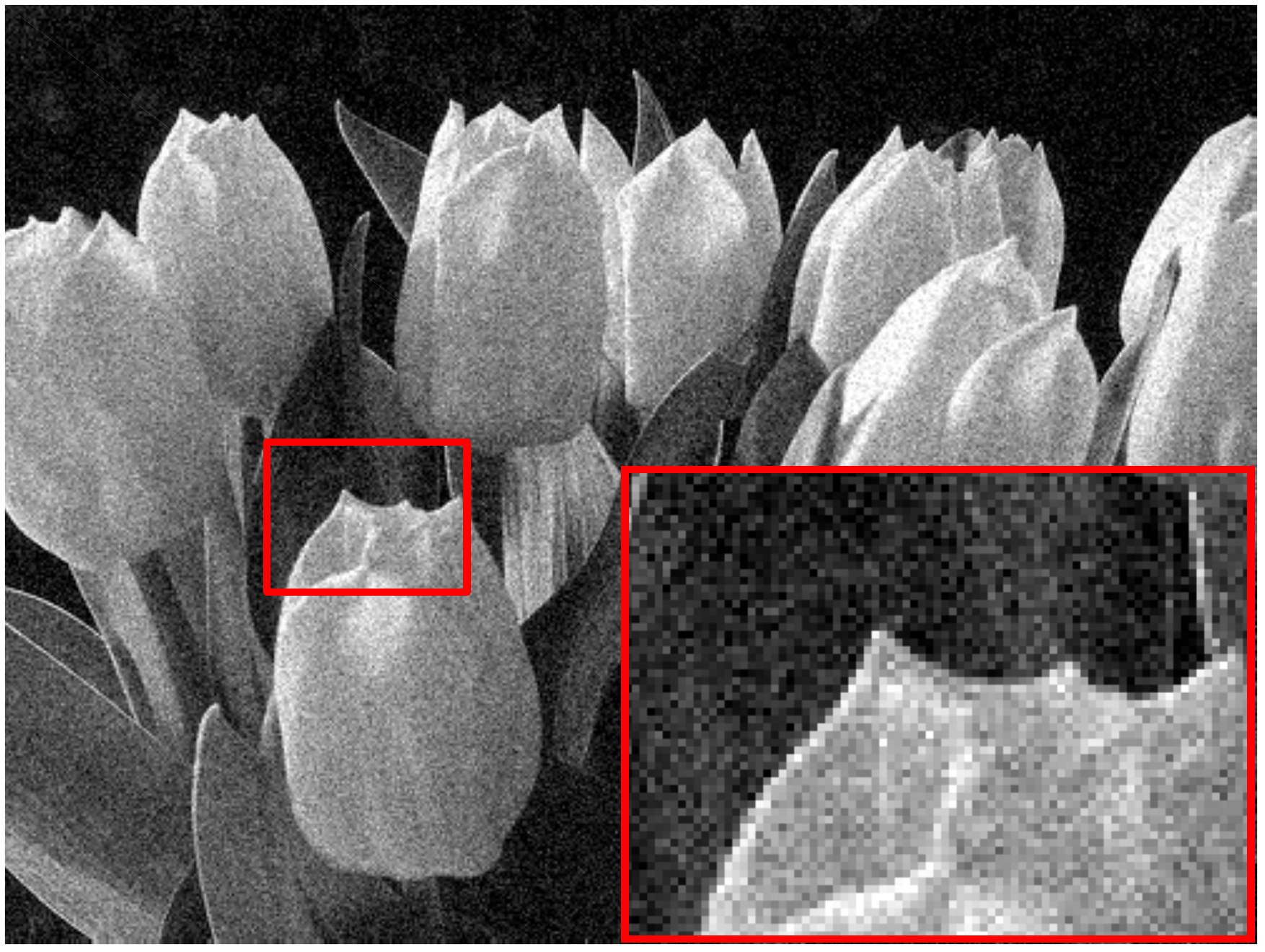}
		\caption{22.59dB, 0.3469 \\ Input}
	\end{subfigure}
	\begin{subfigure}[b]{0.13\textwidth}
		\includegraphics[width=\textwidth]{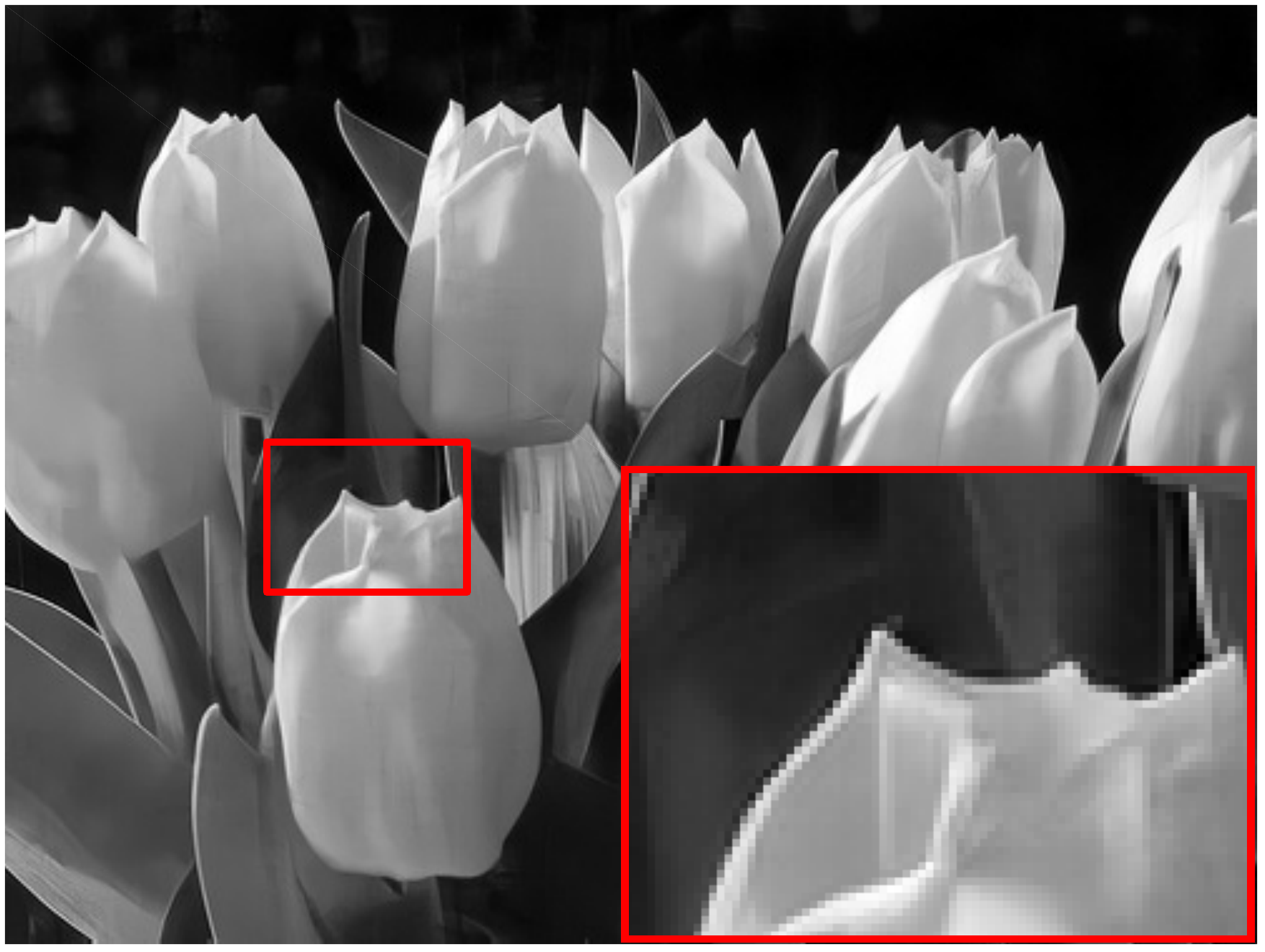}
		\caption{34.18dB, 0.9069 \\ RED-Bldg}
	\end{subfigure}
	\begin{subfigure}[b]{0.13\textwidth}
		\centering
		\includegraphics[width=\textwidth]{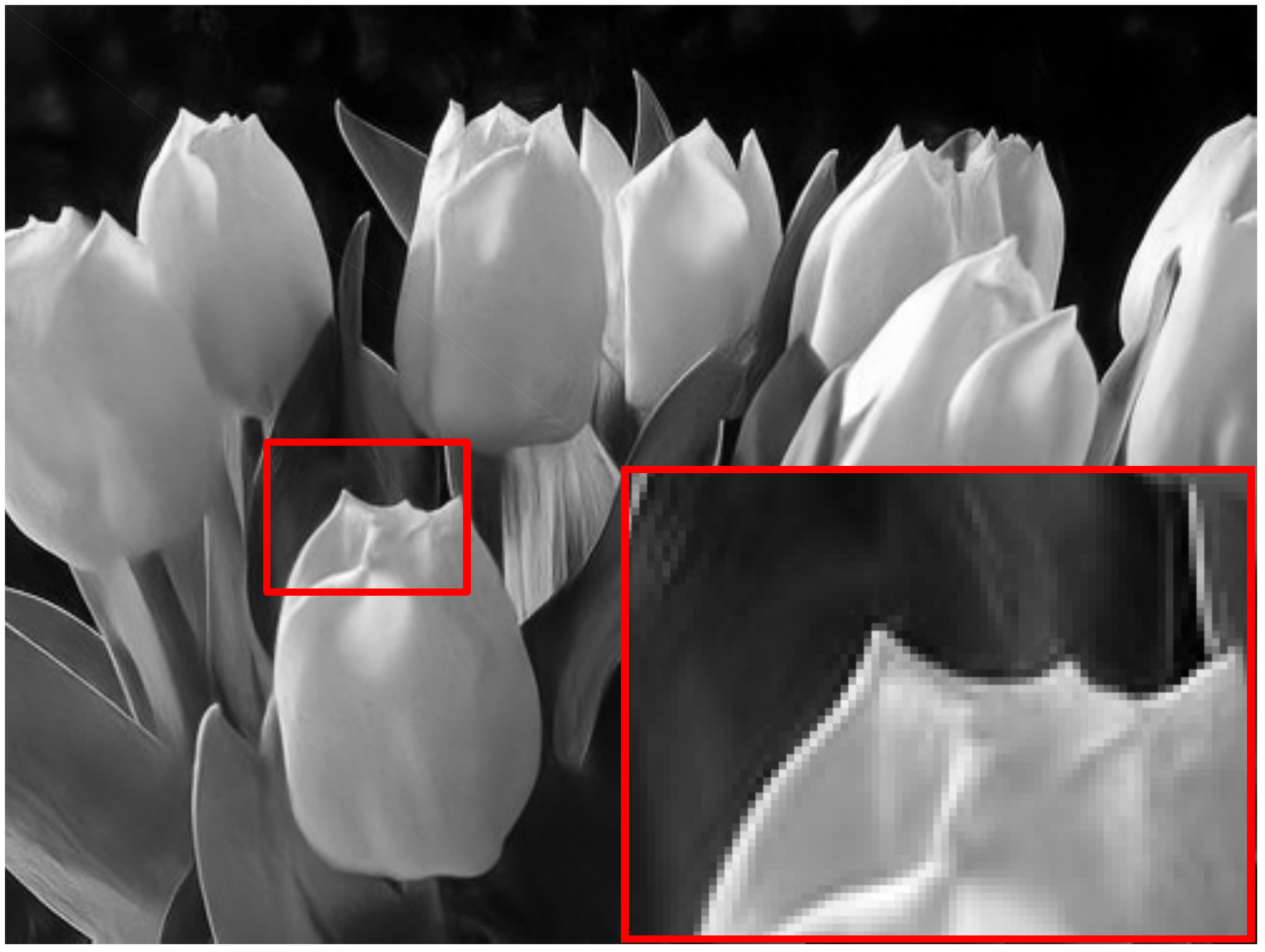}
		\caption{33.78dB, 0.9042 \\ RED-Face}
	\end{subfigure}
	\begin{subfigure}[b]{0.13\textwidth}
		\centering
		\includegraphics[width=\textwidth]{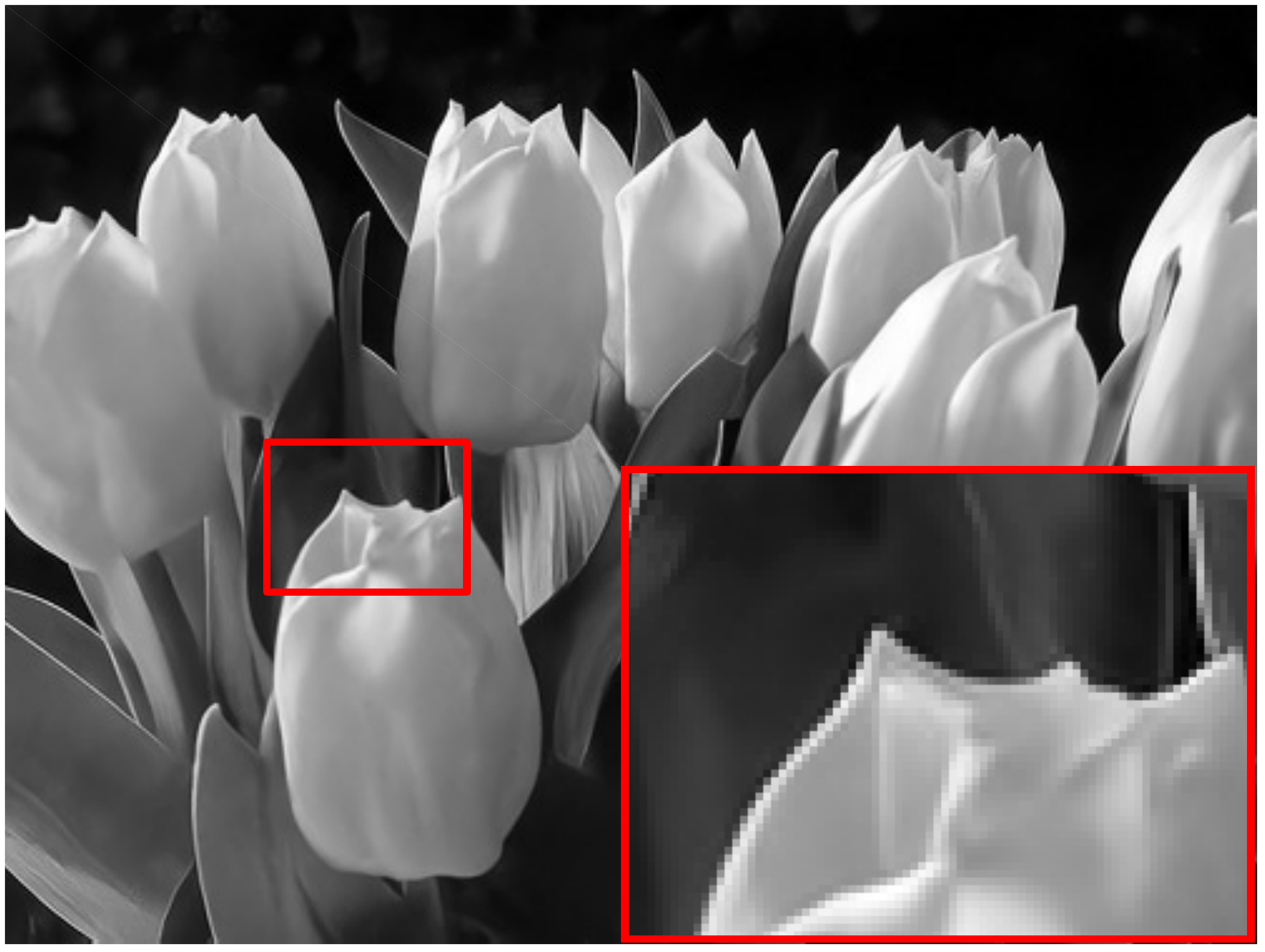}
		\caption{34.42dB, 0.9138 \\ RED-Flower}
	\end{subfigure}
	\begin{subfigure}[b]{0.13\textwidth}
		\centering
		\includegraphics[width=\textwidth]{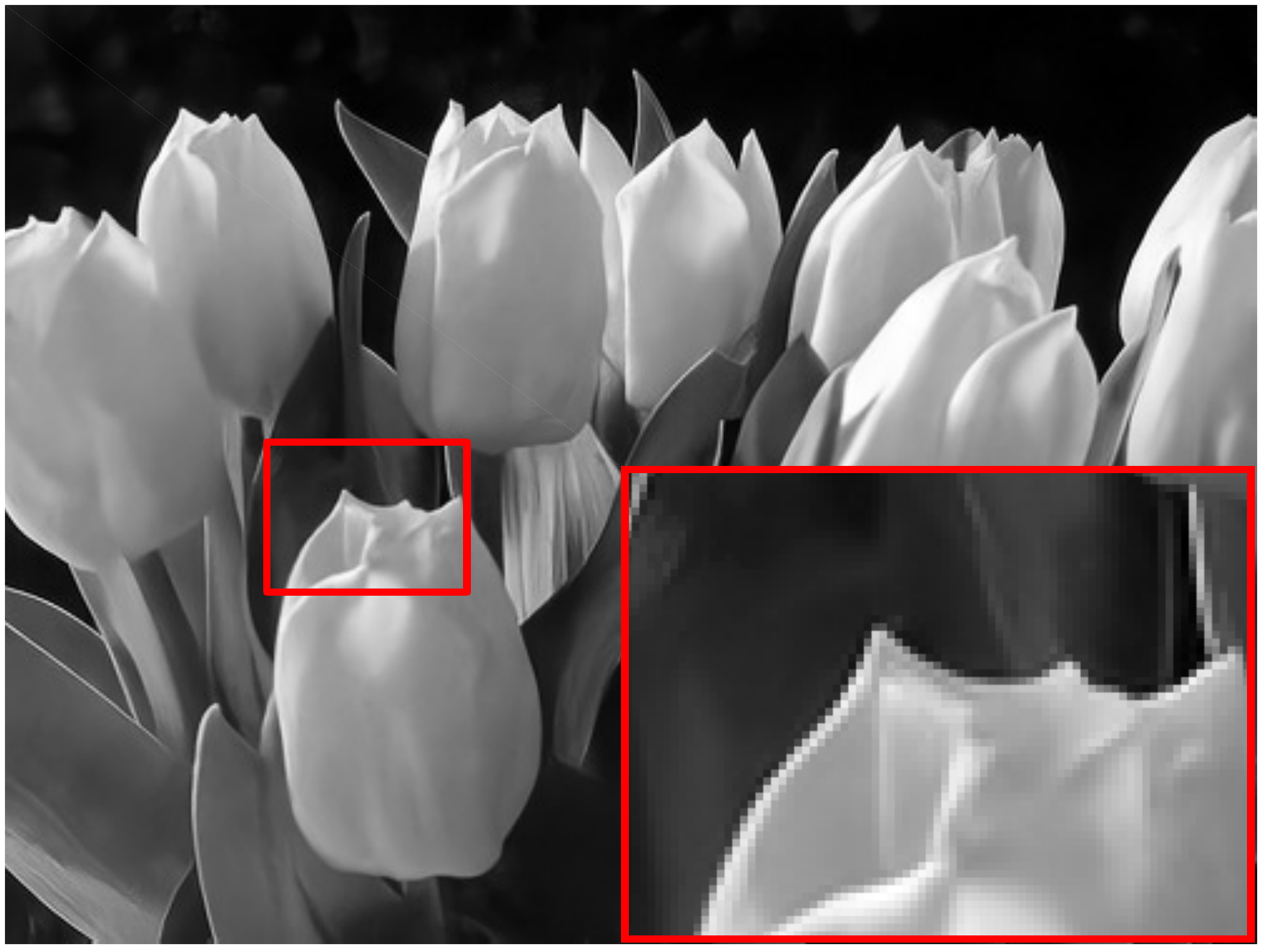}
		\caption{34.49dB, 0.9144 \\ Before}
	\end{subfigure}
	\begin{subfigure}[b]{0.13\textwidth}
		\centering
		\includegraphics[width=\textwidth]{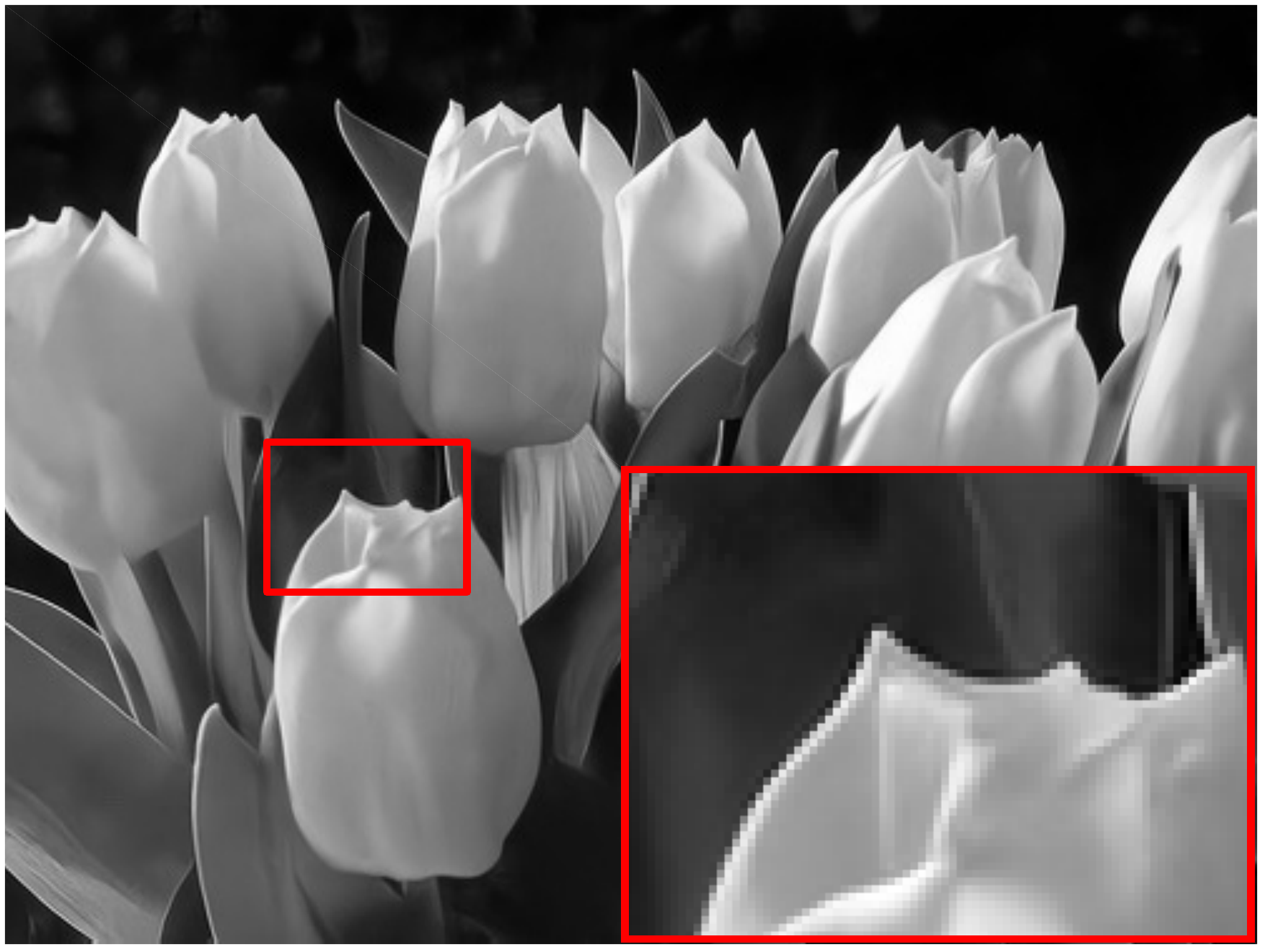}
		\caption{\textbf{34.60dB}, \textbf{0.9152} \\ After}
	\end{subfigure}
	
	\caption{Experiment 2: \texttt{Building}, \texttt{Face} and \texttt{Flower} classes. Testing images are from ImageNet. Reported are the PSNR and SSIM values. In this figure, ``before'' and ``after'' refer to the result before and after applying the booster.}
	\label{fig:class}
\end{figure*}

\begin{table*}[t]
	\centering
	\begin{tabular}{c|ccc|cc|cc|ccc}
		\hline
		& & & & Before & After & Before & After & &&  \\
		& REDNet & REDNet & REDNet & Booster & Booster & Booster & Booster & BM3D & DnCNN & REDNet  \\
		& (Building) & (Face) & (Flower) & (est) & (est) & (oracle) & (oracle) &  & (generic) & (generic) \\
		\hline
        \multicolumn{11}{c}{Unclipped Noise}\\
        \hline
		Building & \textbf{30.6038} & 29.1219 & 29.5430 & 30.3509 & \textbf{30.9371} & 30.6136 & \textbf{30.9391} & 29.5059 & \textbf{30.0341} & 29.9658 \\
		Face & 30.5437 & \textbf{30.7606} & 30.7116 & 30.8047 & \textbf{30.9923} & 30.8907 & \textbf{31.0569} & 30.2397 & 30.6967 & \textbf{30.7020} \\
		Flower & 31.2785 & 31.1325 & \textbf{31.5428} & 31.5788 & \textbf{31.6035} & 31.6009 & \textbf{31.6103} & 30.6088 & \textbf{31.4211} & 31.4105 \\
		\hline
        \multicolumn{11}{c}{Clipped Noise}\\
        \hline
		Building & \textbf{30.3962} & 28.9529 & 29.3453 & 30.3303 & \textbf{30.4095} & 30.4020 & \textbf{30.4749} & 29.2986 & 29.7722 & \textbf{29.7743} \\
		Face & 30.1871 & \textbf{30.3889} & 30.3443 & 30.4501 & \textbf{30.7419} & 30.5086 & \textbf{30.7957} & 29.9685 & 30.2813 & \textbf{30.2896} \\
		Flower & 31.0875 & 30.9497 & \textbf{31.3114} & 31.3221 & \textbf{31.5041} & 31.3759 & \textbf{31.5404} & 30.4224 & 31.1534 & \textbf{31.1752} \\
		\hline
	\end{tabular}
	\caption{Experiment 2: Different image classes. Class-specific REDNets have better performance than BM3D, DnCNN (generic) and REDNet (generic). CsNet selects the best class. We use 10 images from ImageNet for testing. The labels ``est'' and ``oracle'' refer to estimated MSE and the oracle MSE, respectively.}
	\label{tab:2}
	
\end{table*}

The objective of this experiment is to evaluate the performance of CsNet when the initial denoisers are trained for different image classes. To this end, we fix the type of initial denoisers as REDNet, and train three different REDNets using three classes of images: \texttt{Flower}, \texttt{Face} and \texttt{Building}. We have experimented with other initial denoisers such as DnCNN, but the results are similar. 
 
To train the initial denoisers, we manually select 200 class-specific images for each class from the ImageNet \cite{Deng2009}. We fix the noise level as $\sigma = 20$ to eliminate the complication of having uncertainty in both noise levels and image classes. Initial denoisers are trained with unclipped noise. We train two different MSE estimators, one for unclipped noise and one for clipped noise.

The result of this experiment is shown in Table~\ref{tab:2} with a few representative examples in \fref{fig:class}. We observe that denoisers trained with generic database such as DnCNN (generic) and REDNet (generic) perform worse than class-specific denoisers. For example, in the \texttt{Building} image, DnCNN (generic) and REDNet (generic) attain 29.7722dB and 29.7743dB respectively in the clipped case. In contrast, REDNet-\texttt{Building} has a PSNR of 30.39dB, approximately 0.7dB above the REDNet (generic).

When using the proposed scheme, the ``before boosting'' result is already better than the initial denoiser's. This result holds for both clipped and unclipped, and all classes. Moreover, ``before boosting'' is better than all the generic denoisers, indicating the effectiveness of the convex optimization part. If we apply a booster, then the performance is boosted further.


\subsection{Experiment 3: Different Denoiser Types}
\label{sec:experiment 3}

\begin{figure*}[t]
	\hfill
	\captionsetup[subfigure]{labelformat=empty}
	\captionsetup{justification=centering}
	\begin{subfigure}[b]{0.24\textwidth}
		\centering
		\includegraphics[width=\textwidth]{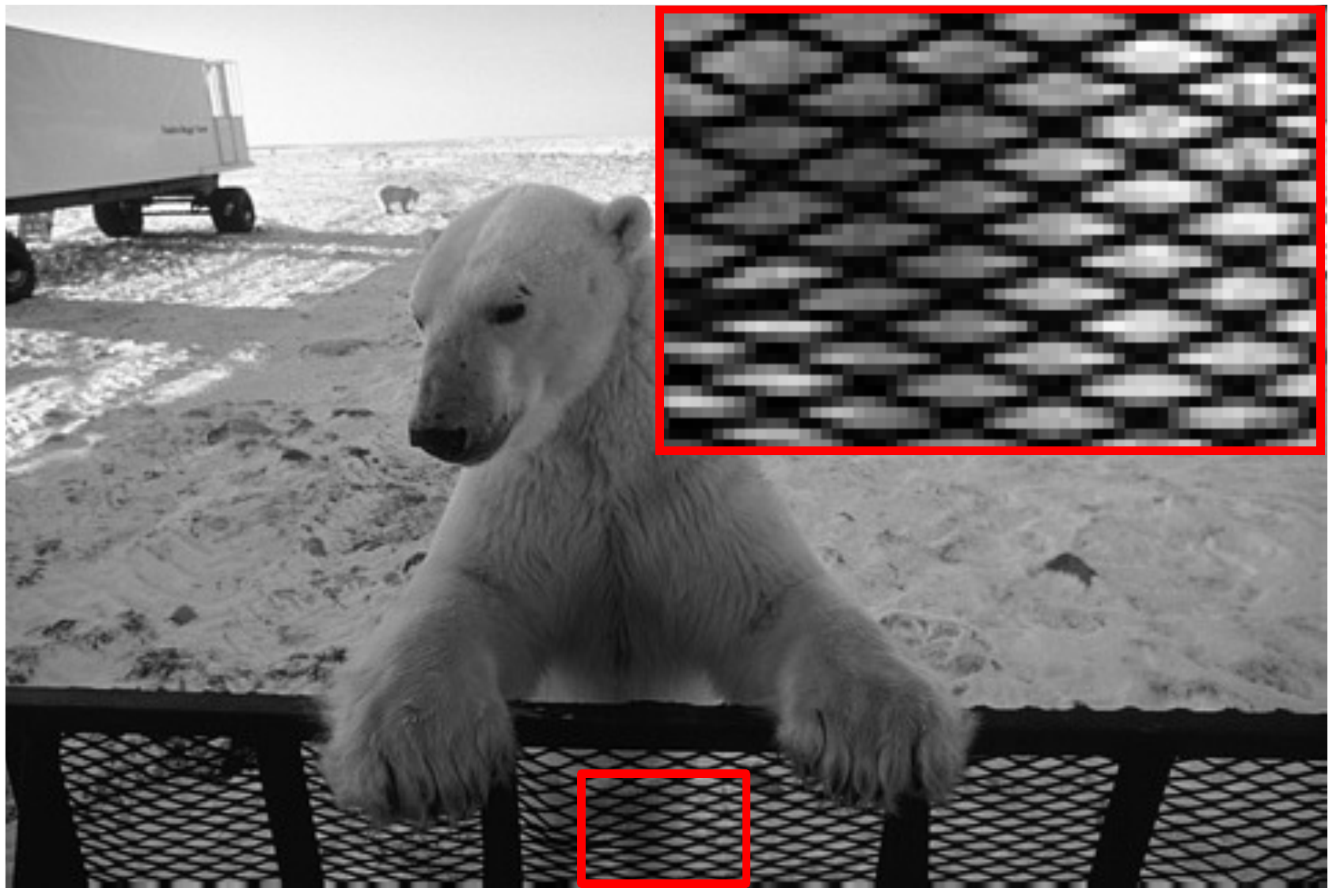}
		\caption{Groundtruth}
	\end{subfigure}
	\begin{subfigure}[b]{0.24\textwidth}
		\centering
		\includegraphics[width=\textwidth]{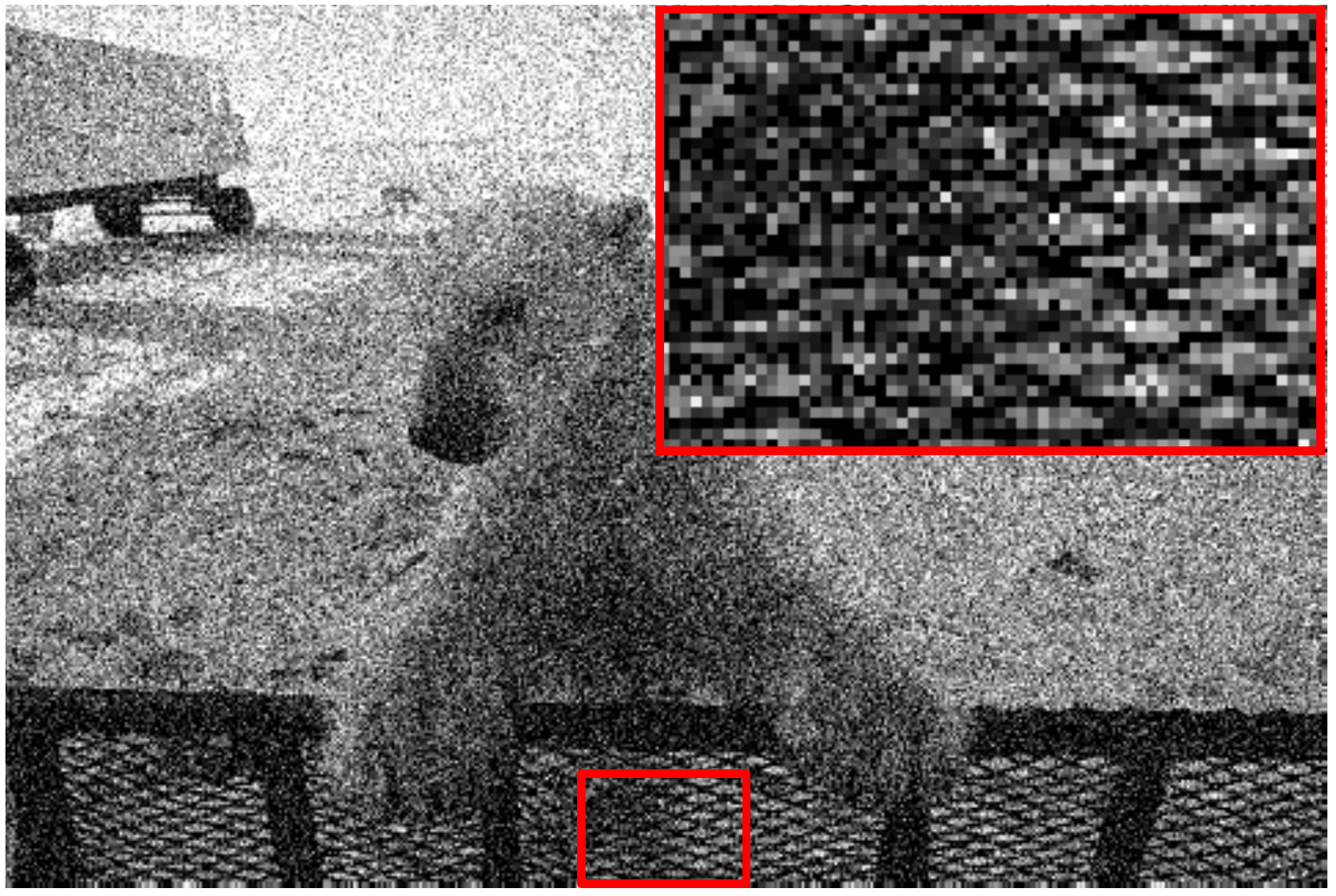}
		\caption{Input, $\sigma$=50, 14.99dB, 0.1998}
	\end{subfigure}
	\begin{subfigure}[b]{0.24\textwidth}
		\centering
		\includegraphics[width=\textwidth]{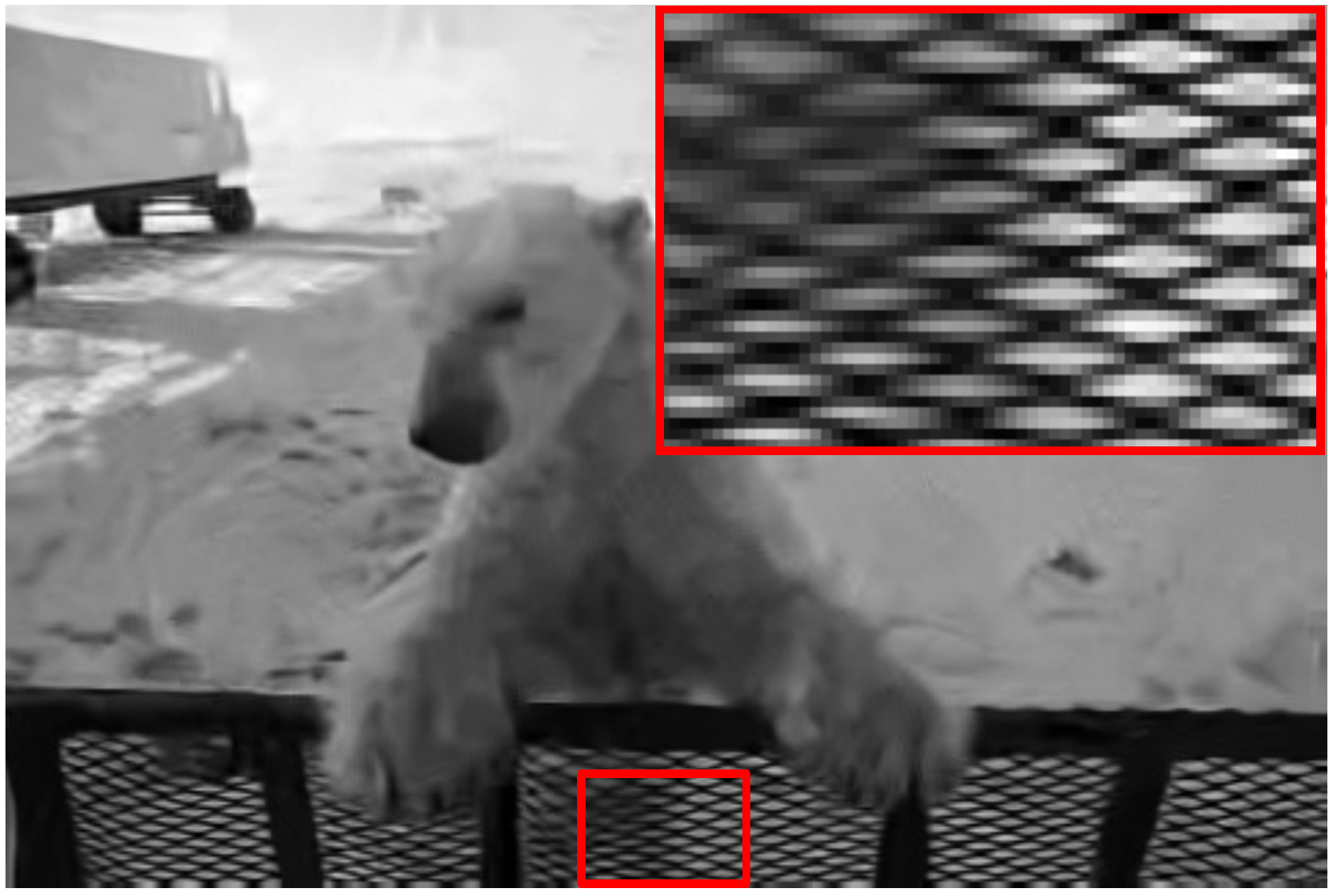}
		\caption{BM3D, $\sigmahat$=50, 25.83dB, 0.7094}
	\end{subfigure}
	\begin{subfigure}[b]{0.24\textwidth}
		\centering
		\includegraphics[width=\textwidth]{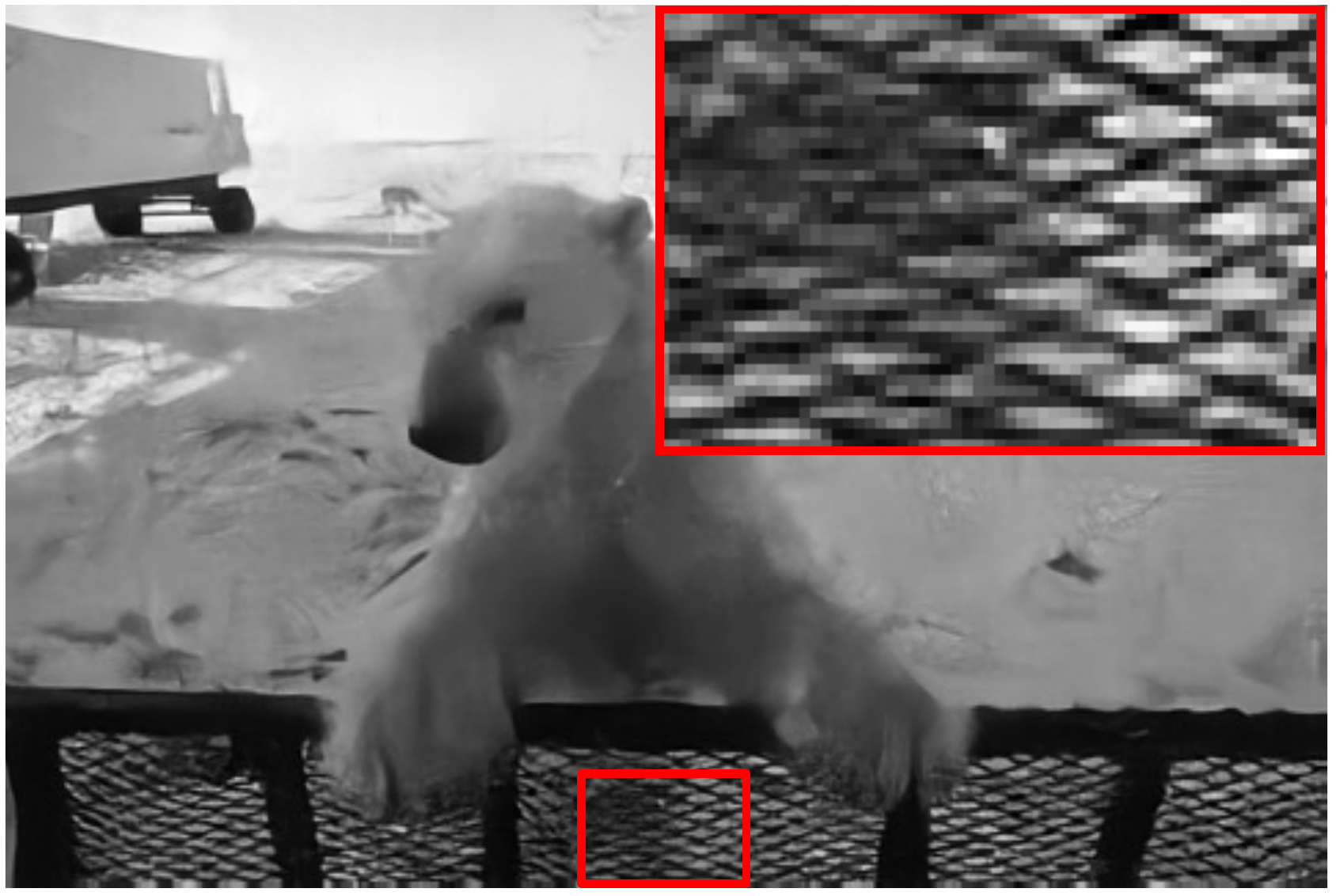}
		\caption{DnCNN, $\sigmahat$=50, 25.47dB, 0.7219}
	\end{subfigure}
	
	\hfill
	\begin{subfigure}[b]{0.24\textwidth}
		\centering
		\includegraphics[width=\textwidth]{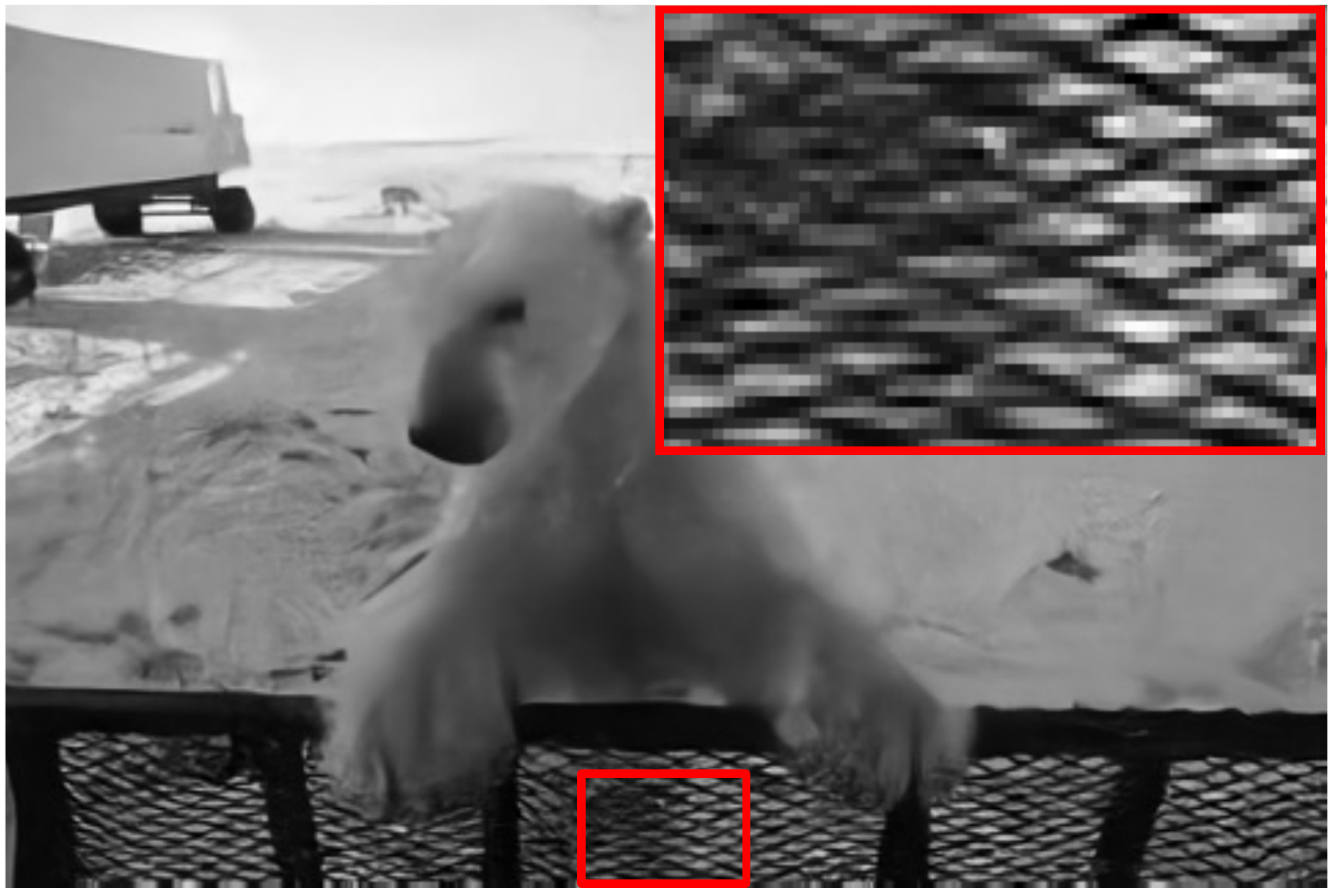}
		\caption{FFDNet, $\sigmahat$=50, 25.58dB, 0.7269}
	\end{subfigure}
	\begin{subfigure}[b]{0.24\textwidth}
		\centering
		\includegraphics[width=\textwidth]{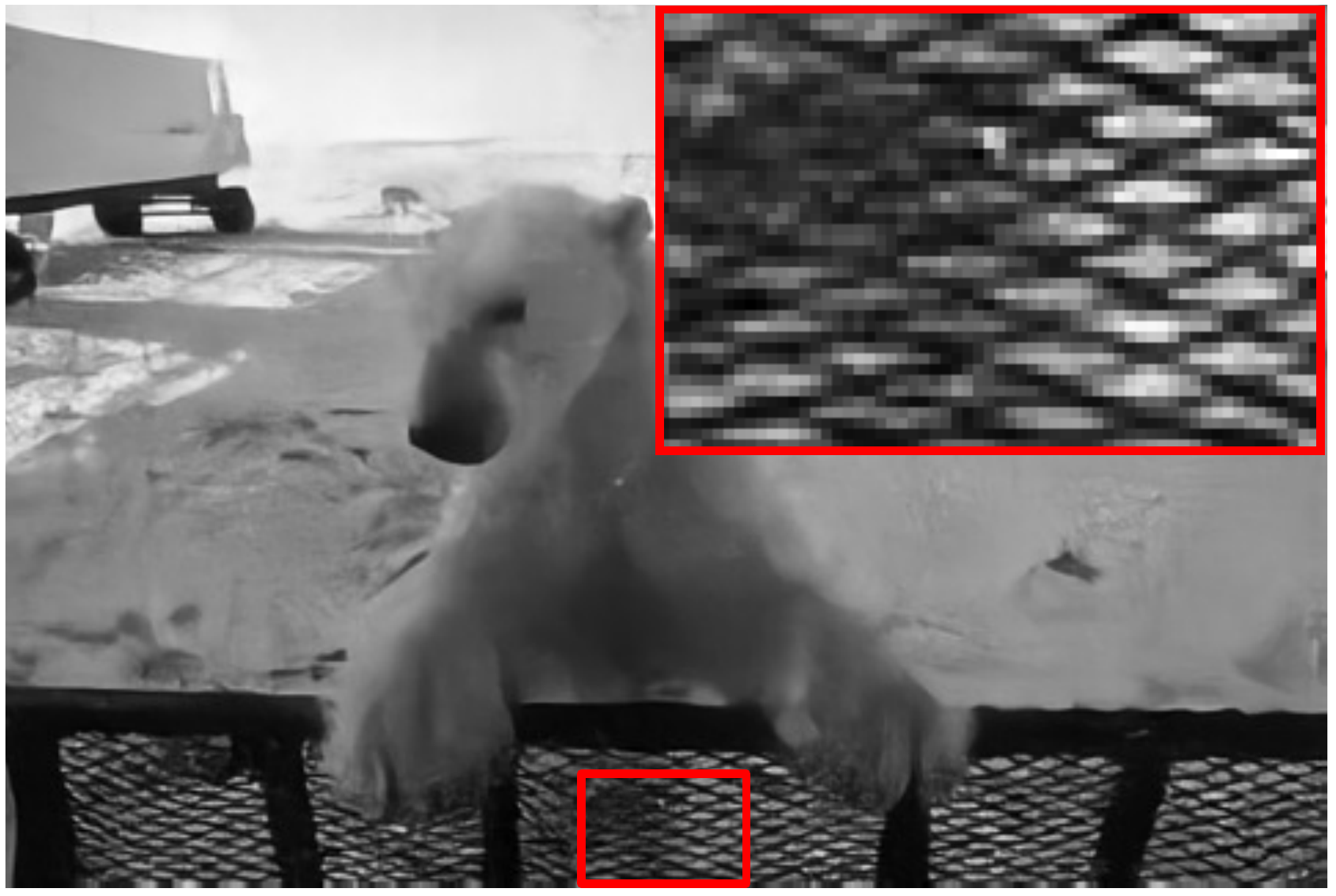}
		\caption{REDNet, $\sigmahat$=50, 25.54dB, 0.7241}
	\end{subfigure}
	\begin{subfigure}[b]{0.24\textwidth}
		\centering
		\includegraphics[width=\textwidth]{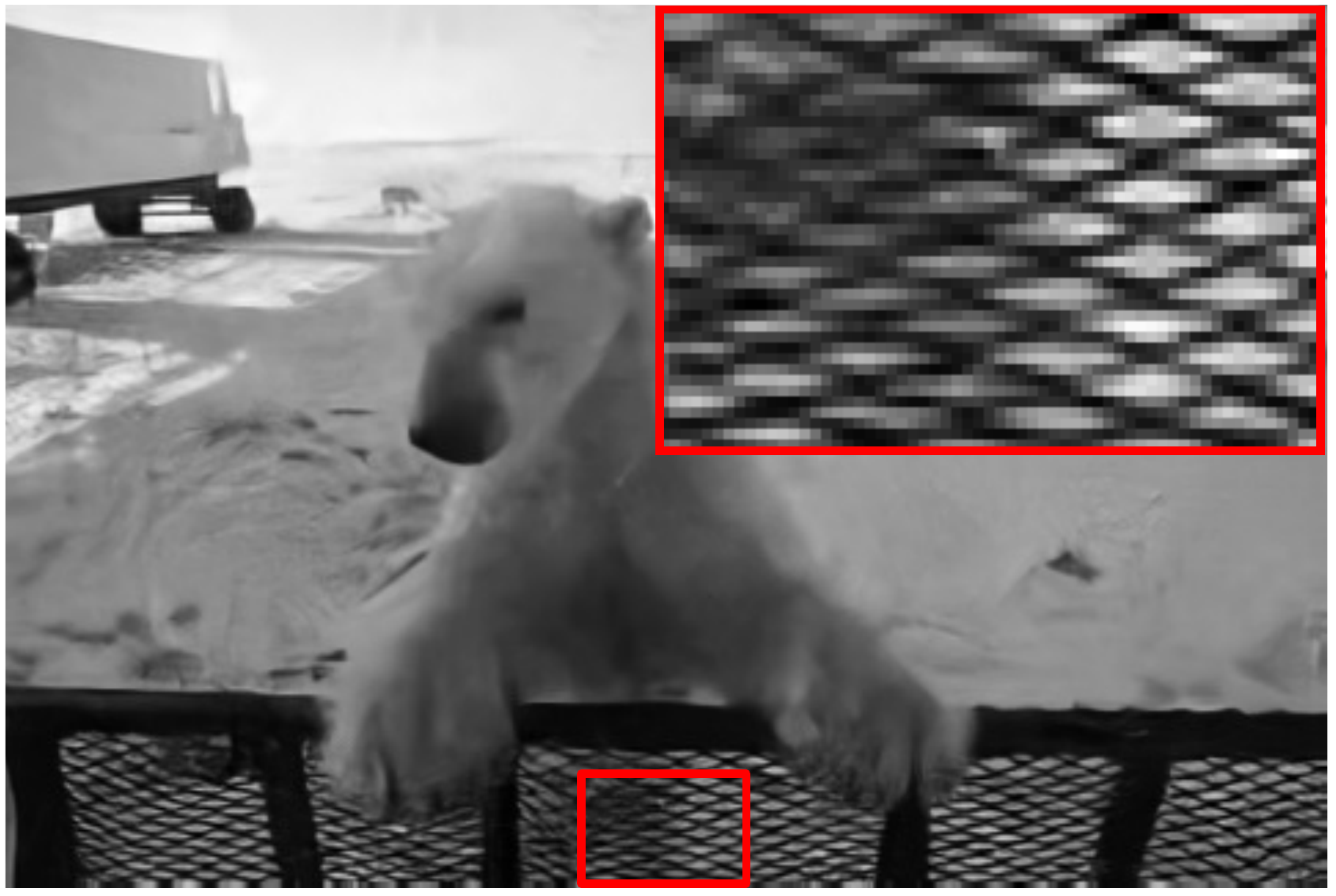}
		\caption{Before Booster, 25.91dB, 0.7263}
	\end{subfigure}
	\begin{subfigure}[b]{0.24\textwidth}
		\centering
		\includegraphics[width=\textwidth]{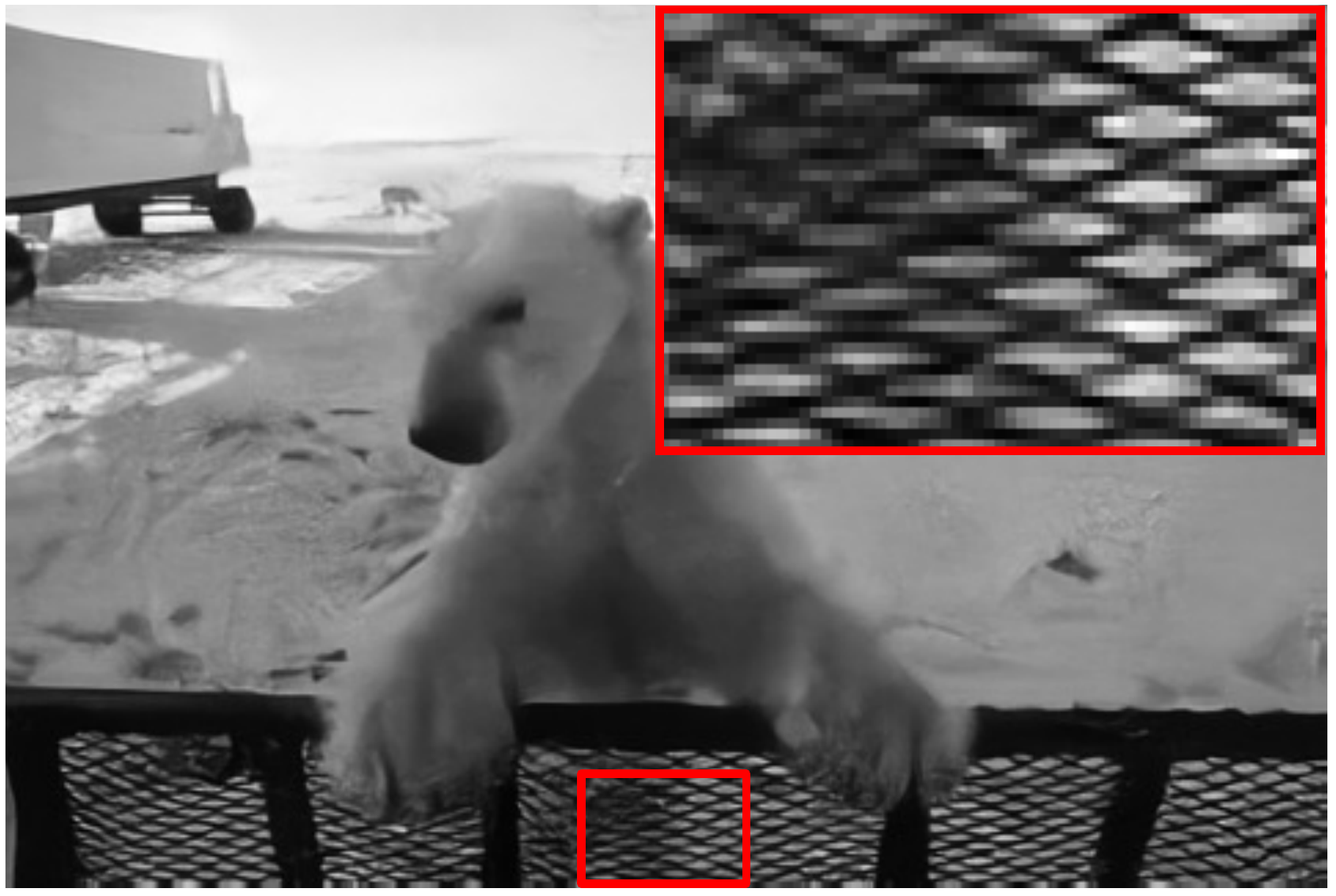}
		\caption{After Booster, \textbf{25.98dB}, \textbf{0.7333}}
	\end{subfigure}
	\caption{Experiment 3: Different denoiser type. The initial denoisers are BM3D \cite{Dabov2007}, DnCNN \cite{Zhang2017_tip}, REDNet \cite{Mao2016}, and FFDNet \cite{Zhang2017_arxiv}. The testing image is \texttt{Bear} (size 321$\times$481) from BSD500. Reported are the PSNR and SSIM values.}
	\label{fig:diff_type}
\end{figure*}

\begin{table*}[t]
	\centering
	\begin{tabular}{c|cccc|cc|cc}
		& BM3D	& DnCNN	& FFDNet	& REDNet	& Before  & After 	&	Before  & After  \\
        & \cite{Dabov2007} & \cite{Zhang2017_tip} & \cite{Zhang2017_arxiv}  & \cite{Mao2016} & Boost & Boost & Boost & Boost \\
		&	&	&	&	&	(est)	& (est)	& (oracle)	& (oracle) \\
		\hline
        \multicolumn{9}{c}{Unclipped Noise}\\
        \hline
		$\sigma=10$ & 33.6067 & \textbf{34.1625} & 34.0178 & 34.1619 & \textbf{34.1813} & 34.1678 & \textbf{34.2147} & 34.1906 \\
		$\sigma=20$ & 29.8558 & \textbf{30.4924} & 30.4357 & 30.4755 & 30.5258 & \textbf{30.5401} & \textbf{30.5559} & 30.5554 \\
		$\sigma=30$ & 27.9271 & 28.5286 & \textbf{28.5458} & 28.5209 & 28.5869 & \textbf{28.6198} &  28.6199 & \textbf{28.6299} \\
		$\sigma=40$ & 26.5688 & 27.2202 & \textbf{27.2845} & 27.2393 & 27.2978 & \textbf{27.3384} & 27.3381 & \textbf{27.3438} \\
		$\sigma=50$ & 25.7005 & 26.3159 & \textbf{26.3675} & 26.3249 & 26.3695 & \textbf{26.4235} & \textbf{26.4226} & 26.4223 \\
		\hline
        \multicolumn{9}{c}{Clipped Noise}\\
        \hline
		$\sigma=10$	& 33.5628 & 34.1030 & 33.9434 & \textbf{34.1216} & \textbf{34.1362} & 33.8933 & \textbf{34.1625} & 33.9012 \\
		$\sigma=20$	& 29.7309 & 30.3266 & 30.2683 & \textbf{30.3378} & 30.3672 & \textbf{30.4846} & 30.3994 & \textbf{30.5076} \\
		$\sigma=30$	& 27.6804 & 28.1727 & 28.1846 & \textbf{28.2007} & 28.2282 & \textbf{28.5211} & 28.2764 & \textbf{28.5529} \\
		$\sigma=40$	& 26.2208 & 26.6024 & \textbf{26.6452} & 26.6205 & 26.6788 & \textbf{27.1906} & 26.7187 & \textbf{27.2108} \\
		$\sigma=50$	& 24.9885 & 25.3449 & \textbf{25.3491} & 25.3479 & 25.3952 & \textbf{26.1573} & 25.4354 & \textbf{26.1766} \\
		\hline
	\end{tabular}
	\caption{Experiment 3: Different denoiser type. The initial denoisers are BM3D \cite{Dabov2007}, DnCNN \cite{Zhang2017_tip}, REDNet \cite{Mao2016}, and FFDNet \cite{Zhang2017_arxiv}. We use 200 images from BSD500 for testing. In this figure, ``before'' and ``after'' refer to the result before and after applying the booster. The labels ``est'' and ``oracle'' refer to estimated MSE and the oracle MSE, respectively.}
	\label{tab:3}
\end{table*}

The objective of this experiment is to evaluate CsNet for different types of initial denoisers. To this end, we consider four denoisers running at specific noise levels $\sigmahat$ that match with the actual noise level $\sigma$. These denoisers are BM3D \cite{Dabov2007}, DnCNN \cite{Zhang2017_tip}, REDNet \cite{Mao2016} and FFDNet \cite{Zhang2017_arxiv}. We use the original implementation by the authors for DnCNN and FFDNet, and build our own REDNet.

The result of this experiment is shown in Table~\ref{tab:3}. Among the four denoisers, FFDNet and REDNet have comparable performance at the top, followed by DnCNN and then BM3D. For the five noise levels we tested, CsNet consistently improves the performance. In particular, ``before boosting'' is always better than the initial denoiser. This means the convex optimization has effectively selected the best initial denoiser. The margin between the best initial denoiser and ``before boosting'' is small, because the denoisers have similar behavior and so the convex optimization solution is at one of the vertices of the constraint hyperplane. \fref{fig:diff_type} shows a visual comparison on the \texttt{Bear} image. In this image, BM3D actually performs better than DnCNN. The proposed CsNet can pick this best estimate (25.91dB), and boost the PSNR to 25.98dB.

\subsection{Limitations and Extensions}
The effectiveness of CsNet is dominated by the accuracy of the MSE estimate. The proposed neural network MSE estimator has a bias but a small variance. This is better than deterministic estimators such as SURE which has no bias but excessively large variance. However, if the noise statistics changes, we need to train a different MSE estimator.

If the images are large and complex, we can partition the image into sub-regions and use CsNet to handle each region separately. The bottleneck, again, is the accuracy in estimating the MSE. One resolution is to consider regularization in \eref{eq:optimization}. Possible choices of regularization include forcing similar weights for denoisers that are known to perform similarly. We leave the discussion of such regularization to future work.

Real noise of an image is significantly more complicated than i.i.d. Gaussian. Typical sources of noise include: photon shot noise, optical diffusion, minority carrier, thermal effect, dark current, circuit instability, and various nonlinear operations due to the image processing pipeline. When taking all these into account, a better noise model beyond Gaussian (and even mixed Poisson-Gaussian) is needed. Readers interested in this topic can consult, e.g., \cite{Azzari_Foi_2014, Zhang_Hirakawa_2017,Cheng_Hirakawa_2018} for theory, and \cite{Xu2017,Xu2018_1} for some recent progress on algorithms. The current CsNet is not designed to handle this type of real noise. However, if one can show that real noise is a mixture of individual noises, then CsNet could potentially be a solution.

When training the neural networks we choose to use the $L_1$ metric, for it gives slightly better visual quality then the usual $L_2$ metric. We do not heavily tune this metric because it is not the focus of the paper. For readers who are concerned about the loss function, we refer to \cite{Zhao2017} for some recent empirical findings on the topic.

The advantage of CsNet relative to other class-aware neural network denoisers is that we allow combination of multiple denoisers. Typical class-aware denoisers, e.g., \cite{Luo2015,Luo2016,Remez2017}, rely on semantic classifiers to greedily select only one denoiser. As we demonstrated in Section~\ref{sec:experiment 2}, a combination of the denoisers is better than the best of the individuals.

CsNet is a general framework for combining estimators. That is, one is not limited to applying CsNet to image denoising problems, although we use denoising as a demonstration. A straight forward extension of CsNet is to combine multiple deblurring algorithms, or to combine multiple image super-resolution algorithms. In complex imaging scenarios where no single method performs uniformly better than the others, CsNet offers a solution to integrate individual weak estimators.

\section{Conclusion}
We present an optimal framework called the Consensus Neural Network (CsNet) to combine multiple weak image denoisers. CsNet consists of three major components. Starting with a set of initial image denoisers, CsNet first uses a novel deep neural network to estimate the MSE. The deep neural network is more robust than the traditional estimators such as SURE for estimating the MSE. Once the MSE is estimated, CsNet solves a convex optimization problem. The optimality of the CsNet is guaranteed by the convex formulation. Finally, the combined estimate is boosted using a new deep neural network image booster. Experimental results confirm the effectiveness of CsNet, where it shows superior performance compared to other state-of-the-art denoising algorithms on tasks including: overcoming noise level mismatch, combining denoisers for different image classes, and combining different denoiser types.

\section*{Acknowledgement}
We thank the anonymous reviewers for very valuable feedback which significantly improves the paper. We also thank Nvidia for donating the Titan-X GPU for this work.

\section*{Appendix: Proofs}
\subsection*{Proof of Proposition 5}
First, we show that
\begin{align*}
\E \|\vztilde - \vzhat \|^2
&\bydef \E \|\mZhat\vwtilde - \mZhat\vw \|^2 = \E \|\mZhat\vwtilde - \vz + \vz - \mZhat\vw \|^2\\
&= \E \|(\mZhat\vwtilde - \mZ\vwtilde) - (\mZhat\vw - \mZ\vw) \|^2\\
&= \E \|(\mZhat - \mZ)(\vwtilde - \vw) \|^2 = (\vwtilde - \vw)^T \mSigma (\vwtilde - \vw).
\end{align*}
The term $(\vwtilde - \vw)^T \mSigma (\vwtilde - \vw)$ can be upper bounded by
\begin{align*}
(\vwtilde - \vw)^T \mSigma (\vwtilde - \vw)
&= \vwtilde^T\mSigma\vwtilde - \vw^T\mSigma\vw - 2(\vwtilde-\vw)^T\mSigma\vw\\
&\le \vwtilde^T\mSigma\vwtilde - \vw^T\mSigma\vw.
\end{align*}
The last inequality holds because the function $f(\vw) = \vw^T\mSigma\vw$ attains its first order optimality at $\vw$ when
\begin{align*}
\nabla f(\vw)^T (\vwtilde - \vw) \ge 0.
\end{align*}
Therefore,
\begin{align*}
&\vwtilde^T\mSigma\vwtilde - \vw^T\mSigma\vw \\
&= \vwtilde^T\mSigma\vwtilde - \vwtilde^T\mSigmatilde\vwtilde + \vwtilde^T\mSigmatilde\vwtilde - \vw^T\mSigma\vw \\
&\le \vwtilde^T\mSigma\vwtilde - \vwtilde^T\mSigmatilde\vwtilde + \vw^T\mSigmatilde\vw - \vw^T\mSigma\vw\\
&= \vwtilde^T\mSigmatilde\vwtilde \left(\frac{\vwtilde^T\mSigma\vwtilde}{\vwtilde^T\mSigmatilde\vwtilde} - 1\right)
+ \vw^T\mSigma\vw\left(\frac{\vw^T\mSigmatilde\vw}{\vw^T\mSigma\vw} - 1\right)\\
&\le (\vwtilde^T\mSigmatilde\vwtilde + \vw^T\mSigma\vw) \delta,
\end{align*}
where
\begin{equation}
\delta = \max\left( \left|\frac{\vwtilde^T\mSigma\vwtilde}{\vwtilde^T\mSigmatilde\vwtilde} - 1 \right|,
\left|\frac{\vw^T\mSigmatilde\vw}{\vw^T\mSigma\vw} - 1\right|\right)
\label{eq:proof step 1}
\end{equation}
We can also show that
\begin{align*}
\vw^T\mSigmatilde\vw &\le \vw^T\mSigma\vw (1+\delta)
\end{align*}
Continue the calculation, we have
\begin{align*}
(\vwtilde^T\mSigmatilde\vwtilde + \vw^T\mSigma\vw) \delta
&\le (\vw^T\mSigmatilde\vw + \vw^T\mSigma\vw) \delta\\
&\le (\vw^T\mSigma\vw) (2\delta + \delta^2)
\end{align*}
This implies that
\begin{equation*}
\E \|\vztilde - \vzhat \|^2  \le \E\|\vzhat - \vz\|^2 (2\delta + \delta^2).
\end{equation*}
It remains to derive an upper bound on $\delta$. To this end, we consider the generalized Rayleigh quotient of two positive definite matrices $\mA$ and $\mB$. It is known that \cite{Boyd_Ghaoui_1993}
$$
\max_{\vw \not= 0} \; \frac{\vw^T\mA\vw}{\vw^T\mB\vw} = \lambda_{\max} \left(\mB^{-\frac{1}{2}}\mA\mB^{-\frac{1}{2}}\right).
$$
Therefore,
\begin{align*}
\left|\frac{\vw^T\mSigmatilde\vw}{\vw^T\mSigma\vw} - 1\right|
&\le \max_{\vw \not= 0} \; \left| \frac{\vw^T\mSigmatilde\vw}{\vw^T\mSigma\vw} - 1 \right|
= \max_{\vw \not= 0} \; \left| \frac{\vw^T(\mSigmatilde-\mSigma)\vw}{\vw^T\mSigma\vw} \right|\\
&= \max_{i} \left| \lambda_i\left(\mSigma^{-\frac{1}{2}}(\mSigmatilde - \mSigma)\mSigma^{-\frac{1}{2}}\right) \right|,
\end{align*}
where $\lambda_i(\mA)$ denotes the $i$-th eigen-value of the matrix $\mA$. With some additional algebra we can show that
\begin{align*}
&\max_{i} \left| \lambda_i\left(\mSigma^{-\frac{1}{2}}(\mSigmatilde - \mSigma)\mSigma^{-\frac{1}{2}}\right) \right|\\
&=\max_{i} \left| 1 - \lambda_i\left(\mSigma^{-\frac{1}{2}}\mSigmatilde\mSigma^{-\frac{1}{2}}\right) \right|\\
&\overset{(a)}{=} \max_i \left| 1 - \lambda_i\left(\mSigma^{-1}\mSigmatilde \right)\right|\\
&\overset{(b)}{\le}  \max_i \left| \frac{1}{\lambda_i\left(\mSigma^{-1}\mSigmatilde \right)} - \lambda_i\left(\mSigma^{-1}\mSigmatilde \right)\right|,
\end{align*}
where (a) holds because of Lemma 1, and (b) holds because for any $t \ge 0$, $|1-t| \le |t - \frac{1}{t}|$. By recalling the definition of the matrix operator norm, we have that
\begin{align*}
\left|\frac{\vw^T\mSigmatilde\vw}{\vw^T\mSigma\vw} - 1\right|
\le \left\| \mSigma\mSigmatilde^{-1}  - \mSigma^{-1}\mSigmatilde \right\|_2 \bydef \Delta.
\end{align*}
Substituting this result into \eref{eq:proof step 1}, and by symmetry, we complete the proof.

\begin{lemma}
Consider two matrices $\mA,\mB\in\R^{n \times n}$ where $\mA\mB$ and $\mB\mA$ are diagonalizable. If $\lambda$ is an eigen-value of $\mA\mB$, then $\lambda$ is also an eigen-value of $\mB\mA$.
\end{lemma}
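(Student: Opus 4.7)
My plan is to split into the cases $\lambda \neq 0$ and $\lambda = 0$, which is the standard approach for showing $\mA\mB$ and $\mB\mA$ share their nonzero eigenvalues (and agree on $0$ as well, when both matrices are square of the same size). The diagonalizability hypothesis will not really be needed for the argument, but it is harmless.

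First, I would pick an eigenvector $\vv \neq \vzero$ of $\mA\mB$ associated with $\lambda$, so that $\mA\mB \vv = \lambda \vv$. Left-multiplying by $\mB$ gives $\mB\mA(\mB\vv) = \lambda (\mB\vv)$. If $\lambda \neq 0$, then $\mB\vv$ must be nonzero, because otherwise $\lambda \vv = \mA\mB\vv = \mA(\mB\vv) = \vzero$ would force $\vv = \vzero$, contradicting the choice of $\vv$. Hence $\mB\vv$ is a legitimate eigenvector of $\mB\mA$ with eigenvalue $\lambda$, and the claim holds in this case.

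Second, for $\lambda = 0$ I would use determinants. Since $\mA,\mB \in \R^{n\times n}$, both products are $n\times n$ and $\det(\mA\mB)=\det(\mA)\det(\mB)=\det(\mB)\det(\mA)=\det(\mB\mA)$. The assumption that $0$ is an eigenvalue of $\mA\mB$ is equivalent to $\det(\mA\mB) = 0$, which therefore forces $\det(\mB\mA) = 0$, i.e.\ $0$ is an eigenvalue of $\mB\mA$.

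I do not anticipate a real obstacle: the only subtle point is ensuring that $\mB\vv \neq \vzero$ in the nonzero-eigenvalue case, which is handled cleanly by contradiction as above. The $\lambda = 0$ case reduces to a one-line determinant identity because $\mA$ and $\mB$ are square of the same size; if one wanted to avoid determinants entirely, one could instead argue that $\mA\mB$ and $\mB\mA$ have the same characteristic polynomial (e.g.\ via the Sylvester-style identity $\det(\lambda\mI - \mA\mB) = \det(\lambda\mI - \mB\mA)$), from which the statement is immediate.
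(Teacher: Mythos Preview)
Your proof is correct and uses the same core idea as the paper: left-multiply the eigenvector equation $\mA\mB\vv=\lambda\vv$ by $\mB$ to obtain $\mB\mA(\mB\vv)=\lambda(\mB\vv)$. In fact, your version is more careful than the paper's, which simply declares $\mB\vv$ an eigenvector without checking that $\mB\vv\neq\vzero$; your case split on $\lambda\neq 0$ versus $\lambda=0$ (handled via $\det(\mA\mB)=\det(\mB\mA)$) closes that gap cleanly.
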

\begin{proof}
Let $\vv \in \R^n$ be an eigen-vector of $\mA\mB$, i.e., $$\mA\mB\vv=\lambda \vv.$$ Then, multiplying both sides by $\mB$ yields $$\mB\mA\left(\mB\vv\right)=\lambda \left(\mB\vv\right).$$ Hence, $\lambda$ is an eigen-value of $\mB\mA$, with the corresponding eigen-vector $\mB\vv$.
\end{proof}

\balance
\bibliographystyle{IEEEbib}
\bibliography{egbib}

\end{document}